\newcommand{\myparagraph}[1]{\noindent\textbf{#1.}}
\newcommand{\RB}{\mathbb{R}}
\newcommand{\CB}{\mathbb{C}}
\newcommand{\NB}{\mathbb{N}}
\newcommand{\tpa}{\theta^{\parallel}}
\newcommand{\tpe}{\theta^{\perp}}
\newcommand{\GD}{\mathrm{GD}}
\theoremstyle{plain}
\newtheorem{theorem}{Theorem}[section]
\newtheorem{proposition}[theorem]{Proposition}
\newtheorem{lemma}[theorem]{Lemma}
\theoremstyle{definition}
\newtheorem{assumption}[theorem]{Assumption}
\theoremstyle{remark}
\title{Convergence Rates for Gradient Descent on the Edge of Stability for Overparametrised Least Squares}
\author{%
  Lachlan Ewen MacDonald \quad Hancheng Min\thanks{Equal contribution, alphabetical order} \quad Leandro Palma\footnotemark[1] \quad \textbf{Salma Tarmoun}\footnotemark[1] \\ \textbf{Ziqing Xu}\footnotemark[1] \quad \textbf{Ren\'{e} Vidal} \\
  Innovation in Data Engineering and Science (IDEAS)\\
  University of Pennsylvania\\
  Pennsylvania, PA 19104 \\
  \texttt{lemacdonald@protonmail.com} \\
}
\begin{document}

\maketitle

\begin{abstract}  
    Classical optimisation theory guarantees monotonic objective decrease for gradient descent (GD) when employed in a small step size, or ``stable", regime. In contrast, gradient descent on neural networks is frequently performed in a large step size regime called the ``edge of stability", in which the objective decreases non-monotonically with an observed implicit bias towards flat minima. In this paper, we take a step toward quantifying this phenomenon by providing convergence rates for gradient descent with large learning rates in an overparametrised least squares setting. The key insight behind our analysis is that, as a consequence of overparametrisation, the set of global minimisers forms a Riemannian manifold $M$, which enables the decomposition of the GD dynamics into components parallel and orthogonal to $M$. The parallel component corresponds to Riemannian gradient descent on the objective sharpness, while the orthogonal component is a bifurcating dynamical system. This insight allows us to derive convergence rates in three regimes characterised by the learning rate size: (a) the subcritical regime, in which transient instability is overcome in finite time before linear convergence to a suboptimally flat global minimum; (b) the critical regime, in which instability persists for all time with a power-law convergence toward the optimally flat global minimum; and (c) the supercritical regime, in which instability persists for all time with linear convergence to an orbit of period two centred on the optimally flat global minimum.
\end{abstract}

\section{Introduction}

The well-known gradient descent (GD) iteration
\begin{equation}
    \theta\mapsto \GD^{\ell}(\eta,\theta):=\theta-\eta\nabla\ell(\theta),\qquad \theta\in\RB^p
\end{equation}
for minimisation of an objective function $\ell:\RB^p\rightarrow\RB$, with learning rate $\eta>0$, is the foundation of most practical algorithms for training deep neural networks (DNNs). Despite the apparent simplicity of the algorithm and \emph{practical} ease with which DNNs can be trained using GD, \emph{theoretically} GD on DNNs remains poorly understood. In addition to the well-known non-convexity of DNN loss functions $\ell$, our theoretical understanding is hampered by the fact that in practice DNNs tend to be trained with larger learning rates than permissible according to standard optimisation theory.

Specifically, classical optimisation theory requires that the learning rate $\eta$ be less than twice the reciprocal of the largest eigenvalue $\lambda$ (the \emph{sharpness}) of the Hessian $\nabla^2\ell$. The reason for this is easily seen by considering the simplest quadratic objective $\ell(\theta):=\frac{1}{2}\lambda\theta^2$ of a single variable $\theta$: if $\eta>2/\lambda$, then the iterates $\theta_{t+1} = (1-\eta\lambda)\theta_t$ of $\GD^{\ell}(\eta,\cdot)$ grow in magnitude like $|1-\eta\lambda|^t$ leading to rapid divergence. In contrast, DNNs are typically trained in a regime called the \emph{edge of stability} (EOS) \cite{coheneos}, in which $\eta$ is often (pointwise) strictly larger than $2/\lambda$. Miraculously, despite this large learning rate, GD is capable of converging \emph{non-monotonically} to a global minimum of the loss, exhibiting an apparent bias toward less sharp (\emph{flatter}) minima.

Despite significant theoretical progress in the past few years, GD at the edge of stability is still poorly understood when compared with its small learning rate counterpart. In particular, while convergence rates for GD in the small learning rate regime can be obtained for DNNs using standard methods \cite{pl}, convergence rates for GD at the edge of stability have so far only been obtained in settings wherein monotonic loss decrease can be eventually guaranteed \cite{wu2023eos, wu2024eos, cai2024eos, liueos}. 

Our purpose in this paper is to provide convergence rates in a \emph{least squares} setting, wherein sustained monotonic decrease of the loss \emph{may never occur}. To our knowledge, these are the first rates of this kind to be provided in the literature.

Our analysis synthesises \emph{Riemannian geometry} and \emph{dynamical systems theory} to formalise an intuition that has been folklore in the literature for some time \cite{damianeos,chen2023eos}, namely that GD with a large step size implicitly favours flatter global minima. This formalisation is achieved for \emph{codimension 1, overparametrised least squares problems}, including deep scalar factorisation, in which the solutions (global minima) are guaranteed to form a $p-1$-dimensional Riemannian submanifold $M$ of the $p$-dimensional Euclidean parameter space $\RB^p$. Inspired by the ``self-stabilisation'' idea proposed in \cite{damianeos}, we introduce new coordinates $(\tpa,\tpe)$ for a neighbourhood of $M$, where $\tpa$ is a point in $M$ and $\tpe$ coordinatises the direction orthogonal to $M$, with respect to which the GD map $(\tpa,\tpe)\mapsto (\GD^{\parallel}(\tpa,\tpe),\GD^{\perp}(\tpa,\tpe))$ has approximately the following form:
\begin{equation}\label{eq:initialequation}
    \GD^{\parallel}:(\tpa,\tpe)\mapsto \GD_M^{\lambda}\bigg(\frac{\eta(\tpe)^2}{2c(\eta,\tpa)},\tpa\bigg)\qquad \GD^{\perp}:(\tpa,\tpe)\mapsto (1   -\eta\lambda(\tpa))\tpe+(\tpe)^3.
\end{equation}
The left map $\GD_M^{\lambda}$ is Riemannian gradient descent on $\lambda$ along $M$ (see \eqref{eq:rgd}), with step size $\eta(\tpe)^2/(2c(\eta,\tpa))$ for some strictly positive number $c(\eta,\tpa)$; the right is the normal form for a period-doubling bifurcation \cite{kuznetsov}. Our analysis thereby clarifies the implicit bias of GD toward flat minima: 

\begin{tcolorbox}
\centering
GD with a large learning rate \emph{implicitly performs Riemannian GD on the sharpness} along the solution manifold, and oscillates as a \emph{bifurcating dynamical system} in the direction orthogonal to the solution manifold.
\end{tcolorbox}

\myparagraph{Paper contributions} In the context of codimension 1 least squares problems, we:
\begin{enumerate}[leftmargin=*]
    \item Make the implicit bias of GD toward flat minima explicit: overparametrisation enables the decomposition of GD into an explicitly sharpness-minimising component along the solution manifold, controlled by an oscillating component orthogonal to the solution manifold.
    \item Prove that a class of non-convex scalar factorisation problems fit into our framework. In particular, we prove that despite the non-convexity of the landscape itself, the sharpness in such examples is \emph{geodesically strongly convex} along a geodesic ball in the solution manifold. To our knowledge, this observation has not previously been made.
    \item Identify and prove convergence and implicit bias results for large step size GD in three regimes: 
    
    \begin{enumerate}
        \item the \emph{subcritical} regime, in which the iterates can be guaranteed to (eventually) decrease monotonically at a linear rate until convergence to a suboptimally flat minimum; 
        \item the \emph{critical} regime, in which the iterates converge non-monotonically with a power law rate to the sharpness-minimising global minimum; 
        \item the \emph{supercritical} regime, in which the iterates converge linearly to a stable period-two orbit orthogonal to the solution manifold at the point of minimum sharpness. 
    \end{enumerate}
    Although the former and the latter have already been observed in some examples, to our knowledge the middle regime has not yet been identified. Moreover, our work is the first to be able to theoretically quantify the rate of convergence and implicit bias in all three regimes.
    \item Verify our theory against numerical experiments.
\end{enumerate}

\section{Related work}

\myparagraph{Overparametrisation and convergence guarantees for DNNs} Convergence analyses of GD for non-convex loss landscapes such as DNNs are by now well-established \cite{du1, du2, zhu, nguyen1, nguyen2, nguyen3, bombari}. Although none of these works are able to account for non-monotonic convergence of GD using a large step size, they do point to \emph{overparametrisation} as a key feature of DNN loss landscapes which enables a (local) Polyak-Łojasiewicz inequality and thus convergence guarantees in the absence of convexity. Specifically, overparametrisation enables the map $f$ sending parameters to network outputs to be \emph{submersive}, in the sense that the derivative matrix $Df$ of $f$ is pointwise full-rank. Much of the analysis involved in the aforementioned papers is aimed at  quantifying this submersivity by lower-bounding the smallest eigenvalue of the ``neural tangent kernel'' $Df\,Df^T$ \cite{jacot, bombari}. Overparametrisation in this sense plays a key role in our work too, since it guarantees that the global minima form a smooth manifold with a well-defined normal direction, which is the foundation for Equation \eqref{eq:initialequation} and all the analysis that follows.

\myparagraph{GD with large learning rates} Recent work in convex optimisation has demonstrated surprising benefits of large step sizes \cite{grimmer1, grimmer2, grimmer3, altschuler1, altschuler2}, however the mechanism in these works appears to be distinct from the EOS phenomenon in deep learning. Concerning deep learning, although local stability analysis had already hinted at a relation between step size and sharpness prior to 2020 \cite{wudynamicalstability}, the seminal work of \cite{coheneos} in 2021 was the first to conduct a systematic empirical study of large step size GD. In particular, \cite{coheneos} exposed the fact that GD on DNNs typically uses a larger step size than admissible by classical theory. Moreover, \cite{coheneos} made the empirical observation that this larger step size implicitly regularises sharpness during the late stages training, and coined the term ``edge of stability'' for this phase. An explosion of empirical and theoretical work has since been produced in an attempt to account for these empirical observations \cite{ahneos, aroraeos, wangeos, wang2022eos, wang2023eos,damianeos, leeeos, zhueos, agarwalaeos, chen2023eos, kreislereos, wu2023eos, wu2024eos, cai2024eos, kalraeos, liueos, ghosh2025eos}. At a high level, work up to this point on large step size GD can be classified into one of two categories: \emph{general} analysis attempting to outline the essence of the mechanism of stability in this regime while abstracting from specific architectures \cite{damianeos, cohen2025understanding}; and \emph{specific} analysis focusing on precise architectural details in an attempt to derive more fine-grained results on convergence \cite{wu2023eos, wu2024eos, cai2024eos, liueos} or implicit bias \cite{wang2022eos,wang2023eos,chen2023eos, kalraeos}. The advantage of the former approach is generality and clarity of insight, with the disadvantage of less facility in the proof of quantitative results. The advantage of the latter is the wealth of tools for the proof of quantitative results, with the disadvantage of obfuscating essential mathematical structures with inessential architectural detail.

Our work attempts to achieve the strengths of both approaches with a middle ground of abstraction. On the one hand, in common with and inspired by the seminal work \cite{damianeos}, our formulation identifies and hinges on minimal mathematical structures which underlie the dynamics of GD with large step size, independent of architectural details; on the other hand, in common with the latter works, we retain enough mathematical structure in our assumptions to cover some of the specific examples already studied in prior work and obtain quantitative convergence rates and implicit bias characterisations. In particular, we are able to provide convergence rates outside of the ``eventually monotonically stable'' regimes considered in prior work \cite{wu2023eos, wu2024eos, cai2024eos, liueos}, as well as rigorously quantify oscillatory implicit biases of large step size GD observed empirically and partially accounted for theoretically in \cite{chen2023eos, kalraeos}.

\section{Theoretical setting}

\subsection{Notation} 

Given a smooth (i.e., infinitely differentiable, $C^{\infty}$) manifold $M$ without boundary, and a point $\theta\in M$, we will use $T_{\theta}M$ to denote the tangent space to $M$ at $\theta$, and $TM:=\bigsqcup_{\theta\in M}T_{\theta}M$ to denote its tangent bundle. Given a smooth map $g:M\rightarrow N$ of smooth manifolds, we will denote by $D_Mg:TM\rightarrow TN$ its derivative, acting at each point $\theta\in M$ as a linear map $D_Mg(\theta):T_{\theta}M\rightarrow T_{g(\theta)}N$ between tangent spaces.

A \emph{Riemannian metric} on $M$ is a smoothly-varying inner product $\langle\cdot,\cdot\rangle_{\theta}$ on each $T_{\theta}M$; we will usually drop the subscript and denote the metric by simply $\langle\cdot,\cdot\rangle$. Given Riemannian metrics on $M$ and $N$, any higher derivative $D^k_Mg:TM^{\otimes k}\rightarrow TN$ is also defined, and at each point $\theta\in M$ acts a symmetric, multilinear map $D^k_Mg(\theta):T_{\theta}M^{\otimes k}\rightarrow T_{g(\theta)}N$ whose value on vectors $v_1,\dots,v_k\in T_{\theta}M$ we denote $D^k_Mg(\theta)[v_1,\dots,v_k]$ (respectively, $D^k_Mg(\theta)[v^{\otimes k}]$ if $v_i=v\,\,\forall i$). This $D^k_Mg$ may also act on vector fields $V_1,\dots,V_k:M\rightarrow TM$ to give a map $D^k_Mg[V_1,\dots,V_k]:M\rightarrow TN$ defined by the formula $D^k_Mg[V_1,\dots,V_k](\theta):=D^k_Mg(\theta)[V_1(\theta),\dots,V_k(\theta)]$ (resp. $D^k_Mg[V^{\otimes k}](\theta):=D^k_Mg(\theta)[V(\theta)^{\otimes k}]$ if $V_i=V\,\,\forall i$) for $\theta\in M$.

If $g:M\rightarrow\RB$ is scalar-valued, then $\nabla^k_Mg:TM^{\otimes k-1}\rightarrow TM$ will be used to denote the map obtained by dualising one of the inputs of $D^k_Mg$ using the Riemannian metric. Specifically, given any $\theta\in M$ and tangent vectors $v_2,\dots,v_k\in T_{\theta}M$, $\nabla^k_Mg(\theta)[v_2,\dots,v_k]$ is the unique element of $T_{\theta}M$ satisfying
\begin{align}
    \langle \nabla^k_Mg(\theta)[v_2,\dots,v_k],v_1\rangle = D^k_Mg(\theta)[v_1,v_2,\dots,v_k],\qquad \forall v_1\in T_{\theta}M,\label{eq:dualise}
\end{align}
where $\langle\cdot,\cdot\rangle$ is the Riemannian metric in $T_{\theta}M$. The objects $\nabla_Mg$ and $\nabla^2_Mg$ are the \emph{Riemannian gradient} and \emph{Riemannian Hessian} respectively. In particular, $D^k_{\RB^p}g$ and $\nabla^k_{\RB^p}g$ will be denoted simply $D^kg$ and $\nabla^kg$; $\nabla g$ and $\nabla^2g$ then are the ordinary Euclidean gradient and Hessian respectively.

A \emph{geodesic} is a locally length-minimising curve in $M$ (for instance, geodesics in Euclidean space are just straight lines). We will use $d_M(\theta,\theta')$ to denote the geodesic distance between $\theta,\theta'\in M$, which is the length of the shortest geodesic connecting $\theta$ to $\theta'$; $d_M$ makes $M$ into a metric space. For any $\theta\in M$ and any sufficiently small $v\in T_{\theta}M$, there is a unique geodesic $\gamma:[0,1]\rightarrow M$ such that $\gamma(0)=\theta$ and $\dot{\gamma}(0)=v$; the \emph{exponential map} on $(\theta,v)$ is then by definition $\exp_{\theta}(v):=\gamma(1)$; in Euclidean space $\exp_{\theta}(v) =\theta+v$. See Appendix \ref{app:diffgeom} for more details. Any smooth function $g:M\rightarrow\RB$ is associated to a \emph{Riemannian gradient descent} map $(\eta,\theta)\mapsto \GD_M^{g}(\eta,\theta)\in M$ defined in an open neighbourhood of $\{0\}\times M\subset \RB_{\geq0}\times M$ by
\begin{align}
    \GD_M^g(\eta,\theta):=\exp_{\theta}\big(-\eta\nabla_Mg(\theta)\big).\label{eq:rgd}
\end{align}
This reduces to the familiar $(\eta,\theta)\mapsto \theta-\eta\nabla g(\theta)$ in the Euclidean setting, wherein it is denoted simply $\GD^g$ with the subscript dropped.

Recall that a \emph{regular value} of a function $f:\RB^p\rightarrow\RB^d$ is a point $y\in\RB^d$ such that $f^{-1}\{y\}$ is nonempty and $f$ is $C^{\infty}$ in a neighbourhood of $f^{-1}\{y\}$ with derivative $Df(\theta):T_{\theta}\RB^p\rightarrow T_{f(\theta)}\RB^d$ surjective for all $\theta\in f^{-1}\{y\}$. Moreover, if $y$ is a regular value of $f$, then $M:=f^{-1}\{y\}$ is a smooth submanifold of $\RB^p$, and if $f$ is also analytic in a neighbourhood of $M$, then $M$ is an analytic manifold.


\subsection{Problem setting}


Given a natural number $p>1$, function $f:\RB^p\rightarrow\RB$ and target value $y\in\RB$, we aim to solve the \emph{codimension 1 least squares problem}
\begin{equation}
    \min_{\theta}\ell(\theta):=\frac{1}{2}(f(\theta)-y)^2,\label{eq:objective}
\end{equation}
using gradient descent with constant step size $\eta>0$:
\begin{equation}
    \theta\mapsto \GD^{\ell}(\eta,\theta):=\theta-\eta\nabla\ell(\theta).
\end{equation}
We will make the following assumptions on $f$ and $y$.

\begin{assumption}[Regularity and analyticity]\label{ass:overparametrisation}
    The point $y\in\RB$ is a regular value of $f$, and $f$ is analytic in a neighbourhood of the pre-image $M:=f^{-1}\{y\}$.
\end{assumption}

The assumption that $y$ is a regular value of $f$ is a strong version of \emph{overparametrisation}: it is equivalent to assuming that the ``neural tangent kernel" $Df\,Df^T = \|Df\|^2$ is non-vanishing along $M$, as is often insisted upon in overparametrisation theory for deep learning \cite{jacot, bombari}. Analyticity of $f$ in a neighbourhood of $M:=f^{-1}\{y\}$ will be used to invoke powerful results from holomorphic dynamics  \cite{hakimhigher} in our convergence theorems.

The key consequence of Assumption \ref{ass:overparametrisation} is that, by the regular value theorem, the solution set $M$ of \eqref{eq:objective} is a $(p-1)$-dimensional analytic submanifold of $\RB^p$. Points in $M$ will be denoted $\tpa$ in what follows. For any $\tpa\in M$, the tangent space $T_{\tpa}M$ to $M$ at $\tpa$ is the kernel of $Df(\tpa)$, and any singular value decomposition of $Df(\tpa)$ admits precisely one nonzero singular value corresponding to a singular vector orthogonal to $T_{\tpa}M$. This singular vector is the \emph{normal vector} $n(\tpa):=\nabla f(\tpa)/\|\nabla f(\tpa)\|$.

It follows from the chain rule and the fact that $f\equiv y$ along $M$ that the Hessian $\nabla^2\ell$ of $\ell$ satisfies the identity
\begin{equation}
    \nabla^2\ell|_M \equiv \big(\nabla f\nabla f^T+(f-y)\nabla^2f\big)|_M \equiv \nabla f\nabla f^T|_M
\end{equation}
along $M$. We denote by $\lambda$ the largest eigenvalue of $\nabla^2\ell$, which is equal to $\|\nabla f\|^2$. We assume $M$ to be equipped with the Riemannian metric inherited from its embedding into the Euclidean space $\RB^p$.

Our new coordinate representation \eqref{eq:initialequation} for gradient descent is essentially a consequence of coordinatising a tubular neighbourhood $N$ of $M$ by a $\parallel$-coordinate \emph{along} $M$, and a $\perp$-coordinate \emph{orthogonal} to $M$ (Figure \ref{fig:N}). However, the rigorous form of \eqref{eq:initialequation} requires two additional assumptions on the solution manifold $M$. The first of these provides an invariant line segment about which GD can oscillate, and enables the correct decay rate of the error terms in the $\parallel$-component of GD.

\begin{assumption}[Orthogonal stability]\label{ass:orthogonal}
    There is $\tpa_*\in M$ and a line segment $\mathcal{L}$ through $\tpa_*$ orthogonal to $M$ which is invariant under gradient descent on $\ell$ (see Figure \ref{fig:N}).
\end{assumption}

The second assumption necessary for the rigorous form of \eqref{eq:initialequation}, concerning the $\perp$-equation, is a standard assumption from bifurcation theory enabling the realisation of the $\perp$-equation in a well-known normal form which is easily analysed \cite[Theorem 4.3]{kuznetsov}.

\begin{assumption}[Genericity]\label{ass:generic}
    Recall the normal vector field $n:=\nabla f/\|\nabla f\|$. At each point $\tpa\in M$, for all $\eta$ in a neighbourhood of $2/\lambda(\tpa)$, one has
    \begin{align}
        c(\eta,\tpa):=\bigg(\frac{\eta}{2}D^3\ell[n^{\otimes 3}](\tpa)\bigg)^2 - \frac{\eta}{6}D^4\ell[n^{\otimes 4}](\tpa) >0.
    \end{align}
\end{assumption}
\begin{wrapfigure}[20]{r}{0.42\linewidth}   
    \centering
    \vspace{-1.6em}
    \includegraphics[width=\linewidth]{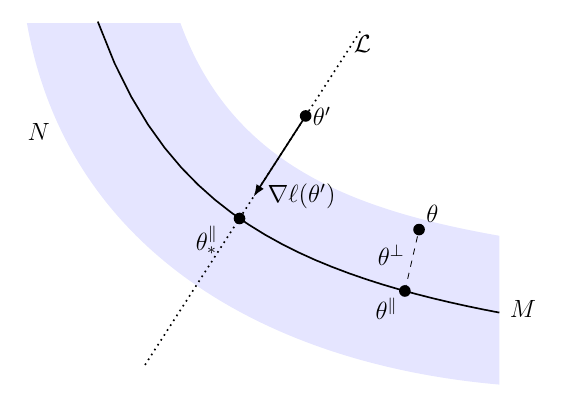}
    \caption{$M = \{(x,y):xy=1\}$ with tubular neighbourhood $N$ (shaded) and line $\mathcal{L}$ (dotted). Inside $N$, any point $\theta$ is closest to a unique point $\theta^{\parallel}$ on $M$, with $\theta-\theta^{\parallel} = \tpe\,n(\tpa)$ orthogonal to $M$ for some $\tpe\in\RB$. Assumption \ref{ass:orthogonal} says that $\nabla\ell(\theta')$ is parallel to $\mathcal{L}$ at any point $\theta'\in\mathcal{L}$.}
    \label{fig:N}
\end{wrapfigure}
Assumptions \ref{ass:overparametrisation}, \ref{ass:orthogonal} and \ref{ass:generic} are sufficient to obtain the rigorous form of \eqref{eq:initialequation} (Theorem \ref{thm:approximatedynamics}). As is evident from Equation \eqref{eq:initialequation}, GD with a large step size behaves like gradient descent on the sharpness along $M$. To utilise this fact in our convergence theorems, we will make the following additional assumptions on $M$ and the sharpness $\lambda$.

\begin{assumption}[Strong convexity of $\lambda$]\label{ass:convex}
    There is a ball $B_M(\tpa_*,r)\subset M$, centred on $\tpa_*$ and of radius $r>0$ with respect to the geodesic distance, which is \emph{geodesically convex} in the sense that any two points in $B_M(\tpa_*,r)$ are connected by a unique distance-minimising geodesic, over which the sharpness $\lambda:M\rightarrow\RB$ is \emph{$\mu$-geodesically strongly convex} and has \emph{$L$-geodesically Lipschitz gradients}, in the sense that the Riemannian Hessian $\nabla^2_M\lambda$ of $\lambda$ satisfies
    \begin{equation}
        \mu I_{TM}\,\preceq\nabla^2_M\lambda\preceq L\,I_{TM}\label{eq:hessianconvex}
    \end{equation}
    uniformly over $B_M(\tpa_*,r)$, where $I_{TM}$ is the identity map on $TM$. We assume moreover that $\lambda$ achieves its minimum value $\lambda(\theta^{\parallel}_*)$ in $B_M(\tpa_*,r)$ uniquely at $\theta^{\parallel}_*$, where one moreover has
    \begin{equation}
        \nabla^2_M\lambda(\tpa_*) = \nu\,I_{TM}(\tpa_*)
    \end{equation}
    for some $\nu>0$, where $I_{TM}(\tpa_*)$ is the identity map on the tangent space $T_{\tpa_*}M$ of $M$ at $\tpa_*$.
\end{assumption}


These assumptions need not be satisfied for general functions $f$. The geodesic strong convexity of $\lambda$ in particular seems at first to be an alarmingly strong assumption, since it is well-known that DNN loss landscapes are \emph{non}-convex.  Surprisingly, these non-trivial assumptions \emph{can be guaranteed to hold} for the following class of toy examples of deep learning non-convex loss landscapes.

\myparagraph{Multilayer scalar factorisation} The map $f:\RB^p\rightarrow\RB$ defined by $f(\theta^1,\dots,\theta^p):=\theta^p\cdots\theta^1$ corresponds to a linear network of depth $p$ and width 1 on a single input datum. Any nonzero target value $y\neq0$ is a regular value for $f$, and $f^{-1}\{y\}$ is then a union of hypersurfaces (Proposition \ref{prop:overparametrisation}). Assuming without loss of generality that $y>0$ and setting $\theta^{\parallel}_*:=y^{\frac{1}{p}}1_p$, one takes $M$ to be the connected component of $\theta^{\parallel}_*$. The line $\mathcal{L}$ is given by the span of the ones-vector $1_p$ (Proposition \ref{prop:orthogonal}). Assumption \ref{ass:generic} holds by Proposition \ref{prop:generic}, while Proposition \ref{prop:convex} demonstrates that $\lambda$ is geodesically strongly convex on a geodesic ball centred at $\tpa_*$, with
\begin{align}
    \nabla^2_M\lambda(\tpa_*) = 4y^{2-4/p}I_{TM}(\tpa_*)
\end{align}
so that Assumption \ref{ass:convex} holds. One may also take $f(\theta^1,\dots,\theta^p):=\theta^p\phi(\theta^{p-1}\phi(\cdots(\phi(\theta^1))\cdots)$, with $\phi$ any nonlinearity that is the identity on a neighbourhood of $y^{1/p}$ and still satisfy all assumptions.

Some thought reveals that these assumptions can be relaxed in more-or-less straightforward ways to deep linear networks of sufficient constant width on multi-point datasets. However, the scalar case we consider in this paper is already sufficiently instructive and non-trivial that we leave a more general elaboration of this framework to future work.

\section{A normal form for gradient descent about the solution manifold}

In this section we state a rigorous form (Theorem \ref{thm:approximatedynamics}) of the approximate update equations alluded to in Equation \eqref{eq:initialequation}, and give an idea of its proof. This section makes use only of Assumptions \ref{ass:overparametrisation}, \ref{ass:orthogonal} and \ref{ass:generic}. The geodesic convexity of Assumption \ref{ass:convex} is \emph{not} needed to derive the normal form of GD about $M$, and is necessary only for the convergence theorems to be given in the next section.

Our analysis of the dynamics of GD is inspired by that of \cite{damianeos}; it proceeds from a Taylor expansion\footnote{The idea of higher order terms inducing implicit bias is also present in literature on Sharpness-Aware Minimisation \cite{bartlettsam}.}. However, while \cite{damianeos} Taylor expands around a set of points with sharpness $\leq 2/\eta$, we Taylor expand around the solution manifold $M$. Our derivation is novel and provides a number of advantages: the assumptions are more transparent and easily checked, and we are guaranteed that $M$ is a Riemannian manifold, while the set of points with sharpness $\leq 2/\eta$ considered in \cite{damianeos} need not be. 

Specifically, every point $\theta$ sufficiently close to $M$ admits a unique nearest point $\tpa$ on $M$. The difference $\theta-\tpa$ is then a scalar multiple $\tpe\,n(\tpa)$ of the normal vector $n(\tpa) = \nabla f(\tpa)/\|\nabla f(\tpa)\|$ at $\tpa$. Thus, pairs $(\tpa,\tpe)$ with $\tpa\in M$ and sufficiently small $\tpe\in\RB$ suffice to completely coordinatise a neighbourhood of $M$ in $\RB^p$ known as a \emph{tubular neighbourhood} $N$ \cite[p. 147]{leesmooth}. Our first result is then the following characterisation of the GD dynamics in terms of $\theta^{\parallel}$, $\theta^{\perp}$, the sharpness $\lambda$ and its Riemannian gradient descent map $\GD^{\lambda}_M$.


\begin{theorem}[Normal form for GD about $M$]\label{thm:approximatedynamics}
    There is an analytic change of coordinates $(\eta,\tpa,\tpe)\mapsto (\eta,\theta)$ for a tubular neighbourhood $N$ of $M$ in which the gradient descent map $\GD^{\ell}:(\eta, \theta) \mapsto \big(\eta,\theta-\eta\nabla\ell(\tpa)\big)$ takes the form $(\eta,\tpa,\tpe)\mapsto \big(\eta,\GD^{\parallel}(\eta,\tpa,\tpe),\GD^{\perp}(\eta,\tpa,\tpe)\big)$, where
    \begin{align}
        \GD^{\parallel}(\eta,\tpa,\tpe) = \GD^{\lambda}_M\bigg(\frac{\eta(\tpe)^2}{2c(\eta,\tpa)},\tpa\bigg) + O\big((\tpe)^3\min\{1,d_M(\tpa,\tpa_*)\}\big),\label{eq:thetaparallel}
    \end{align}
    \begin{align}
        \GD^{\perp}(\eta,\tpa,\tpe) = (1-\eta\lambda(\tpa))\tpe + (\tpe)^3 + O\big((\tpe)^4\big),\label{eq:thetaperp}
    \end{align}
    with $c(\eta,\tpa)$ defined in Assumption \ref{ass:generic}.
\end{theorem}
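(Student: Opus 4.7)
The plan is to begin by invoking the tubular neighbourhood theorem. Since Assumption~\ref{ass:overparametrisation} provides a globally-defined analytic unit normal field $n=\nabla f/\|\nabla f\|$ along $M$, the map $\Phi:(\tpa,t)\mapsto \tpa+t\,n(\tpa)$ is an analytic diffeomorphism from a neighbourhood of the zero section in $M\times\RB$ onto a tubular neighbourhood $N$ of $M$. I would then Taylor expand $f\circ\Phi$ in $t$, using $f\equiv y$ and $\langle\nabla f,n\rangle=\sqrt{\lambda}$ on $M$, to express $\ell=\tfrac12(f-y)^2$ as a power series in $t$ whose coefficients depend analytically on $\tpa$ and involve $\lambda(\tpa)$, $D^2f[n^{\otimes 2}](\tpa)$, $D^3f[n^{\otimes 3}](\tpa)$ and $D^4f[n^{\otimes 4}](\tpa)$. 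Differentiating in $\theta$ yields an expansion of $\nabla\ell(\Phi(\tpa,t))$ whose normal component equals $\lambda(\tpa)\,t+O(t^2)$ and whose tangential component (projected onto $T_{\tpa}M$) equals $\tfrac12 t^2\nabla_M\lambda(\tpa)+O(t^3)$, the latter obtained via the identity $\lambda=\|\nabla f\|^2$.

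\textbf{Perpendicular update: period-doubling normal form.} Reading off the normal component of $\theta-\eta\nabla\ell(\theta)$ with respect to its new nearest point on $M$, I expect the raw normal coordinate $t$ to update as
\[
t\mapsto (1-\eta\lambda(\tpa))\,t + \beta(\eta,\tpa)\,t^2 + \gamma(\eta,\tpa)\,t^3 + O(t^4),
\]
with analytic $\beta,\gamma$ built from $D^3\ell[n^{\otimes 3}]$ and $D^4\ell[n^{\otimes 4}]$. I would then invoke the analytic normal-form theorem for one-dimensional maps near a period-doubling multiplier $-1$ \cite[Theorem 4.3]{kuznetsov}: a near-identity analytic change of the normal coordinate eliminates the quadratic term $\beta t^2$ while modifying the cubic coefficient into an effective one whose evaluation at $\eta\lambda=2$ coincides precisely with $c(\eta,\tpa)$ from Assumption~\ref{ass:generic}. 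Here the square $(\tfrac{\eta}{2}D^3\ell[n^{\otimes 3}])^2$ appears because composing the quadratic shift with the linear part (equal to $-t$ at $\eta\lambda=2$) generates an additional cubic contribution, and the $-\tfrac{\eta}{6}D^4\ell[n^{\otimes 4}]$ is the direct quartic term. The hypothesis $c>0$ then permits a further analytic rescaling $\tpe:=\sqrt{c(\eta,\tpa)}\,t$ which normalises the cubic coefficient to $+1$, yielding \eqref{eq:thetaperp}.

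\textbf{Parallel update and the error factor.} The same rescaling $\tpe=\sqrt{c(\eta,\tpa)}\,t$ applied to the tangential projection of $\theta-\eta\nabla\ell(\theta)$ converts the leading tangential term $-\tfrac{\eta t^2}{2}\nabla_M\lambda(\tpa)$ into $-\tfrac{\eta (\tpe)^2}{2c(\eta,\tpa)}\nabla_M\lambda(\tpa)$. I would then recognise this Euclidean tangent vector as agreeing, up to $O((\tpe)^3)$ corrections from the deviation of the exponential map from the identity and of the nearest-point projection from plain vector translation, with $\exp_{\tpa}\big(-\tfrac{\eta(\tpe)^2}{2c(\eta,\tpa)}\nabla_M\lambda(\tpa)\big)=\GD_M^{\lambda}\big(\eta(\tpe)^2/(2c(\eta,\tpa)),\tpa\big)$. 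To obtain the refined error factor $\min\{1,d_M(\tpa,\tpa_*)\}$ I would invoke Assumption~\ref{ass:orthogonal}: the invariance of $\mathcal{L}$ under GD implies that $\nabla\ell|_{\mathcal{L}}$ is everywhere parallel to $n(\tpa_*)$, so in $(\tpa,\tpe)$-coordinates the tangential component of the update vanishes identically at $\tpa=\tpa_*$. Analyticity of the remainder in $\tpa$ then forces a factor of $d_M(\tpa,\tpa_*)$, capped at $1$ on the compact region of $N$ where the analysis takes place.

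\textbf{Expected main obstacle.} I anticipate the delicate step to be the bookkeeping that tracks how the near-identity coordinate change used to kill $\beta t^2$ in the $\perp$-update simultaneously modifies the tangential expansion, together with the verification that the \emph{same} coefficient $c(\eta,\tpa)$ emerges consistently in both components. The algebraic coincidence producing $c(\eta,\tpa)$ is intrinsic to the period-doubling normal form, but making this coincidence compatible with the geometric Riemannian-gradient-descent interpretation of $\GD^{\parallel}$ requires careful coordination, and is what gives \eqref{eq:initialequation} its geometric meaning.
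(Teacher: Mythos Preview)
Your outline is correct and follows essentially the same architecture as the paper: tubular-neighbourhood coordinates, Taylor expansion of $\nabla\ell$ about $M$, Kuznetsov's period-doubling normal form for the $\perp$-component, and recognition of the leading $\parallel$-term as Riemannian gradient descent on $\lambda$.

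Two points of comparison are worth recording. First, the obstacle you anticipate does not actually arise. The paper's normal-form transformation $\varphi$ acts only on the $\perp$-variable and leaves $\tpa$ fixed; the $\parallel$-component of the conjugated map is therefore obtained simply by substituting $\varphi^{\perp}(\eta,\tpa,\tpe)=\tpe/\sqrt{c(\eta,\tpa)}+O((\tpe)^2)$ for the raw normal coordinate $t$ in the intermediate expression $\tpa-\tfrac{\eta t^{2}}{2}\nabla_M\lambda(\tpa)+O(t^{3})$. The factor $1/c(\eta,\tpa)$ in \eqref{eq:thetaparallel} thus appears automatically from the rescaling, with no coincidence to verify.

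Second, for the refined error $O\big((\tpe)^{3}\min\{1,d_M(\tpa,\tpa_*)\}\big)$ the paper takes a slightly different route from yours. Rather than invoking analyticity of the full remainder at $\tpa_*$, it first proves a lemma exploiting the affineness of $E(\tpa,\tpe)=\tpa+\tpe\,n(\tpa)$ in $\tpe$ to show that the Taylor remainder of $E^{-1}$ contains no pure $O(|v^{\perp}|^{k})$ terms; combining this with $v^{\parallel}=-\eta P_{TM}\nabla\ell=O(d_M(\tpa,\tpa_*))$ (from Assumption~\ref{ass:orthogonal}) gives the $d_M$ factor term-by-term. Your argument---that the full $\GD^{\parallel}$ map and the Riemannian-GD approximation are both analytic, both equal $\tpa_*$ at $\tpa=\tpa_*$ (the latter since Assumption~\ref{ass:orthogonal} forces $\nabla_M\lambda(\tpa_*)=0$), so their analytic difference must carry a factor $d_M(\tpa,\tpa_*)$---is a valid and arguably cleaner alternative; the paper's version has the advantage of making explicit which intermediate terms carry the factor.
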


\begin{proof}[Outline of proof]
    Recalling the normal vector field $n:=\nabla f/\|\nabla f\|$, we approximate the gradient $\nabla\ell(\theta)$ at $\theta\in N$ by the Taylor series
\begin{equation}
    \nabla\ell(\theta) = \nabla\ell(\theta^{\parallel})+\tpe\nabla^2\ell(\theta^{\parallel})[n(\tpa)]+\frac{(\tpe)^2}{2}\nabla^3\ell(\theta^{\parallel})[n(\tpa),n(\tpa)]+O((\tpe)^3).
\end{equation}
Since $M$ consists of global minima for $\ell$, $\nabla\ell(\theta^{\parallel}) = 0$. Additionally, since $n(\tpa)$ is the sole eigenvector along which $\nabla^2\ell(\theta^{\parallel}) = \nabla f(\theta^{\parallel})\nabla f(\theta^{\parallel})^T$ is nontrivial, with eigenvalue $\lambda(\tpa)$, one has $\nabla^2\ell(\theta^{\parallel})[n(\tpa)] = \lambda(\theta^{\parallel})n(\tpa)$. One thus has:
\begin{equation}\label{eq:gradientexpansion}
    \nabla\ell(\theta) = \lambda(\theta^{\parallel})\theta^{\perp}n(\tpa)+\frac{|\theta^{\perp}|^2}{2}\nabla^3\ell(\theta^{\parallel})[n(\tpa),n(\tpa)]+O((\tpe)^3).
\end{equation}
Plugging this into the gradient descent update formula, applying the orthogonal projection $P_{TM}$ of $T\RB^p$ onto $TM$ and $n^T$ of $T\RB^p$ onto the normal direction respectively, and noting that
\begin{align}
    P_{TM}\nabla^3\ell[n,n] = \nabla_M\big(\nabla^2\ell[n,n]\big) = \nabla_M\lambda
\end{align}
since $\nabla(n^Tn) = \nabla(1) = 0$, yields the following formulae for the GD update in $(\tpa,\tpe)$:
\begin{align}
    \tpa\mapsto \tpa-\frac{\eta(\tpe)^2}{2}\nabla_M\lambda(\tpa) + O((\tpe)^3)
\end{align}
\begin{align}
    \tpe\mapsto (1-\eta\lambda(\tpa))\tpe - \frac{\eta(\tpe)^2}{2}D^3\ell(\tpa)[n(\tpa)^{\otimes 3}]+O((\tpe)^3).
\end{align}
The $O((\tpe)^3)$ error in the $\parallel$-update is tightened to $O\big((\tpe)^3\min\{1,d_M(\tpa,\tpa_*)\}\big)$ using Assumption \ref{ass:orthogonal}, and $\tpa-\frac{\eta(\tpe)^2}{2}\nabla_M\lambda(\tpa)$ is the first order approximation of $\exp_{\tpa}\big(-\frac{\eta(\tpe)^2}{2}\nabla_M\lambda(\tpa)\big)$ by the Taylor expansion of the exponential map \cite{monera2014taylor}. The final form is obtained by invoking Assumption \ref{ass:generic} and applying a standard transformation from bifurcation theory \cite[Theorem 4.3]{kuznetsov}. See Theorem \ref{thm:normalform} for a rigorous proof.
\end{proof}

We can immediately make the following qualitative observations of the dynamics in Theorem \ref{thm:approximatedynamics}, with which we will be able to give an idea of how our convergence theorems work. Let $(\tpa_t,\tpe_t)$ denote the $t^{th}$ iterate of gradient descent in the $(\tpa,\tpe)$ coordinates of Theorem \ref{thm:approximatedynamics}.

The $\parallel$-component of the GD dynamics is a perturbed version of \emph{Riemannian gradient descent on the sharpness} along the solution manifold $M$, with time-varying step size determined by the magnitude of the $\perp$-iterates $\tpe_t$. Recalling that we assume $\lambda$ to be geodesically strongly convex as in Assumption \ref{ass:convex}, we can be assured of descent guarantees for $\lambda$ provided $|\tpe_t|$ can be controlled.

The behaviour of $|\tpe_t|$ is governed by the map \eqref{eq:thetaperp}, which is a perturbed, time-varying version of the simpler ``flip bifurcation"
\begin{align}
    x\mapsto (1-\eta\lambda')x + x^3\label{eq:flip}
\end{align}
from bifurcation theory \cite{kuznetsov}. The iterates $x_t$ of \eqref{eq:flip} admit the following dynamics.
\begin{enumerate}
    \item When $\eta<2/\lambda'$, $0$ is a \emph{hyperbolic} attractor, and the iterates $|x_t|$ go to zero exponentially fast. Were this to hold also for the iterates $|\tpe_t|$ of \eqref{eq:thetaperp}, $\tpa_t$ would evolve like gradient descent on $\lambda$ with exponentially decaying step size, hence would converge exponentially fast to a suboptimally flat point.
    \item When $\eta = 2/\lambda'$, $0$ is a \emph{parabolic} attractor, and the iterates $|x_t|$ go to zero like $\Theta(t^{-1/2})$. Were this to hold also for the iterates $|\tpe_t|$ of \eqref{eq:thetaperp}, $\tpa_t$ would evolve like gradient descent on $\lambda$ with power-law-decaying stepsize, hence would converge with a power-law rate to the optimally flat point.
    \item When $\eta>2/\lambda'$, there is a hyperbolically attracting orbit of period two with amplitude $\approx \sqrt{\eta\lambda'-2}$ to which the iterates $x_t$ converge exponentially fast. Were the same to be true for the iterates $\tpe_t$ of \eqref{eq:thetaperp}, $\tpa_t$ would evolve like gradient descent on $\lambda$ with step size $\Theta\big((\eta\lambda'-2)\big)$, hence would converge exponentially to the optimally flat point.
\end{enumerate}
In the next section, we provide theorems showing that these intuitions gathered from the unperturbed, time-invariant \eqref{eq:flip} can be rigorously carried over to the perturbed, time-varying system defined by Theorem \ref{thm:approximatedynamics}.

\section{Convergence theorems}

In this section, we state our convergence theorems and provide numerical demonstrations of them on a multilayer scalar factorisation problem. In addition to the Assumptions \ref{ass:overparametrisation}, \ref{ass:orthogonal} and \ref{ass:generic} required for the normal form (Theorem \ref{thm:approximatedynamics}), we now also invoke Assumption \ref{ass:convex} to provide rates of convergence. Iterates will be denoted $\theta_t$ (respectively $(\tpa_t,\tpe_t)$) for $t\in\NB\cup\{0\}$. For convenience, our $|\tpe_t|$ plots concern the orthogonal coordinate of the intermediate Proposition \ref{prop:tubnbrhdcoords} rather than that of the final Theorem \ref{thm:approximatedynamics}; since the transformation going between them is analytic (Theorem \ref{thm:normalform}), this causes no difference in the convergence rate.

\subsection{Subcritical regime}

The subcritical regime is when $2/\lambda(\theta^{\parallel}_*)>\eta>2/\lambda(\theta^{\parallel}_0)$. In this regime, considered also in certain examples in prior works \cite{kreislereos, liueos}, the orthogonal component $\theta^{\perp}_t$ of the iterates exhibits initial, transient oscillation driven by \eqref{eq:thetaperp} of Theorem \ref{thm:approximatedynamics}; during this time, the sharpness values $\lambda(\tpa_t)$ are monotonically decreasing according to Theorem \ref{thm:approximatedynamics} to the point of stability $\eta<2/\lambda(\theta^{\parallel}_t)$ in a finite number of steps. Following the achievement of stability, $\theta^{\perp}_t$ decays exponentially fast, resulting in exponentially less aggressive steps in $\theta^{\parallel}_t$ to decrease $\lambda(\theta^{\parallel}_t)$, ultimately resulting in convergence to a suboptimally-flat global minimum of $\ell$. See Figure \ref{fig:subcritical}. The theorem below appears as Theorem \ref{thm:subcrit} in the appendix.

\begin{theorem}\label{thm:mainsubcrit}
    Assume that $\eta< 2/\lambda(\tpa_*)$. Then there is a constant $\gamma>0$ such that for all $\tpe_0$ sufficiently close to zero and all $\tpa_0\neq\tpa_*\in B_M(\tpa_*,r)$, if $\eta>2/\lambda(\tpa_0)$ is sufficiently small, then there is
    \begin{align}
        \tau\leq O\bigg((\tpe_0)^{-2}\bigg(\frac{\lambda(\tpa_0)-\lambda(\tpa_*)}{2/\eta-\lambda(\tpa_*)}\bigg)^{\gamma}\bigg)
    \end{align}
    such that $\eta<2/\lambda(\tpa_t)$ for all $t\geq\tau$ and $\eta\geq 2/\lambda(\tpa_t)$ for all $t<\tau$.   Consequently, setting $\beta:=1-(2-\eta\lambda(\tpa_{\tau}))<1$, the iterates $(\tpa_t,\tpe_t)$ converge to a global minimum $(\tpa_{\infty},0)$ of $\ell$ in $M$ with suboptimal sharpness
    \begin{align}
        \lambda(\tpa_{\infty})-\lambda(\tpa_*)\geq \exp\big(-O\big((\tpe_{\tau})^2(1-\beta^2)^{-1}\big)\big)(\lambda(\tpa_{\tau})-\lambda_*).
    \end{align}
    at a rate $O(\beta^t)$.
\end{theorem}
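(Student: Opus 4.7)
The plan is to split the analysis at the escape time $\tau$ into the unstable phase $t<\tau$, during which $\eta\lambda(\tpa_t)\geq 2$ so the $\perp$-dynamics from \eqref{eq:thetaperp} is repelling at the origin, and the stable phase $t\geq\tau$, during which $|1-\eta\lambda(\tpa_t)|<1$. In both phases, the $\parallel$-update from Theorem \ref{thm:approximatedynamics} is Riemannian gradient descent on $\lambda$ along $M$ with time-varying step size $\alpha_t \asymp \eta(\tpe_t)^2$, so controlling $(\tpe_t)^2$ directly translates into descent control on $\lambda(\tpa_t)-\lambda(\tpa_*)$ via the geodesic strong convexity and Lipschitz gradient of $\lambda$ (Assumption \ref{ass:convex}); in particular, $\mu$-strong convexity yields a PL-type inequality $\|\nabla_M\lambda(\tpa)\|^2\geq 2\mu(\lambda(\tpa)-\lambda(\tpa_*))$.

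For Phase 1, the first step is to verify an invariant region: given $|\tpe_0|$ small and $\eta-2/\lambda(\tpa_0)$ small, both $|\tpe_t|$ and $d_M(\tpa_t,\tpa_*)$ remain bounded by a small enough constant that all error terms in Theorem \ref{thm:approximatedynamics} are absorbed. Inside this region, the destabilising linear factor $|1-\eta\lambda(\tpa_t)|\geq 1$ drives $|\tpe_t|$ away from zero, so I can lower-bound $(\tpe_t)^2$ by an explicit function of $(\tpe_0)^2$ and $t$. Combined with a standard Riemannian descent lemma, this yields
\begin{align*}
    \lambda(\tpa_{t+1})-\lambda(\tpa_*) \;\leq\; \big(1-c_1(\tpe_t)^2\big)\big(\lambda(\tpa_t)-\lambda(\tpa_*)\big)
\end{align*}
for some constant $c_1>0$ whenever $\tpa_t\in B_M(\tpa_*,r)$. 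Telescoping and solving for the first $t$ at which $\lambda(\tpa_t)\leq 2/\eta$ gives the stated bound on $\tau$ (the polynomial factor in the ratio $(\lambda(\tpa_0)-\lambda(\tpa_*))/(2/\eta-\lambda(\tpa_*))$ arises from how the product $\prod(1-c_1(\tpe_s)^2)$ accumulates when $(\tpe_s)^2$ may be as small as $(\tpe_0)^2$).

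For Phase 2, once $\eta\lambda(\tpa_\tau)<2$ the $\perp$-dynamics is contractive with factor $|1-\eta\lambda(\tpa_t)|=\eta\lambda(\tpa_t)-1$ at $t=\tau$. An invariance argument shows $\eta\lambda(\tpa_t)$ cannot increase appreciably for $t>\tau$ (it was decreasing in Phase 1 and the $\parallel$-steps now have summable sizes), so a uniform contraction factor $\beta=\eta\lambda(\tpa_\tau)-1$ drives $|\tpe_t|=O(\beta^{t-\tau})$. The $\parallel$-step sizes $\alpha_t=O(\beta^{2(t-\tau)})$ are then summable, hence $\tpa_t\to\tpa_\infty\in B_M(\tpa_*,r)$ at rate $O(\beta^t)$. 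For the suboptimal sharpness bound, I would mirror the descent inequality above with a lower companion, $\lambda(\tpa_{t+1})-\lambda(\tpa_*)\geq(1-c_2(\tpe_t)^2)(\lambda(\tpa_t)-\lambda(\tpa_*))$, take logarithms, and use the geometric summation $\sum_{t\geq\tau}(\tpe_t)^2\leq O((\tpe_\tau)^2/(1-\beta^2))$ to obtain the claimed $\exp(-O((\tpe_\tau)^2(1-\beta^2)^{-1}))$ factor.

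The main obstacle is the Phase 1 bootstrap: I must simultaneously track that $(\tpe_t)^2$ is large enough to give nontrivial parallel descent and small enough that (i) the amplifying cubic term $(\tpe)^3$ in \eqref{eq:thetaperp} does not cause runaway growth of $|\tpe_t|$ before $\tpa_t$ crosses the stability boundary, and (ii) the $O((\tpe)^3\min\{1,d_M(\tpa,\tpa_*)\})$ and $O((\tpe)^4)$ error terms in Theorem \ref{thm:approximatedynamics} remain dominated by the leading-order sharpness descent. The smallness requirements on $|\tpe_0|$ and $\eta-2/\lambda(\tpa_0)$ in the hypothesis are exactly what is needed to close this bootstrap, and a careful inductive/continuation argument is the most delicate part of the proof.
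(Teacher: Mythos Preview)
Your proposal is correct and matches the paper's argument: same two-phase split at $\tau$, a Riemannian descent lemma combined with upper and lower bounds on $|\tpe_t|$ for Phase~1, exponential contraction of $|\tpe_t|$ for Phase~2 giving the $O(\beta^t)$ rate, and a lower companion inequality telescoped against the geometric sum $\sum_{t\geq\tau}(\tpe_t)^2=O((\tpe_\tau)^2/(1-\beta^2))$ for the suboptimality bound.

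One clarification on the Phase~1 bootstrap, since you have the roles of the linear and cubic terms slightly reversed: in \eqref{eq:thetaperp} the cubic $(\tpe)^3$ has sign \emph{opposite} to $(1-\eta\lambda)\tpe$ (because $1-\eta\lambda<-1$), so the cubic is \emph{stabilising}, not amplifying---it is what saturates the expanding linear factor and gives the uniform upper bound $|\tpe_t|\leq 2\sqrt{\eta\lambda(\tpa_0)-2}$, which the paper obtains via a period-two fixed-point argument for the composite map. Correspondingly the crucial Phase~1 lower bound is not that $|\tpe_t|$ is driven away from zero but that it decays at worst like $|\tpe_0|/\sqrt{1+O((\tpe_0)^2 t)}$; it is the resulting \emph{logarithmic} divergence of $\sum_{s<t}(\tpe_s)^2$ that, after taking logs and telescoping the descent inequality, produces the power-law factor $\big((\lambda(\tpa_0)-\lambda_*)/(2/\eta-\lambda_*)\big)^{\gamma}$ in the bound on $\tau$.
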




\begin{figure}[htbp]
    \centering
    \vspace{-1em}
    \includegraphics[width=\textwidth]{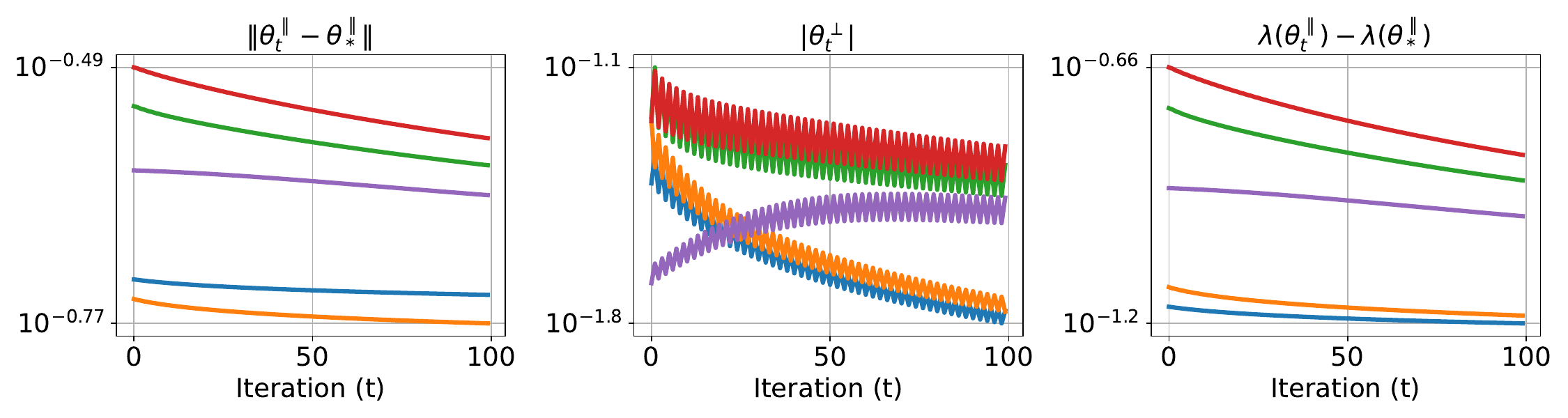}

    \includegraphics[width=\textwidth]{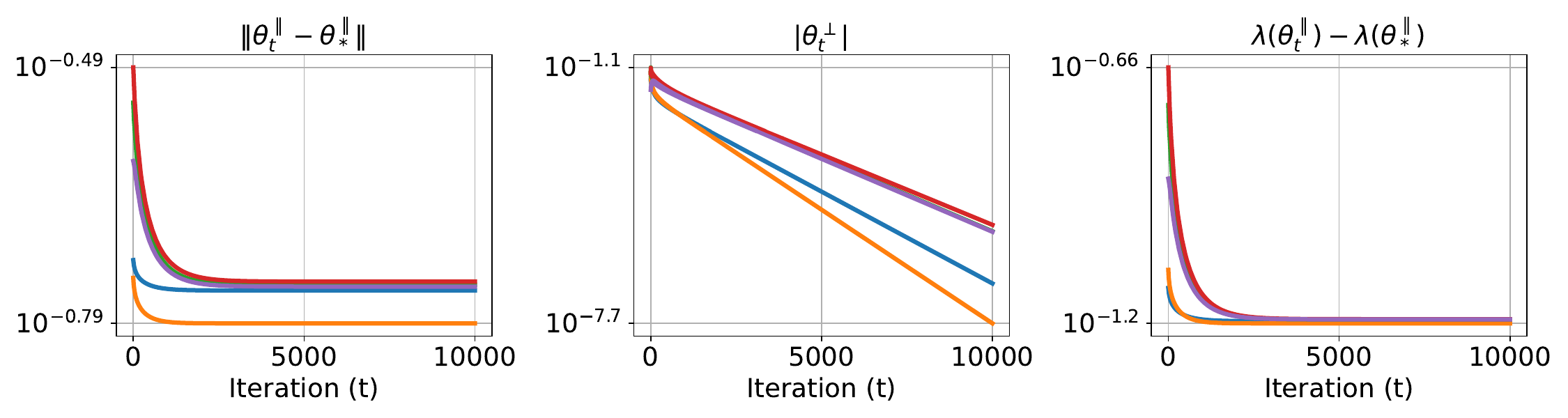}

    \caption{%
        Log-scale plots of distance from $\theta^{\parallel}_t$ to $\theta^{\parallel}_*$ (left), magnitude of $\theta^{\perp}_t$ (centre) and sharpness suboptimality gap (right) for gradient descent on depth 5 scalar factorisation in the \textbf{subcritical regime}. Trajectories from five different initialisations shown. Initial instability in $|\theta^{\perp}_t|$ (top) is overcome in finite time with rapid convergence to a suboptimally flat global minimum (bottom).
    }
    \label{fig:subcritical}
\end{figure}

\subsection{Critical regime}

The critical regime happens when $\eta = 2/\lambda(\theta^{\parallel}_*)$. To our knowledge, this regime has not been observed previously in the literature. In this regime, the dynamics of GD are non-hyperbolic, with $\tpe_t$ exhibiting 2-periodic decay to zero with a rate $\Theta(t^{-\frac{1}{2}})$. Consequently, the dynamics of $\tpa_t$ are essentially those of gradient descent on $\lambda$ with a step size decaying according to a power-law. Ultimately then, $\theta_t$ does converge to the optimally flat global minimum of $\ell$, but does so at a slow rate. See Figure \ref{fig:critical}. The theorem below appears as Theorem \ref{thm:critical} in the appendix.

\begin{theorem}\label{thm:maincrit}
    Assume that $\eta = 2/\lambda(\tpa_*)$ and that $\nu/(c(2/\lambda_*,\tpa_*)\lambda_*)<1$, where $\nu$ and $c(\cdot,\cdot)$ are defined as in Assumptions \ref{ass:generic} and \ref{ass:convex} respectively. Then for all $\tpa_0$ sufficiently close to $\tpa_*$ and all $\tpe_0\neq0$ sufficiently small, one has $d_M(\tpa_t,\tpa_*) = \Theta(t^{-1/2})$ and $|\tpe_t| = \Theta(t^{-1/2})$.
\end{theorem}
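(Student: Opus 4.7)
The plan is to work with the squared variables $u_t := (\tpe_t)^2$ and $v_t := d_M(\tpa_t, \tpa_*)^2$, which removes the sign alternation induced by the parabolic multiplier $1-\eta\lambda_* = -1$ and exposes a pair of coupled scalar recursions amenable to a self-similar analysis. Writing $\lambda_* := \lambda(\tpa_*)$ and $c_* := c(2/\lambda_*, \tpa_*)$, squaring the $\perp$-equation of Theorem~\ref{thm:approximatedynamics} and Taylor-expanding $\lambda(\tpa_t) = \lambda_* + \tfrac{\nu}{2}v_t + O(v_t^{3/2})$ via Assumption~\ref{ass:convex} yields
\begin{equation*}
    u_{t+1} = u_t - 2u_t^2 + \tfrac{2\nu}{\lambda_*}\,u_t v_t + (\text{lower order}),
\end{equation*}
while the $\parallel$-equation combined with the identity $\nabla^2_M\lambda(\tpa_*) = \nu\,I_{TM}(\tpa_*)$ gives
\begin{equation*}
    v_{t+1} = v_t\bigl(1 - \tfrac{2\nu}{c_*\lambda_*}\,u_t\bigr) + (\text{lower order}).
\end{equation*}

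I would then substitute the self-similar ansatz $u_t \sim A/t$, $v_t \sim B^2/t$ and match leading $1/t^2$ contributions. This produces the algebraic system $2A = 1 + 2\nu B^2/\lambda_*$ and $2\nu A/(c_*\lambda_*) = 1$, with unique solution
\begin{equation*}
    A = \tfrac{c_*\lambda_*}{2\nu}, \qquad B^2 = \tfrac{\lambda_*(c_*\lambda_* - \nu)}{2\nu^2}.
\end{equation*}
The hypothesis $\nu/(c_*\lambda_*) < 1$ appears exactly as the condition $B^2 > 0$, singling out the regime in which a genuine $t^{-1/2}$ profile exists in \emph{both} coordinates.

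To promote this heuristic to a proof, rescale by $s_t := t u_t$ and $r_t := t v_t$. The induced non-autonomous system takes the form $(s_{t+1}, r_{t+1}) = (s_t, r_t) + \tfrac{1}{t}(\phi(s_t, r_t), \psi(s_t, r_t)) + O(1/t^2)$, where $\phi(s,r) = s - 2s^2 + 2\nu sr/\lambda_*$ and $\psi(s,r) = r(1 - 2\nu s/(c_*\lambda_*))$. The Jacobian of $(\phi,\psi)$ at the fixed point $(A, B^2)$ has trace $-c_*\lambda_*/\nu < 0$ and determinant $(c_*\lambda_* - \nu)/\nu$, yielding eigenvalues $-1$ and $1 - c_*\lambda_*/\nu$; both are strictly negative precisely under the theorem's hypothesis. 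A discrete Gronwall / Lyapunov argument adapted to the $1/t$ rate of the linearised drift then pushes $(s_t, r_t) \to (A, B^2)$, giving the claimed $|\tpe_t| = \Theta(t^{-1/2})$ and $d_M(\tpa_t, \tpa_*) = \Theta(t^{-1/2})$. The $O((\tpe_t)^4)$ and $O((\tpe_t)^3 \min\{1, d_M(\tpa_t,\tpa_*)\})$ error terms from Theorem~\ref{thm:approximatedynamics} contribute $o(1/t^2)$ corrections in the rescaled recursion and are absorbed into the analysis; analyticity (Assumption~\ref{ass:overparametrisation}) enables the holomorphic-dynamics machinery of \cite{hakimhigher} as a drop-in substitute for the more delicate steps.

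The main obstacle is the lower bound $v_t \gtrsim 1/t$: an upper bound on $u_t$ alone yields only $v_t \geq v_0 t^{-C}$ for a $C$ determined by that upper bound, so the sharp rate requires a bootstrap that simultaneously pins $u_t$ to $A/t$ \emph{from both sides}. This coupling is what makes $\nu/(c_*\lambda_*) = 1$ a sharp transition: above it, $v_t$ decays faster than $t^{-1/2}$ and feedback from $v_t$ into the $u$-equation becomes negligible, restoring the uncoupled parabolic rate of \eqref{eq:flip}; below it, the $v_t$-driven correction is precisely what modifies the self-similar constant $A$ to the value $c_*\lambda_*/(2\nu)$ needed for $r_t$ to stabilise at a positive limit. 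Making this bootstrap rigorous---likely via an invariant-cone argument in $(s_t, r_t)$-space whose boundaries are preserved by the sign structure of the Jacobian eigenvalues---is the technical heart of the proof.
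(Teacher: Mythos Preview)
Your approach is correct in outline and genuinely different from the paper's. The paper works in the unsquared variables $(x,y) = (d_M(\tpa,\tpa_*), |\tpe|)$ and reduces to an abstract planar map $T$ with parameters $a = \nu/(c_*\lambda_*)$, $b = \nu/\lambda_*$, $c=1$ (so the hypothesis is exactly $a<c$). Rather than rescale in time, the paper invokes Hakim's theorem on parabolic curves in $\CB^2$ to produce an \emph{invariant manifold} $y = u(x) = \kappa x + O(x^2)$ with $\kappa = \sqrt{b/(c-a)}$---note $\kappa^2 = B^2/A$ in your notation---and then shows that the normalised deviation $\widetilde\Delta = (y-u(x))/x$ contracts multiplicatively by a factor $1 - \tfrac{c-a}{2}y^2$, together with a Lyapunov function $W = bx^2 + \tfrac{a}{2}y^2$ for uniform a~priori bounds. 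Once the iterates are trapped in a cone $y \in [(\kappa-\epsilon)x,(\kappa+\epsilon)x]$, two-sided telescoping on $x_t^{-2}$ yields the $\Theta(t^{-1/2})$ rate.

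Your time-rescaling $(s_t,r_t) = (tu_t, tv_t)$ is more elementary in that it sidesteps the complex-analytic input and renders the role of $\nu/(c_*\lambda_*) < 1$ transparent as linear stability of the rescaled fixed point. What the paper's route buys is an explicit invariant object to measure against, which cleanly decouples ``convergence to the curve'' from ``convergence along the curve'' and thereby dissolves the bootstrap you correctly flag as the crux. In your framework that bootstrap must be supplied directly: before the linearised $1/t$-drift analysis applies you need a~priori bounds guaranteeing $s_t,r_t = O(1)$ from arbitrary small initial data---this is the analogue of the paper's $W$, and without it the rescaled system is not yet in the perturbative regime. One minor correction: the higher-order remainders from Theorem~\ref{thm:approximatedynamics} contribute $O(t^{-3/2})$ rather than $O(t^{-2})$ to the rescaled recursion, though this is still $o(1/t)$ and does not affect the argument.
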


\begin{figure}[htbp]
    \centering
    
    \vspace{-1em}
    \includegraphics[width=\textwidth]{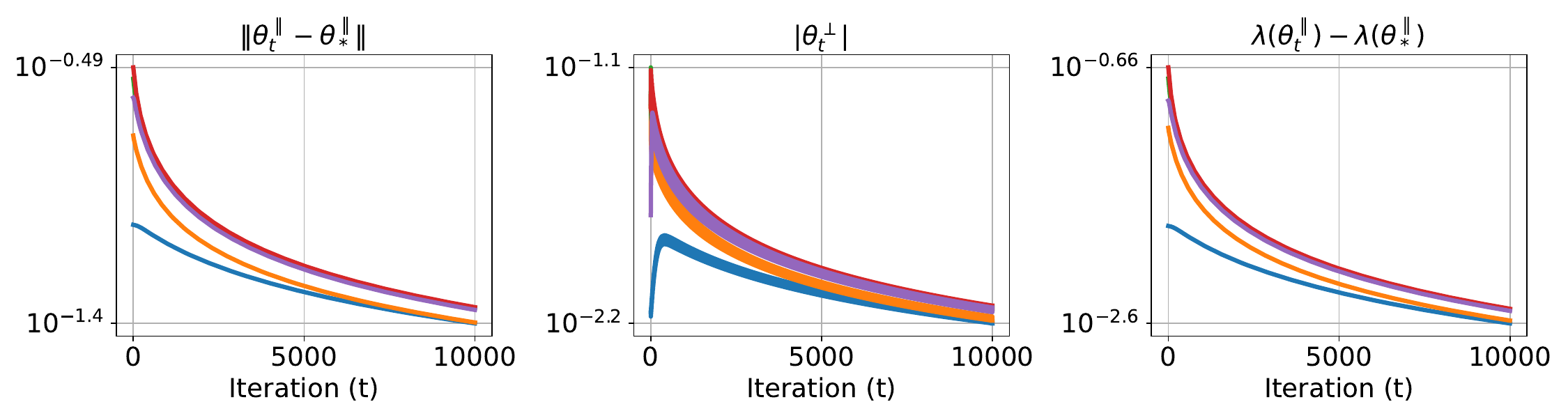}

    \caption{%
        Log-$y$-scale plots of distance from $\theta^{\parallel}_t$ to $\theta^{\parallel}_*$ (left), magnitude of $\theta^{\perp}_t$ (centre) and sharpness suboptimality gap (right) on depth 5 scalar factorisation in the \textbf{critical regime}. Trajectories from five different initialisations shown. The iterates $|\theta^{\perp}_t|$ may or may not initially \emph{increase}; in both cases, asymptotic, power law decrease of $|\theta^{\perp}_t|$ and $\|\theta^{\parallel}_t-\theta^{\parallel}_*\|$ to zero indicates power law convergence to the optimally flat global minimum.
    }
    \label{fig:critical}
\end{figure}

Some further remarks about the critical regime are necessary. First, the additional hypothesis that $\nu/(c(2/\lambda_*,\tpa_*)\lambda_*)<1$ (which is satisfied for multilayer scalar factorisation, see Proposition \ref{prop:additional}) is not strictly necessary for convergence; power-law convergence still occurs to the same minimum without this assumption, but is in that case faster for $d_M(\tpa_t,\tpa_*)$ than the $\Theta(t^{-1/2})$ rate given in the theorem statement. However, we are unaware of examples having this faster rate. Second, the $|\tpe_t|$ iterates may initially increase substantially as in Figure \ref{fig:critical} depending on the initialisation; our theorem accommodates this behaviour explicitly (see Theorem \ref{thm:parabolic}).

\subsection{Supercritical regime}

The supercritical regime is when $\eta>2/\lambda(\theta^{\parallel}_*)$. In this case, when $\eta>2/\lambda(\theta^{\parallel}_*)$ is sufficiently small, Equation \eqref{eq:thetaperp} exhibits linear convergence to a stable orbit of period two with amplitude $\approx(\eta\lambda(\theta^{\parallel}_*)-2)^{\frac{1}{2}}$. It follows that the step sizes in $\theta^{\parallel}_t$ in its descent on $\lambda$ are of \emph{approximately constant} size, so that $\theta^{\parallel}_t$ will converge \emph{linearly} to the optimally flat $\theta^{\parallel}_*$. However, the complete iterates $\theta_t$ of GD do not converge to a global minimum, but asymptotically oscillate orthogonally to $M$ about the point $\theta^{\parallel}_*$ along the line $\mathcal{L}$ of Assumption \ref{ass:orthogonal}. See Figure \ref{fig:supercritical}. Although this regime has been observed in prior works \cite{chen2023eos, kalraeos, ghosh2025eos}, no general quantitative convergence theorem in this regime has yet been proved. The theorem below appears as Theorem \ref{thm:supercrit} in the appendix.

\begin{theorem}\label{thm:mainsupcrit}
    Assume that $\eta>2/\lambda(\tpa_*)$ is sufficiently small. Then there are positive constants $C_1, C_2,C_3$ and a stable orbit of period two of the form $\big(\tpa_*,\pm(\eta\lambda(\tpa_*)-2)^{1/2}+O(\eta\lambda(\tpa_*)-2)\big)$ to which the iterates $(\tpa_t,\tpe_t)$ starting from any $(\tpa_0,\tpe_0)$ satisfying $0<|\tpe_0|\leq C_1(\eta\lambda(\tpa_*)-2)^{1/2}$ and $d_M(\tpa_0,\tpa_*)\leq C_2(\eta\lambda(\tpa_*)-2)^{1/2}$ converge with rate $O\big((1-C_3(\eta\lambda(\tpa_*)-2))^t\big)$.
\end{theorem}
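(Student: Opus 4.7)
Let $\epsilon := \eta\lambda(\tpa_*) - 2 > 0$ play the role of a small bifurcation parameter. The natural scale for both coordinates is $\sqrt{\epsilon}$: in the orthogonal direction this is the amplitude of the period-two orbit predicted by the flip bifurcation \eqref{eq:thetaperp}, and in the parallel direction it is the radius over which the effective step size $\Theta(\epsilon)$ of Riemannian gradient descent on $\lambda$ can still contract against the normal-form error terms. The proof breaks into three parts: (i) locate an exact period-two orbit for the orthogonal dynamics at $\tpa=\tpa_*$ and show it is hyperbolically attracting; (ii) verify that a ``supercritical tube'' of radius $\Theta(\sqrt{\epsilon})$ about $\{\tpa_*\}\times\{\pm x_*\}$ is forward invariant; (iii) close a joint Lyapunov argument to obtain linear convergence at rate $1 - C_3\epsilon$.

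\textbf{Step (i).} Freezing $\tpa = \tpa_*$ in \eqref{eq:thetaperp} gives the one-dimensional map $F(\tpe) := (1-\eta\lambda(\tpa_*))\tpe + (\tpe)^3 + O((\tpe)^4)$, whose unperturbed version has period-two points at $\pm\sqrt{\epsilon}$. Factoring the fixed point at zero out of $F\circ F - \mathrm{id}$ and applying the implicit function theorem in the rescaled variable $\tpe/\sqrt{\epsilon}$ produces an exact orbit at $\pm x_* = \pm\sqrt{\epsilon} + O(\epsilon)$. Using $1-\eta\lambda(\tpa_*) = -1-\epsilon$, a short calculation gives
\[(F\circ F)'(x_*) \;=\; F'(-x_*)\,F'(x_*) \;=\; (-1+2\epsilon)^2 + O(\epsilon^{3/2}) \;=\; 1 - 4\epsilon + O(\epsilon^{3/2}),\]
so the orbit is hyperbolically attracting with local contraction factor $1 - C_3\epsilon$ for some $C_3 > 0$.

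\textbf{Steps (ii)--(iii).} Define
\[\mathcal{U} \;:=\; \Big\{(\tpa,\tpe) : d_M(\tpa,\tpa_*) \leq C_2\sqrt{\epsilon},\ \bigl|\,|\tpe|^2 - (\eta\lambda(\tpa)-2)\bigr| \leq C_1^2\,\epsilon\Big\}.\]
On $\mathcal{U}$, equation \eqref{eq:thetaparallel} is Riemannian gradient descent on $\lambda$ with effective step size $\eta(\tpe)^2/(2c(\eta,\tpa)) = \Theta(\epsilon)$ plus an error $O((\tpe)^3\min\{1,d_M(\tpa,\tpa_*)\}) = O(\epsilon^{3/2}\,d_M(\tpa,\tpa_*))$, where the $\min\{1,d_M\}$ factor from Assumption \ref{ass:orthogonal} is precisely what lets the error be absorbed into the contraction. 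Assumption \ref{ass:convex} then yields $d_M(\tpa_{t+1},\tpa_*) \leq (1 - C\mu\epsilon + O(\epsilon^{3/2}))\,d_M(\tpa_t,\tpa_*)$. For the orthogonal component, step (i) gives contraction of $|\tpe|^2 - (\eta\lambda(\tpa)-2)$ at rate $1-4\epsilon$, and the $\tpa$-dependence of \eqref{eq:thetaperp} contributes a perturbation of size $O(\sqrt{\epsilon}\,d_M(\tpa,\tpa_*))$. A weighted Lyapunov function $\Phi := A\,d_M(\tpa,\tpa_*)^2 + (|\tpe|^2 - (\eta\lambda(\tpa)-2))^2$ with $A$ chosen large then satisfies $\Phi_{t+1} \leq (1 - C_3\epsilon)\Phi_t$, giving invariance of $\mathcal{U}$ and the claimed rate.

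\textbf{Main obstacle.} The principal difficulty is disentangling the coupling between the parallel and orthogonal dynamics. The target period-two orbit for \eqref{eq:thetaperp} at a non-zero $\tpa$ has amplitude $\sqrt{\eta\lambda(\tpa)-2}$, which drifts by $\Theta(\sqrt{\epsilon}\,d_M(\tpa,\tpa_*))$ as $\tpa$ moves; naively this drift could dominate the $\Theta(\epsilon)$ orthogonal contraction. Tracking $|\tpe|^2 - (\eta\lambda(\tpa)-2)$ rather than $|\tpe|-x_*$ is the key trick: now both drift and contraction are $\Theta(\epsilon)$ and can be balanced by choosing $A$ large and $C_1, C_2$ small. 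Once this Lyapunov variable is chosen correctly and the cubic $\parallel$-error is bounded via Assumption \ref{ass:orthogonal}, the remainder reduces to a standard perturbation-of-hyperbolic-dynamics argument.
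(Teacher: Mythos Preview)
Your approach is sound but differs substantively from the paper's. After your Step~(i) (whose derivative computation $(F\circ F)'(x_*)=1-4\epsilon+O(\epsilon^{3/2})$ is correct), the paper proceeds by collapsing the normal-form dynamics to a two-dimensional map $F=(F_x,F_y)$ in the scalar variables $x:=d_M(\tpa,\tpa_*)$ and $y:=\tpe$, via covariant Taylor expansions of $\nabla_M\lambda$ and $\lambda$ at $\tpa_*$ together with Assumption~\ref{ass:convex}. It then computes $DF^2$ explicitly, bounds each entry on a box $V=\{|x|\le C_2\sqrt{\alpha},\ y^2\in[c\alpha,2\alpha]\}$, and uses $\|A\|_2\le\sqrt{\|A\|_1\|A\|_\infty}$ to obtain $\|DF^2\|_2\le 1-C_3\alpha$ uniformly on $V$; the mean value theorem then delivers the linear rate directly. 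The drift you flag as the ``main obstacle'' appears only in the off-diagonal entries of $DF^2$, which are $O(xy)=O(\alpha)$ on $V$ and are absorbed without any change of variable. Your Lyapunov construction with $\sigma=|\tpe|^2-(\eta\lambda(\tpa)-2)$ is a legitimate alternative that makes this cancellation explicit, at the cost of more bookkeeping; the paper's direct linearisation is shorter and sidesteps the need for your ``key trick''.

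There is one genuine gap. With ``$C_1,C_2$ small'' as you stipulate, your set $\mathcal{U}$ is a thin annulus about the orbit and does not contain initial data with $|\tpe_0|\ll\sqrt{\epsilon}$, which the hypothesis $0<|\tpe_0|\le C_1\sqrt{\epsilon}$ of the theorem expressly permits (note also that you have overloaded $C_1$: in the theorem it is an upper bound on $|\tpe_0|/\sqrt{\epsilon}$, while in your $\mathcal{U}$ it controls the annulus width, and these cannot both be made small). On that region your Lyapunov inequality also fails quantitatively, since the effective parallel step size $\eta(\tpe)^2/(2c)$ is $o(\epsilon)$ rather than $\Theta(\epsilon)$. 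The paper closes this with a short transient argument: for $y^2$ below the annulus one has $|(F^2)_y(x,y)|\ge(1+\tfrac12\alpha)|y|$ while $|F_x(x,y)|\le|x|$, so $|\tpe_t|$ grows geometrically into $V$ in finitely many steps with $d_M(\tpa_t,\tpa_*)$ non-increasing meanwhile. You would need to prepend an analogous step before your Lyapunov argument on $\mathcal{U}$ can apply.
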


\begin{figure}[htbp]
    \centering
    \vspace{-1em}
    \includegraphics[width=\textwidth]{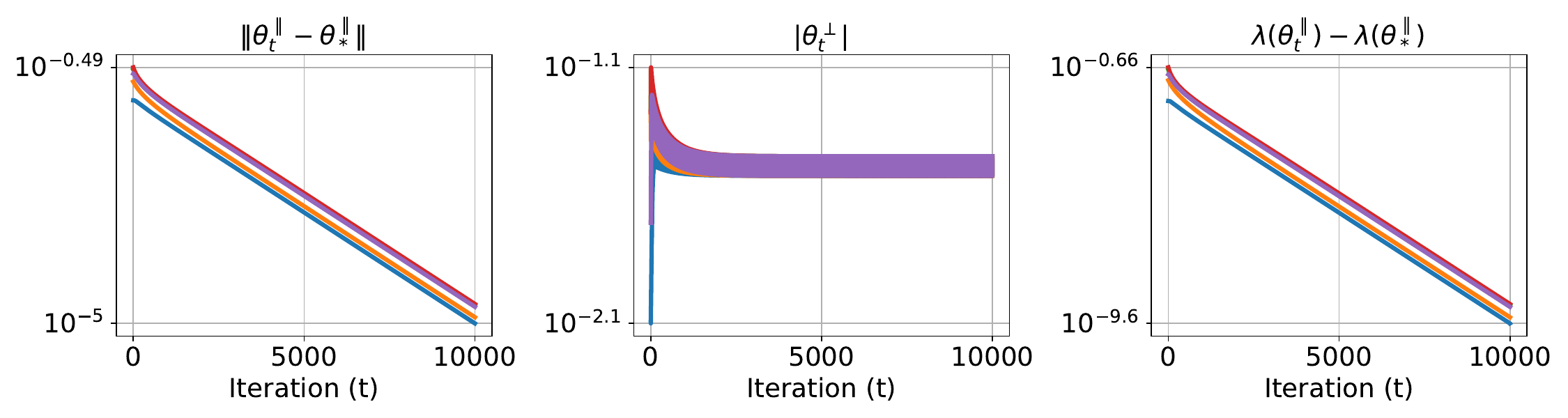}

    \caption{%
        Log-scale plots of distance from $\theta^{\parallel}_t$ to $\theta^{\parallel}_*$ (left), magnitude of $\theta^{\perp}_t$ (centre) and sharpness suboptimality gap (right) on depth 5 scalar factorisation in the \textbf{supercritical regime}. Trajectories from five different initialisations shown. The iterates $|\theta^{\perp}_t|$ converge to a stable, period-two orbit driving linear convergence of $\theta^{\parallel}_t$ to the optimally flat global minimum $\theta^{\parallel}_*$.
    }
    \label{fig:supercritical}
\end{figure}

\section{Limitations, discussion and conclusion}

Our work has a number of limitations, all of which point toward avenues for further exploration along the lines we have developed.

\myparagraph{Higher order orthogonal dynamics} Prior work has indicated that beyond the 2-periodic behaviour we studied in this work, higher-order periodicity and chaos emerge as the learning rate increases \cite{kalraeos}. Our Theorem \ref{thm:approximatedynamics} provides new insight into this behaviour as a manifestation of a bifurcating system oscillating orthogonally to the solution manifold. Our analysis only handles the simplest (namely 2-periodic) case of this supercritical behaviour. We expect our framework to be able to admit convergence proofs of higher oscillatory and chaotic behaviour also, using ideas from dynamical systems theory, however we do not expect this to be easy and have not attempted it in this paper.

\myparagraph{Higher codimension least squares} Our theory only allows us to treat codimension 1 problems. This is clearly a severe limitation, ruling out for instance overparametrised regression on multiple datapoints or multiple outputs. It is not difficult to extend our definitions to higher codimension, however proving anything in this setting seems very challenging, as it would require an understanding of higher-dimensional bifurcating dynamical systems. We leave this question to future work.

\myparagraph{Geodesic convexity of sharpness for more general solution manifolds} One of the most surprising results of our work is that the sharpness of the loss is geodesically strongly convex over a geodesic ball in the solution manifold for a number of overparametrised non-convex problems. Should this prove to be a more general fact, which we suspect may be the case, it would go a long way to explaining the still-mysterious ease of optimisation and implicit bias of GD on DNNs. We leave a more thorough investigation of this question to future work.

\myparagraph{Finding the tubular neighbourhood} Our theory is premised on the \emph{assumption} that GD is initialised in a tubular neighbourhood of the solution manifold. While there is no reason to think this is the case with standard initialisation schemes used in practice, we conjecture that GD does converge rapidly to a tubular neighbourhood from a standard initialisation during the phase known as progressive sharpening \cite{coheneos}, after which the familiar dynamics described here would apply. We leave exploration of this question to future work.

\begin{ack}
    This research was supported by NSF 2031985 and Simons Foundation 814201 (Theorinet), ONR MURI 503405-78051 and University of Pennsylvania Startup Funds.
\end{ack}

\bibliographystyle{plain}
\bibliography{references}



\newpage

\appendix

\section{Differential geometry background}\label{app:diffgeom}

The purpose of this section is to give a brief overview of essential notions from Riemannian geometry. A good source for the first subsection is \cite{leeriem}. We have not been able to locate a good source for the latter subsection; in particular, to our knowledge no attempt has yet been made to consider the Riemannian Hessian of the square gradient norm of a function defining a hypersurface, however the calculation is elementary.

\subsection{Riemannian metrics, connections and geodesics}

Recall that a \emph{Riemannian metric} on a manifold $M$ is a smoothly-varying family of positive-definite inner products $\langle\cdot,\cdot\rangle_{\theta}$ on the tangent spaces $T_{\theta}M$ of $M$, with associated norm $\|\cdot\|_{\theta}$. Any submanifold $M$ of a Riemannian manifold $(N,g_N)$ inherits a Riemannian metric from $N$; precisely, given tangent vectors $v,w\in T_{\theta}M\subset T_{\theta} N$, one defines 
\begin{equation}
    g_M(v,w):=g_N(v,w).\label{eq:inducedmetric}
\end{equation}
Since we work with submanifolds of Euclidean space, whose tangent spaces are all themselves subspaces of Euclidean space equipped with the Euclidean inner product, we will often use the notation $\langle\cdot,\cdot\rangle$ and $\|\cdot\|$ without further decoration when considering the metric of a submanifold of Euclidean space.

Denote by $T^*M$ the cotangent bundle of $M$, whose fibre over $\theta\in M$ is the space of linear functionals $T_{\theta}M\rightarrow\RB$. Given $(k,l)\in\NB\cup\{0\}$ a $(k,l)$-tensor at the point $\theta\in M$ is an element of $T_{\theta}M^{\otimes k}\otimes T^*_{\theta}M^{\otimes l}$, and a \emph{$(k,l)$-tensor field} is a smooth section of the bundle $TM^{\otimes k}\otimes T^*M^{\otimes l}$, that is, a smooth assignment of a tensor $T(\theta)\in T_{\theta}M^{\otimes k}\otimes T^*_{\theta}M^{\otimes l}$ to each $\theta\in M$. The space of such sections will be denoted $\Gamma(TM^{\otimes k}\otimes T^*M^{\otimes l})$.

Whether or not $M$ carries a Riemannian metric, the derivative of a map $f:M\rightarrow\RB$ makes sense as a map $D_Mf:TM\rightarrow T\RB$ which is fibrewise-linear in the sense that its restriction $D_Mf(\theta):T_{\theta}M\rightarrow T_{f(\theta)}\RB$ to each fibre is a linear map. The derivative $D_Mf$ can in this sense be thought of as a $(0,1)$-tensor field. Higher derivatives $D^k_Mf$ of $f$ are thus derivatives of tensor fields, and can be defined using a Riemannian metric. Specifically, assuming that $M$ has a Riemannian metric, there is a distinguished derivative operator $D_M:\Gamma(TM^{\otimes k}\otimes T^*M^{\otimes l})\rightarrow\Gamma(TM^{\otimes k}\otimes T^*M^{\otimes l+1})$ defined for all $(k,l)\in\NB\cup\{0\}$ called the \emph{Levi-Civita connection}. The additional $T^*M$-slot gained by applying $D_M$ to a tensor $T$ is to be thought of as a \emph{direction} in which $T$ is differentiated by $D_M$. In particular, given $T\in\Gamma(TM^{\otimes k}\otimes T^*M^{\otimes l})$ and $X\in\Gamma(TM)$, the notation $D_MT[X]\in\Gamma(TM^{\otimes k}\otimes T^*M^{\otimes l})$ is the \emph{directional derivative of $T$ in the direction $X$}. Note that when $M = \RB^p$, $D:=D_{\RB^p}$ is the usual multivariate derivative, and the higher derivatives $D^k:=D^k_{\RB^p}$ coincide with the usual higher-order derivatives in Euclidean space. If $g:M\rightarrow\RB$ is a smooth, scalar-valued function, then $\nabla^k_Mg\in\Gamma(TM\otimes T^*M^{\otimes k-1})$ will be used to denote $D^k_Mg$ with one of the $T^*M$-slots dualised to a $TM$-slot as in \eqref{eq:dualise}. In particular, $\nabla_Mg$ and $\nabla_M^2g$ are the \emph{Riemannian gradient} and \emph{Riemannian Hessian} respectively; they coincide with the usual gradient $\nabla g$ and Hessian $\nabla^2g$ respectively when $M = \RB^p$.

A special case of the Levi-Civita connection is that inherited by a submanifold. If $M$ is a submanifold of a Riemannian manifold $N$, then the Levi-Civita connection associated to the induced metric \eqref{eq:inducedmetric} is $D_M:=P_{TM}D_N$, where $P_{TM}$ is the orthogonal projection $TN\rightarrow TM$ induced by the metric on $N$.

Given $(\theta,v)\in TM$, classical ordinary differential equation (ODE) theory guarantees the existence of a unique $M$-valued solution $\gamma$ to the $TM$-valued second order ODE
\begin{align}
    (D_M\dot{\gamma})[\dot{\gamma}] = 0,\qquad \gamma(0) = \theta,\quad \dot{\gamma}(0) = v\label{eq:geodesicequation}
\end{align}
on some interval $[0,\epsilon]$. This unique solution is called the \emph{geodesic} through $(\theta,v)$. In particular, for all $v\in T_{\theta}M$ sufficiently small, $\gamma(1)$ makes sense and is the value of the \emph{exponential map} $\exp_{\theta}(v)$. The exponential map $\exp_{\theta}:B_{T_{\theta}M}(0,R)\rightarrow M$ is invertible onto its image $\exp_{\theta}\big(B_{T_{\theta}M}(0,R)\big)\subset M$ for all $R>0$ sufficiently small, and its inverse, denoted $\log_{\theta}:\exp_{\theta}\big(B_{T_{\theta}M}(0,R)\big)\rightarrow B_{T_{\theta}M}(0,R)$, is called the logarithm. When $M = \RB^p$ with the Euclidean metric, $\exp_{\theta}(v)$ is defined for all $(\theta,v)\in T\RB^p$ and is equal to $\theta+v$. The \emph{length} of a geodesic $\gamma$ on $[0,b]$ is given by $\int_0^b\|\dot{\gamma}(t)\|dt$, and the \emph{geodesic distance} $d_M(\theta,\theta')$ between two points $\theta,\theta'\in M$ is the length of the shortest geodesic connecting $\theta$ to $\theta'$. Associated to any geodesic $\gamma:[0,b]\rightarrow M$ and any $t\in [0,b]$ is a unique orthogonal map $\Pi(\gamma)_t:T_{\gamma(0)}M\rightarrow T_{\gamma(t)}M$ such that $\Pi(\gamma):t\mapsto \Pi(\gamma)t$ satisfies the ODE
\begin{align}
    \big(D_M\Pi(\gamma)\big)[\dot{\gamma}] = 0,\qquad \Pi(\gamma)_0 = I_{T_{\gamma(0)}M}.
\end{align}
This map $\Pi(\gamma)_t$ is called the \emph{parallel transport} morphism, and extends to a map $T_{\gamma(0)}M^{\otimes k}\otimes T_{\gamma(0)}^*M^{\otimes l}\rightarrow T_{\gamma(t)}M^{\otimes k}\otimes T_{\gamma(t)}^*M^{\otimes l}$ of arbitrary tensors. In particular, if $\theta,\theta'\in M$ admit a unique geodesic $\gamma:[0,b]\rightarrow M$ with $\gamma(0) = \theta$ and $\gamma(b) = \theta'$, $\Pi(\gamma)_b:T_{\theta}M\rightarrow T_{\theta'}M$ will be denoted simply $\Pi_{\theta\rightarrow\theta'}$.

Tensor fields admit \emph{covariant Taylor expansions} on $M$ defined as follows. Fix $\theta\in M$ and suppose that $\theta'\in M$ is sufficiently close to $\theta$ that there is a unique geodesic $\gamma:[0,1]\rightarrow M$ such that $\gamma(0) = \theta$ and $\gamma(1) = \theta'$. Consider then the function $\widetilde{T}(t):=\Pi(\gamma)_{t}^{-1}T(\gamma(t))\in T_{\theta}M^{\otimes k}\otimes T^*_{\theta}M^{\otimes l}$. Taylor's theorem in a single variable implies that
\begin{align}
    \widetilde{T}(1) = \widetilde{T}(0) + D\widetilde{T}(0) + \frac{1}{2}D^2\widetilde{T}(0) + \dots.\label{eq:covtaylor1}
\end{align}
Since $\big(D_M\Pi(\gamma)\big)[\dot{\gamma}] = 0$ and $(D_M\dot{\gamma})[\dot{\gamma}] = 0$, one has:
\begin{align}
    D^k\widetilde{T}(0) &= D^k(\Pi(\gamma)^{-1}\circ T\circ \gamma)(t)|_{t=0}\\ &= \big(\Pi(\gamma)_t^{-1}\, D^k_MT(\gamma(t))[\dot{\gamma}(t)^{\otimes k}]\big)|_{t=0}\\ &= D^k_MT(\theta)[\log_{\theta}(\theta')^{\otimes k}].
\end{align}
Substituting these formulae into \eqref{eq:covtaylor1}yields the \emph{covariant Taylor expansion}
\begin{align}
    T(\theta') = \Pi_{\theta\rightarrow\theta'}\bigg(T(\theta) + D_MT(\theta)[\log_{\theta}(\theta')] + \frac{1}{2}D_M^2T(\theta)[\log_{\theta}(\theta')^{\otimes 2}]+O\big(\|\log_{\theta}(\theta')\|^3\big)\bigg).\label{eq:covtaylor2}
\end{align}

\subsection{Riemannian geometry of submanifolds and hypersurfaces in Euclidean space}

When $M$ is a hypersurface Euclidean space $\RB^p$, with normal vector field $n$, the Hessian of a smooth function takes the following form.

\begin{lemma}\label{lem:riemhess}
    Let $M$ be a smooth hypersurface in $\RB^p$ with normal vector field $n$. The Riemannian Hessian $\nabla^2_M\lambda$ of a smooth map $\lambda:\RB^p\rightarrow\RB$ along $M$ is the $p\times p$ matrix-valued function
    \begin{equation}
        P_{TM}(\nabla^2\lambda -\langle n,\nabla\lambda\rangle Dn)P_{TM},
    \end{equation}
    on $M$.
\end{lemma}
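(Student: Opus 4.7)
The plan is to compute $\nabla^2_M\lambda$ directly from its definition as the Levi-Civita covariant derivative of the Riemannian gradient $\nabla_M\lambda$, exploiting the submanifold formula $D_M = P_{TM}\circ D$ stated in the appendix. The whole calculation is essentially a one-line application of the product rule, followed by discarding normal components.

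First I would write the Riemannian gradient in the ambient form
\begin{equation*}
    \nabla_M\lambda = P_{TM}\nabla\lambda = \nabla\lambda - \langle n,\nabla\lambda\rangle\, n,
\end{equation*}
which makes sense locally as a smooth vector field on a neighbourhood of $M$ in $\RB^p$ once $n$ is extended to such a neighbourhood in any smooth way (the value of $D_M(\nabla_M\lambda)$ on tangent vectors does not depend on the extension, because only derivatives along $M$ are taken). Next, for any $X\in T_\theta M$ I would compute the ambient directional derivative via the Leibniz rule:
\begin{equation*}
    D(\nabla_M\lambda)[X] = \nabla^2\lambda\cdot X - \bigl(D\langle n,\nabla\lambda\rangle[X]\bigr)n - \langle n,\nabla\lambda\rangle\, Dn\cdot X.
\end{equation*}

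Then I would apply the orthogonal projection $P_{TM}$ on the left and use that $P_{TM}n=0$ to kill the middle term, giving
\begin{equation*}
    D_M(\nabla_M\lambda)[X] = P_{TM}\bigl(\nabla^2\lambda - \langle n,\nabla\lambda\rangle\, Dn\bigr)X.
\end{equation*}
Since $X\in T_\theta M = \operatorname{Im}(P_{TM}|_\theta)$, inserting a harmless $P_{TM}$ on the right and interpreting the result as a self-adjoint endomorphism of $\RB^p$ that annihilates the normal direction produces the claimed matrix expression $P_{TM}(\nabla^2\lambda - \langle n,\nabla\lambda\rangle Dn)P_{TM}$.

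The only nontrivial point — and the place where one must be a little careful — is the justification of the identity $D_M(\nabla_M\lambda) = P_{TM}\circ D(\nabla_M\lambda)$ for the covariant derivative of a vector field that is only defined intrinsically on $M$. I would handle this by invoking the standard fact recalled in Appendix~\ref{app:diffgeom} that the induced Levi-Civita connection on a Euclidean submanifold is precisely the tangential part of the ambient flat connection, applied to any smooth ambient extension. A second small check worth mentioning is that $Dn\cdot X$ is automatically tangent for $X\in T_\theta M$ (since $\|n\|^2\equiv 1$ implies $\langle Dn\cdot X,n\rangle=0$), so the outer $P_{TM}$ in the stated formula is cosmetic on the $Dn$ term; it is genuinely needed only to kill the normal component of $\nabla^2\lambda\cdot X$ and to make the expression a well-defined $p\times p$ matrix representing the Riemannian Hessian as an endomorphism of $\RB^p$.
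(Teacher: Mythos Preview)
Your proposal is correct and is essentially the paper's own argument: both compute $P_{TM}D(P_{TM}\nabla\lambda)P_{TM}$ by the product rule and then discard the terms carrying a factor of $n$ on the left. Your organisation is marginally cleaner in that you leave $D\langle n,\nabla\lambda\rangle$ unexpanded (it multiplies $n$ and is killed by $P_{TM}$), whereas the paper expands it fully into $(Dn)^T\nabla\lambda + n^T\nabla^2\lambda$ before cancelling; but this is cosmetic, not a different route.
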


\begin{proof}
    When $M$ is an embedded submanifold of $\RB^p$, the action of the Levi-Civita connection on a function $\lambda$ (i.e., the Riemannian gradient operator) is given by $P_{TM}\nabla\lambda$, where $\nabla$ is the ordinary Euclidean gradient. The action of the Levi-Civita connection on a vector field $X:\RB^p\rightarrow\RB^p$ is given by $P_{TM}\,D X\,P_{TM}$. Thus:
    \begin{align}
        \nabla^2_M\lambda &= P_{TM}D\big(P_{TM}\nabla\lambda)P_{TM} = P_{TM}(\nabla^2\lambda-D n\langle n,\nabla\lambda\rangle-n((D n)^T\nabla\lambda)^T-nn^T\nabla^2\lambda)P_{TM}\\&=P_{TM}(P_{TM}\nabla^2\lambda-\langle n,\nabla\lambda\rangle D n-n((Dn)^T\nabla\lambda)^T)P_{TM}\\&=P_{TM}(\nabla^2\lambda -\langle n, \nabla\lambda\rangle Dn)P_{TM},
    \end{align}
    where the final line follows from
    \begin{equation}
        P_{TM}(n v^T) = (I-nn^T)(nv^T) = nv^T-nv^T = 0
    \end{equation}
    for any vector $v$.
\end{proof}

We will be particularly interested in the case of a hypersurface $M = f^{-1}\{y\}\subset\RB^p$, where $f:\RB^p\rightarrow\RB$ is a smooth submersion, with metric $g$ inherited from the ambient Euclidean space. Specifically, when $n$ is the unit normal vector field defined by
\begin{equation}
    n(\theta):=\frac{\nabla f(\theta)}{\|\nabla f(\theta)\|},\qquad \theta\in M,
\end{equation}
the Riemannian metric $g$ on $M$ is defined by
\begin{equation}
    g(\theta):=P_{TM}(\theta):=I-n(\theta)n(\theta)^T,\qquad \theta\in M
\end{equation}
where we use $P_{TM}(\theta)$ to denote the projection onto $T_{\theta}M$. We will be particularly interested in computing the Riemannian Hessian of the function $\lambda:=\|\nabla f\|^2$, which admits the following formula entirely in terms of $f$ and its derivatives.

\begin{lemma}\label{lem:riemhess1}
    For a hypersurface $M:=f^{-1}\{y\}$ of Euclidean space defined by a smooth function $f:\RB^p\rightarrow\RB$, the Riemannian Hessian of $\lambda:=\|\nabla f\|^2$ is given by
    \begin{equation}
        \nabla^2_M\lambda = 2 P_{TM}\big((\nabla^3f\nabla f+(\nabla^2 f)^2)-\langle n,\nabla^2fn\rangle\nabla^2f\big)P_{TM}
    \end{equation}
    where $\nabla^3 f$ is regarded as a map $\RB^p\rightarrow \RB^{p\times p}$, or as the map sending a direction vector $v$ to the directional derivative of the Hessian matrix $\nabla^2 f$ in the direction $v$.
\end{lemma}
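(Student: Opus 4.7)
The plan is to apply Lemma \ref{lem:riemhess} with this particular choice of $\lambda = \|\nabla f\|^2$ and carefully compute each of the three ingredients $\nabla^2\lambda$, $\langle n,\nabla\lambda\rangle$ and $Dn$, then observe that the terms in $Dn$ containing $n$ as a left factor are killed by the outer projections $P_{TM}$.

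First, I would use the product rule to compute $\nabla\lambda = 2\nabla^2 f\,\nabla f$, and differentiate once more to obtain $\nabla^2\lambda = 2\bigl(\nabla^3 f\,\nabla f + (\nabla^2 f)^2\bigr)$, where $\nabla^3 f\,\nabla f$ denotes the directional derivative of the Hessian matrix $\nabla^2 f$ in the direction $\nabla f$ (this is precisely the convention fixed in the lemma statement) and $(\nabla^2 f)^2$ is matrix multiplication. Since $\nabla f = \|\nabla f\|\,n$ along $M$, the inner product $\langle n,\nabla\lambda\rangle$ simplifies to $2\|\nabla f\|\,\langle n,\nabla^2 f\, n\rangle$.

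Next, I would compute $Dn$ from $n = \nabla f/\|\nabla f\|$ by the quotient rule, obtaining
\begin{equation}
Dn = \frac{1}{\|\nabla f\|}\nabla^2 f - \frac{1}{\|\nabla f\|^2}\nabla f\,\bigl(\nabla\|\nabla f\|\bigr)^T.
\end{equation}
The second term has $\nabla f$ (equivalently $n$, up to a scalar along $M$) as its left factor, so when sandwiched between $P_{TM}$'s it vanishes because $P_{TM}n = 0$. Hence $P_{TM}\,Dn\,P_{TM} = \|\nabla f\|^{-1}\,P_{TM}\nabla^2 f\,P_{TM}$, and therefore
\begin{equation}
P_{TM}\,\langle n,\nabla\lambda\rangle\,Dn\,P_{TM} = 2\,\langle n,\nabla^2 f\,n\rangle\,P_{TM}\nabla^2 f\,P_{TM}.
\end{equation}

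Finally, substituting $\nabla^2\lambda$ and the previous display into the formula of Lemma \ref{lem:riemhess} yields the claimed expression. The only mild obstacle is keeping track of what $\nabla^3 f\,\nabla f$ means as a $p\times p$ matrix (the convention is clarified in the lemma statement) and recognising that the $\nabla f\,(\nabla\|\nabla f\|)^T$ piece of $Dn$ is annihilated by the projections; no deeper ingredient than the product and quotient rules is required.
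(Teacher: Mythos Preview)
Your proposal is correct and follows essentially the same route as the paper: apply Lemma \ref{lem:riemhess}, compute $\nabla\lambda = 2\nabla^2 f\,\nabla f$ and $\nabla^2\lambda = 2(\nabla^3 f\,\nabla f + (\nabla^2 f)^2)$, then handle $Dn$. The only cosmetic difference is that the paper simplifies $Dn$ directly to $P_{TM}\nabla^2 f/\|\nabla f\|$ (which is exact, since the quotient-rule correction equals $-nn^T\nabla^2 f/\|\nabla f\|$), whereas you leave $Dn$ in raw quotient-rule form and let the outer left $P_{TM}$ annihilate the rank-one piece; the arithmetic is identical.
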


\begin{proof}
    By Lemma \ref{lem:riemhess}, we have that
    \begin{equation}
        \nabla^2_M\lambda = P_{TM}\big(\nabla^2\lambda - \langle n,\nabla\lambda\rangle Dn\big)P_{TM},
    \end{equation}
    so it remains only to compute the various quantities. The following identities are elementary to verify:
    \begin{equation}
        \nabla\lambda = 2\nabla^2f\nabla f,
    \end{equation}
    \begin{equation}
        \nabla^2\lambda = 2\big(\nabla^3 f\nabla f+(\nabla^2f)^2\big),
    \end{equation}
    and
    \begin{equation}
        Dn = P_{TM}\frac{\nabla^2f}{\|\nabla f\|}.
    \end{equation}
    The result now follows.
\end{proof}

Our next lemma allows us to lower-bound the \emph{convexity radius} $\text{conv}(\theta)$ at $\theta\in M$ which is the largest $r>0$ such that the geodesic ball $B_M(\theta,r)$ is geodesically convex. This will be used in verifying Assumption \ref{ass:convex} for multilayer scalar factorisation. If $M$ is a hypersurface in $\RB^p$ with normal vector field $n$, then the \emph{second fundamental form} of $M$ is the map $h:TM\otimes TM\rightarrow\RB$ defined by
\begin{align}
    h(\theta)[v,w]:=-\langle Dn(\theta)[v],w\rangle
\end{align}
for all $\theta\in M$ and $v,w\in T_{\theta}M$.

\begin{lemma}\label{lem:convexityradius}
    Let $M$ be a smooth hypersurface in $\RB^p$ with normal vector field $n$, and fix $\theta_*\in M$. Assume that for all $R>0$, there exists $C_R>0$ such that
    \begin{align}
        |h(\theta)[u,v]|\leq C_R
    \end{align}
    for all $\theta\in B_M(\theta_*,2R)$ and all orthonormal vectors $u,v\in T_{\theta}M$. Then for all $R>0$, one has
    \begin{align}
        \mathrm{conv}(\theta_*)\geq \frac{1}{2}\min\bigg\{\frac{\pi}{C_R},R\bigg\}.
    \end{align}
\end{lemma}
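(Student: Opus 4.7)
The bound is a classical consequence of comparison theorems in Riemannian geometry, specialised to the hypersurface setting, and the plan is to proceed in three stages. The first stage is to convert the assumed $C^0$-bound on the second fundamental form into a pointwise upper bound on the sectional curvature of $M$. By the Gauss equation for a hypersurface, for any orthonormal $u, v \in T_\theta M$ with $\theta \in B_M(\theta_*, 2R)$ one has $K(u, v) = h(u,u)h(v,v) - h(u,v)^2$; combined with the assumed bound $|h(u,v)|\leq C_R$, this yields $K\leq C_R^2$ throughout $B_M(\theta_*, 2R)$.

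The second stage is to invoke Rauch's comparison theorem to rule out conjugate points along any unit-speed geodesic of length less than $\pi/C_R$ that remains inside $B_M(\theta_*, 2R)$. The comparison is made against the round sphere of radius $1/C_R$ whose constant sectional curvature $C_R^2$ matches the upper bound obtained in the first stage, and on which the first conjugate time is exactly $\pi/C_R$.

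The third stage is, for any $\rho \leq \frac{1}{2}\min\{\pi/C_R, R\}$, to verify that $B_M(\theta_*, \rho)$ is geodesically convex by establishing two properties: (a) any two points $\theta_1, \theta_2 \in B_M(\theta_*, \rho)$ are joined by a unique minimising geodesic $\gamma$, and (b) the image of $\gamma$ lies in $B_M(\theta_*, \rho)$. Property (a) follows from the absence of conjugate points together with a Klingenberg-type argument ruling out a second, distinct joining geodesic; in the hypersurface setting this can be streamlined using the comparison $\|\theta_1-\theta_2\|_{\RB^p}\geq d_M(\theta_1,\theta_2) - O\big(C_R^2\, d_M(\theta_1,\theta_2)^3\big)$ between intrinsic and extrinsic distances, which prevents the closure of short loops inside the ball. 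Property (b) would follow from a Hessian comparison argument showing that $t\mapsto d_M(\gamma(t), \theta_*)^2$ is convex along $\gamma$, using $K\leq C_R^2$ together with $\rho<\pi/(2C_R)$.

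The main obstacle will be property (a). Absence of conjugate points is immediate from Rauch, but globally ruling out a second minimising geodesic between two points in $B_M(\theta_*, \rho)$ requires more care: this is the Klingenberg-type step where the extrinsic hypersurface structure and the bound on $h$ become essential to transfer the comparison from intrinsic to ambient geometry. The factor of $\tfrac{1}{2}$ in the statement is exactly the classical loss incurred in passing from a bound on the injectivity-radius-like quantity $\min\{R, \pi/C_R\}$ to the convexity radius, via the requirement that geodesics joining points in $B_M(\theta_*,\rho)$ have length at most $2\rho$ and so remain in the region where the curvature estimate applies.
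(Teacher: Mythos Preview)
Your overall architecture matches the paper's: bound sectional curvature via the Gauss equation, control the injectivity radius, then pass to convexity with the factor $\tfrac{1}{2}$. Stages one and two are fine, and your property~(b) via Hessian comparison is the standard argument underlying the Whitehead convexity theorem.

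The gap is precisely where you flag it: ruling out short geodesic loops. Your proposed intrinsic--extrinsic distance comparison $\|\theta_1-\theta_2\|_{\RB^p}\geq d_M(\theta_1,\theta_2) - O\big(C_R^2\, d_M(\theta_1,\theta_2)^3\big)$ does not close the argument. For a geodesic loop the endpoints coincide, so both sides vanish; for a cut-locus pair connected by two distinct minimising geodesics of equal length, the inequality applied to either arc is tautological. A local Taylor estimate on chord-versus-arc length cannot furnish the global obstruction you need here.

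The paper's resolution is Fenchel's theorem. A geodesic $\gamma$ of $M$, regarded as a curve in $\RB^p$, has Euclidean curvature $\big|h(\gamma)[\dot\gamma,\dot\gamma]\big|\leq C_R$ while it stays in $B_M(\theta_*,2R)$. If the shortest geodesic loop at $\theta\in B_M(\theta_*,R)$ has length $L(\theta)<2R$, it lies entirely in $B_M(\theta_*,2R)$, and Fenchel gives
\[
2\pi\;\leq\;\int_0^{L(\theta)}\big|h[\dot\gamma,\dot\gamma]\big|\,dt\;\leq\;C_R\, L(\theta),
\]
hence $\tfrac{1}{2}L(\theta)\geq \pi/C_R$. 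Feeding this into Klingenberg's formula $\mathrm{inj}(\theta)\geq\min\{\pi/C_R,\tfrac{1}{2}L(\theta)\}$ yields $\mathrm{inj}(\theta)\geq\min\{\pi/C_R,R\}$ uniformly on $B_M(\theta_*,R)$, after which the standard convexity theorem (the paper cites Petersen, Theorem~6.4.8) delivers geodesic convexity of the half-radius ball directly, absorbing your properties~(a) and~(b) in one step. So the hypersurface structure is indeed the key, as you intuited, but through Fenchel rather than a distance expansion.
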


\begin{proof}
    The bound follows from a well-known relationship between the convexity radius and the \emph{injectivity radius}, which, at a point $\theta\in M$, is the largest $r>0$ such that $\exp_{\theta}:B_{T_{\theta}M}(0,r)\rightarrow M$ is injective. We thus first bound the injectivity radius. Fix $R>0$. By the Gauss equation and the hypothesis, at any point $\theta\in B(\theta_*,2R)$, the sectional curvature $K(\theta)$ of $M$ satisfies
    \begin{align}
        |K(\theta)[u,v]| = \big|h(\theta)[u,u]\,h(\theta)[v,v] - h(\theta)[u,v]^2\big|\leq C_R^2
    \end{align}
    for all unit, orthogonal pairs $u,v\in T_{\theta}M$. Consequently, by Klingenberg's formula \cite[Lemma 6.4.7]{petersen}, the injectivity radius $\text{inj}(\theta)$ at $\theta$, namely the largest value of $r$ such that $\exp_{\theta}:B_{T_{\theta}M}(0,r)\rightarrow M$ is injective, satisfies
    \begin{align}
        \text{inj}(\theta)\geq\min\bigg\{\frac{\pi}{C_R},\frac{1}{2}L(\theta)\bigg\},
    \end{align}
    where $L(\theta)$ is the length of the shortest geodesic loop based at $\theta$. If $L(\theta)\geq 2R$, then this simply yields $\text{inj}\geq\min\{\pi/C_R,R\}$. If $L(\theta)<2R$, then the shortest geodesic loop $\gamma:[0,L(\gamma)]\rightarrow M$ at $\theta$ is contained entirely in $B(\theta_*,2R)$. Fenchel's theorem then combines with the hypothesis to give
    \begin{align}
        C_R\geq \int_0^{L(\theta)}\big|h(\gamma(t))[\dot{\gamma}(t),\dot{\gamma}(t)]\big|dt\geq \frac{2\pi}{L(\gamma)},
    \end{align}
    so that $(1/)L(\theta)\geq \pi/C_R$. Consequently, one has the bound
    \begin{align}
        \inf_{\theta\in B(\theta_*,R)}\text{inj}(\theta)\geq\bigg\{\frac{\pi}{C_R},R\bigg\}
    \end{align}
    in general.

    Now, setting
    \begin{align}
        r_R:=\frac{1}{2}\min\bigg\{\frac{\pi}{C_R},R\bigg\},
    \end{align}
    one sees that the hypotheses of \cite[Theorem 6.4.8]{petersen} hold, implying that $B(\theta_*,r_R)$ is geodesically convex.
\end{proof}

\section{Case study: deep scalar factorisation}

In this section, we verify that all of our assumptions are satisfied for deep scalar factorisation, where $f:\RB^p\rightarrow\RB$ is defined by
\begin{align}
    f(\theta_1,\dots,\theta_p):=\theta_1\cdots\theta_p,\qquad \theta:=(\theta_1,\dots,\theta_p)\in\RB^p.
\end{align}
For $\theta\in\RB^p$ having all entries nonzero, it will be convenient to denote
\begin{align}
    v(\theta):=(\theta_1^{-1},\dots,\theta_p^{-1}).\label{eq:v}
\end{align}
We first demonstrate that Assumption \ref{ass:overparametrisation} is satisfied.

\begin{proposition}\label{prop:overparametrisation}
    Any $y\neq 0$ is a regular value of $f$. Consequently, Assumption \ref{ass:overparametrisation} is satisfied.
\end{proposition}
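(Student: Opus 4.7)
The plan is to verify the three conditions in the definition of a regular value given at the end of Section 3.1: (i) $f^{-1}\{y\}$ is nonempty, (ii) $f$ is $C^\infty$ in a neighbourhood of $f^{-1}\{y\}$, and (iii) $Df(\theta)$ is surjective for every $\theta \in f^{-1}\{y\}$.

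Condition (i) is immediate: the point $\theta = (y, 1, \ldots, 1)$ lies in $f^{-1}\{y\}$. Condition (ii) is also immediate since $f$ is a polynomial, hence smooth (indeed analytic) on all of $\mathbb{R}^p$.

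The main content is condition (iii). Since $f$ is scalar-valued, surjectivity of $Df(\theta)\colon T_\theta \mathbb{R}^p \to T_{f(\theta)} \mathbb{R}$ is equivalent to $\nabla f(\theta) \neq 0$. A direct computation gives
\begin{equation}
    \frac{\partial f}{\partial \theta_i}(\theta) = \prod_{j \neq i} \theta_j.
\end{equation}
Now the key observation is that any $\theta \in f^{-1}\{y\}$ satisfies $\theta_1 \cdots \theta_p = y \neq 0$, which forces each coordinate $\theta_i$ to be nonzero. Hence $\prod_{j \neq i} \theta_j = y/\theta_i \neq 0$ for every $i$, so $\nabla f(\theta)$ has every component nonzero and in particular is nonzero. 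This establishes surjectivity of $Df(\theta)$, completing the verification of all three conditions.

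There is no real obstacle; the only thing to be careful about is noting that the nonvanishing of $y$ propagates to nonvanishing of every coordinate along $M = f^{-1}\{y\}$, which is what makes each partial derivative nonzero. The final sentence of the proof just invokes this to conclude that Assumption \ref{ass:overparametrisation} holds, since analyticity of $f$ on a neighbourhood of $M$ (in fact on all of $\mathbb{R}^p$) is clear.
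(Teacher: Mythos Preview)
Your proof is correct and follows essentially the same approach as the paper: both observe that $y\neq0$ forces every coordinate of $\theta\in f^{-1}\{y\}$ to be nonzero, so that $\nabla f(\theta)$ (which the paper writes compactly as $y\,v(\theta)$ with $v$ as in \eqref{eq:v}) is nonzero. Your version is slightly more thorough in explicitly checking nonemptiness and smoothness, but the core argument is identical.
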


\begin{proof}
    Since $y\neq 0$, any point $\tpa\in f^{-1}\{y\}$ has all coordinates being nonzero. Hence $Df(\tpa) = y\, v(\tpa)\neq 0$ making $y$ a regular value of $f$.
\end{proof}

The pre-image manifold $f^{-1}\{y\}$ of any $y\neq 0$ has several connected components. Without loss of generality, we will assume from hereon that $y>0$ and denote by $M$ the component of $f^{-1}\{y\}$ contained in the positive orthant. We next demonstrate that Assumption \ref{ass:orthogonal} is satisfied.

\begin{proposition}\label{prop:orthogonal}
    Set $\tpa_*:=y^{1/p}1_p$, where $1_p$ is the vector of 1s. Then the line spanned by the normal vector $n(\tpa_*)$ is invariant under gradient descent on $\ell = (1/2)(f-y)^2$ for any $\eta$, so that Assumption \ref{ass:orthogonal} is satisfied.
\end{proposition}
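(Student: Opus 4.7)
The plan is to exploit the permutation symmetry of the scalar factorisation map $f(\theta_1,\dots,\theta_p)=\theta_1\cdots\theta_p$ to show that the gradient $\nabla\ell$ is parallel to $1_p$ at every point of the line $\mathcal{L}=\{c\cdot 1_p:c\in\RB\}$, from which invariance under GD follows immediately.

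First I would pin down the normal direction at $\tpa_*$. Since $\partial_i f(\theta)=\prod_{j\neq i}\theta_j$, one has $\nabla f(\tpa_*) = y^{1-1/p}\,1_p$, so $n(\tpa_*) = 1_p/\sqrt{p}$ and the line spanned by $n(\tpa_*)$ (which passes through the origin and through $\tpa_*$) is precisely $\mathcal{L}$.

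Next I would evaluate $\nabla\ell$ along $\mathcal{L}$. Writing $\theta(c):=c\cdot 1_p$, the chain rule gives
\begin{equation}
    \nabla\ell(\theta(c)) = (f(\theta(c))-y)\,\nabla f(\theta(c)) = (c^p-y)\cdot c^{p-1}\,1_p,
\end{equation}
where the second factor uses $\partial_i f(c\cdot 1_p) = c^{p-1}$ for every $i$, a direct consequence of the symmetric product form of $f$. In particular $\nabla\ell(\theta(c))$ is a scalar multiple of $1_p$ for every $c\in\RB$, so the GD map
\begin{equation}
    \theta(c)\mapsto \theta(c) - \eta\,\nabla\ell(\theta(c)) = \bigl(c-\eta c^{p-1}(c^p-y)\bigr)\,1_p
\end{equation}
sends $\mathcal{L}$ into itself for every $\eta$. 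This gives invariance of $\mathcal{L}$ and hence of any line segment in $\mathcal{L}$ through $\tpa_*$, verifying Assumption \ref{ass:orthogonal}.

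There is no real obstacle here; the entire argument reduces to the observation that the permutation action of $S_p$ on coordinates fixes both $\ell$ and $\mathcal{L}$, forcing $\nabla\ell|_{\mathcal{L}}$ to lie in the unique $S_p$-fixed line $\RB\cdot 1_p$. The only mild care needed is to confirm that $\mathcal{L}$ coincides with the line spanned by $n(\tpa_*)$, which is immediate from the gradient computation at $\tpa_*$ above.
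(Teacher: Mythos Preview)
Your proposal is correct and is essentially the same argument as the paper's: both compute $\nabla\ell$ at an arbitrary point $c\,1_p$ on the line spanned by $n(\tpa_*)=p^{-1/2}1_p$ and observe it is a scalar multiple of $1_p$, whence GD preserves the line. Your added remark on the $S_p$-symmetry is a nice conceptual gloss but not needed for the proof.
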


\begin{proof}
    At $\tpa_*:=y^{1/p}1_p$, one has $n(\tpa_*) = p^{-1/2}1_p$. For any $\alpha\in\RB$, one sees that
    \begin{align}
        \nabla\ell(\alpha\,n(\tpa_*)) &= \big(f(\alpha p^{-1/2}1_p)-y\big)\nabla f(\alpha p^{-1/2}1_p)\\&=\big(f(\alpha p^{-1/2}1_p)-y\big)\,\alpha^{p-1}\,p^{-(p-1)/2}\,1_p
    \end{align}
    is a multiple of $n(\tpa_*)$, implying that the span of $n(\tpa_*)$ is invariant under gradient descent on $\ell$.
\end{proof}

We will continue to denote $\tpa_*:=y^{1/p}1_p$, which is often referred to in the literature as the ``balanced" solution. We next come to Assumption \ref{ass:generic}.

\begin{proposition}\label{prop:generic}
   Assumption \ref{ass:generic} holds along $M$. In particular, at $\tpa_*$,
   \begin{align}
       c(2/\lambda(\tpa_*),\tpa_*) = \bigg(\frac{32}{p^3}\binom{p}{2}^2 - \frac{8}{p^2}\binom{p}{3}\bigg)y^{-2/p},
   \end{align}
   where we use the convention $\binom{p}{3} = 0$ if $p <3$.
\end{proposition}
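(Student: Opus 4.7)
The plan is to reduce the computation of $D^3\ell[n^{\otimes 3}](\tpa)$ and $D^4\ell[n^{\otimes 4}](\tpa)$ to a one-dimensional calculation along the normal line $t\mapsto\tpa+tn(\tpa)$, then exploit the product form of $f$ to re-express the result in terms of elementary symmetric polynomials. Setting $g(t):=f(\tpa+tn(\tpa))-y$ so that $g(0)=0$ and $\ell(\tpa+tn(\tpa))=g(t)^2/2$, repeated differentiation and evaluation at $t=0$ kills every term containing the factor $g(0)$, leaving
\begin{equation}
    D^3\ell[n^{\otimes 3}](\tpa)=3\,g'(0)g''(0),\qquad D^4\ell[n^{\otimes 4}](\tpa)=3\,g''(0)^2+4\,g'(0)g'''(0).
\end{equation}

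For $\tpa=(\theta_1,\dots,\theta_p)\in M$ I would then use $\nabla f(\tpa)=y\,v(\tpa)$ (a direct consequence of Proposition \ref{prop:overparametrisation}) to write $n(\tpa)_i=(\theta_i\|v(\tpa)\|)^{-1}$, and observe the key factorisation
\begin{equation}
    g(t)+y=\prod_{i=1}^p\bigg(\theta_i+\frac{t}{\theta_i\|v(\tpa)\|}\bigg)=y\prod_{i=1}^p(1+r_it),\qquad r_i:=\frac{1}{\theta_i^2\|v(\tpa)\|}.
\end{equation}
Hence $g^{(k)}(0)=k!\,y\,e_k(r)$, where $e_k$ denotes the $k$-th elementary symmetric polynomial; moreover $e_1(r)=\|v(\tpa)\|$ and $\lambda(\tpa)=y^2\|v(\tpa)\|^2$. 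Substituting these identities into the definition of $c$ at $\eta=2/\lambda(\tpa)$ and cancelling the resulting factors of $y$ yields the clean expression
\begin{equation}
    c\big(2/\lambda(\tpa),\tpa\big)=\frac{32\,e_2(r)^2-8\,e_1(r)e_3(r)}{\|v(\tpa)\|^2}=\frac{8\big(4\,e_2(r)^2-e_1(r)e_3(r)\big)}{\|v(\tpa)\|^2}.
\end{equation}

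To establish positivity along $M$, the plan is to invoke Newton's inequality for positive reals $r_1,\dots,r_p>0$, which gives $e_2(r)^2\geq\frac{3(p-1)}{2(p-2)}\,e_1(r)e_3(r)$ when $p\geq 3$. Since $\frac{3(p-1)}{2(p-2)}>\frac{1}{4}$ for every $p\geq 3$, this already forces $4\,e_2(r)^2>e_1(r)e_3(r)$; for $p=2$ the quantity $e_3(r)$ vanishes while $e_2(r)=r_1r_2>0$, so the strict inequality holds automatically. Hence $c(2/\lambda(\tpa),\tpa)>0$ for every $\tpa\in M$, and continuity of $c(\cdot,\tpa)$ then supplies an open neighbourhood of $2/\lambda(\tpa)$ on which positivity persists, verifying Assumption \ref{ass:generic}.

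The final step is to specialise to $\tpa_*=y^{1/p}1_p$: all $\theta_i$ coincide there, giving $\|v(\tpa_*)\|^2=p\,y^{-2/p}$ and $r_i=p^{-1/2}y^{-1/p}$, hence $e_k(r)=\binom{p}{k}p^{-k/2}y^{-k/p}$. Plugging these into the displayed formula and collecting the $y^{-2/p}$ factor recovers the claimed value. The main conceptual obstacle is spotting the reparametrisation via the $r_i$, after which the symmetric function reformulation both isolates the product structure of $f$ and lines the expression up with Newton's inequality; the rest of the proof is routine bookkeeping.
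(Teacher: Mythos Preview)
Your proof is correct and reaches the same destination as the paper via the same structural idea: express $c(2/\lambda(\tpa),\tpa)$ in terms of elementary symmetric functions of the $\theta_i^{-2}$ and then invoke Newton's inequality. The computational route, however, is genuinely cleaner than the paper's. The paper works in $p$ variables throughout, expanding $D^k\ell$ via symmetric products $D^kf\odot D^jf$, writing out the partial derivatives $D^kf_{l_1\cdots l_k}$ explicitly, and summing to obtain $D^2f[n^{\otimes 2}]$ and $D^3f[n^{\otimes 3}]$ as sums over ordered tuples; only after this does it recognise the elementary symmetric means and apply Newton. Your one-dimensional reduction $g(t):=f(\tpa+tn(\tpa))-y$ together with the factorisation $g(t)+y=y\prod_i(1+r_it)$ collapses all of that bookkeeping into a single generating-function identity $g^{(k)}(0)=k!\,y\,e_k(r)$, so the symmetric polynomials appear immediately rather than being recognised a posteriori. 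What the paper's approach buys is that it never relies on the product structure of $f$ until the very end (the formula for $c$ in terms of $D^2f[n^{\otimes2}]$ and $D^3f[n^{\otimes3}]$ holds for any codimension~1 least squares problem), whereas your route exploits the multiplicative form of $f$ from the outset; for the purpose of this proposition, your packaging is the more economical one.
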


\begin{proof}
    Given symmetric, multilinear functionals $A,B$ taking $k$ and $l$ inputs respectively, let $A\odot B$ denote their symmetric product:
    \begin{align}
        (A\odot B)[v_1,\dots,v_{k+l}] = \frac{1}{(k+l)!}\sum_{\sigma\in\text{Perm}(k+l)}A[v_{\sigma(1)},\dots,v_{\sigma(k)}]B[v_{\sigma(k+1)},\dots,v_{\sigma(k+l)}],
    \end{align}
    where $\text{Perm}(k+l)$ is the group of permutations of $k+l$ elements. Since $\ell \equiv (1/2)(f-y)^2$, one has
    \begin{align}
        D\ell = (f-y)Df\Rightarrow D\ell|_M = 0,
    \end{align}
    \begin{align}
        D^2\ell = Df\odot Df + (f-y)D^2f\Rightarrow D^2\ell|_M = Df\odot Df
    \end{align}
    \begin{align}
        D^3\ell = 3D^2f\odot Df + (f-y)D^3f\Rightarrow D^3\ell|_M = 3D^2f\odot Df
    \end{align}
    \begin{align}
        D^4\ell = 4D^3f\odot Df + 3D^2f\odot D^2f + (f-y)D^4f\Rightarrow D^4\ell|_M = 4D^3f\odot Df + 3D^2f\odot D^2f.
    \end{align}
    One then sees that
    \begin{align}
        c(\eta,\tpa) &= \bigg(\frac{\eta}{2}D^3\ell[n^{\otimes 3}](\tpa)\bigg)^2 - \frac{\eta}{6}D^4\ell[n^{\otimes4}](\tpa)\\&=\bigg(\frac{3\eta}{2}D^2f[n^{\otimes 2}]Df[n]\bigg)^2(\tpa) - \frac{\eta}{6}\bigg(4D^3f[n^{\otimes 3}]Df[n] + 3(D^2f[n^{\otimes2}])^2\bigg)(\tpa)\\&=\bigg(\frac{9\eta^2}{4}\|Df(\tpa)\|^2-\frac{\eta}{2}\bigg)\big(D^2f(\tpa)[n(\tpa)^{\otimes 2}]\big)^2 - \frac{4\eta}{6}\|Df(\tpa)\|\,D^3f(\tpa)[n(\tpa)^{\otimes 3}].
    \end{align}
    By continuity, it suffices to prove the claim at $\eta = 2/\lambda(\tpa) =2/\|Df(\tpa)\|^2$; substituting this into the above yields
    \begin{align}
        c(2/\lambda(\tpa),\tpa) = \frac{8}{\|Df(\tpa)\|^2}D^2f(\tpa)[n(\tpa)^{\otimes 2}]^2 - \frac{8}{6\|Df(\tpa)\|}D^3f(\tpa)[n(\tpa)^{\otimes 3}].
    \end{align}
    One computes
    \begin{align}
        &Df_l(\tpa) = \frac{y}{\theta_l}\label{eq:Df},\\&D^2f_{l_1l_2}(\tpa) = (1-\delta_{l_1l_2})\frac{y}{\theta_{l_1}\theta_{l_2}}\label{eq:D2f},\\&D^3f_{l_1l_2l_3}(\tpa) = \begin{cases} (1-\delta_{l_1l_2})(1-\delta_{l_1l_3})(1-\delta_{l_2l_3})\frac{y}{\theta_{l_1}\theta_{l_2}\theta_{l_3}}&\text{ if $L\geq 3$}\\ 0\text{ if $L=2$}\end{cases}.\label{eq:D3f}
    \end{align}
    Consequently,
    \begin{align}
        D^2f(\tpa)[n(\tpa)^{\otimes 2}] &= \frac{1}{\|Df(\tpa)\|^2}D^2f(\tpa)[Df(\tpa)^{\otimes 2}]\\&=\frac{1}{\|Df(\tpa)\|^2}\sum_{l_1,l_2}(1-\delta_{l_1l_2})\frac{y}{\theta_{l_1}\theta_{l_2}}\frac{y}{\theta_{l_1}}\frac{y}{\theta_{l_2}}\\&=\frac{2}{\|Df(\tpa)\|^2}\sum_{l_1<l_2}\frac{y^3}{\theta_{l_1}^2\theta_{l_2}^2}\neq 0.
    \end{align}
    If $p = 2$, then, since $D^3f\equiv 0$, Assumption \ref{ass:generic} trivially holds for all $\tpa\in M$ and all $\eta$ close to $2/\lambda(\tpa)$. Let us assume then that $p\geq 3$. One has:
    \begin{align}
        D^3f(\tpa)[n(\tpa)^{\otimes 3}] &= \frac{1}{\|Df(\tpa)\|^3}D^3f(\tpa)[Df(\tpa)^{\otimes 3}]\\&=\frac{1}{\|Df(\tpa)\|^3}\sum_{l_1,l_2,l_3}(1-\delta_{l_1l_2})(1-\delta_{l_1l_3})(1-\delta_{l_2l_3})\frac{y}{\theta_{l_1}\theta_{l_2}\theta_{l_3}}\frac{y}{\theta_{l_1}}\frac{y}{\theta_{l_2}}\frac{y}{\theta_{l_3}}\\&=\frac{6}{\|Df(\tpa)\|^3}\sum_{l_1<l_2<l_3}\frac{y^4}{\theta_{l_1}^2\theta_{l_2}^2\theta_{l_3}^2}.
    \end{align}
    Thus one has
    \begin{align}
        c(2/\lambda(\tpa),\tpa) = \frac{32y^6}{\|Df(\tpa)\|^6}\bigg(\sum_{l_1<l_2}\frac{1}{\theta_{l_1}^2\theta_{l_2}^2}\bigg)^2 - \frac{8y^4}{\|Df(\tpa)\|^4}\sum_{l_1<l_2<l_3}\frac{1}{\theta_{l_1}^2\theta_{l_2}^2\theta_{l_3}^2}.
    \end{align}
    Substituting
    \begin{align}
        \|Df(\tpa)\|^2= \sum_{l=1}^p\frac{y^2}{\theta_{l}^2}
    \end{align}
    then yields
    \begin{align}
        c(2/\lambda(\tpa),\tpa) &= 32\bigg(\sum_{l_1<l_2}\theta_{l_1}^{-2}\theta_{l_2}^{-2}\bigg)^2\bigg(\sum_{l}\theta_l^{-2}\bigg)^{-3} - 8\bigg(\sum_{l_1<l_2<l_3}\theta_{l_1}^{-2}\theta_{l_2}^{-2}\theta_{l_3}^{-2}\bigg)\bigg(\sum_l\theta_l^{-2}\bigg)^{-2}\\&=\bigg(\sum_{l}\theta_l^{-2}\bigg)^{-3}\bigg(32\bigg(\sum_{l_1<l_2}\theta_{l_1}^{-2}\theta_{l_2}^{-2}\bigg)^2 - 8\bigg(\sum_{l_1<l_2<l_3}\theta_{l_1}^{-2}\theta_{l_2}^{-2}\theta_{l_3}^{-2}\bigg)\bigg(\sum_l\theta_l^{-2}\bigg)\bigg)\\&=(pS_1(\tpa))^{-3}\bigg(32\binom{p}{2}^2S_2(\tpa)^2 - 8p\binom{p}{3} S_1(\tpa)S_3(\tpa)\bigg),
    \end{align}
    where $S_i(\tpa)$ are the elementary symmetric means in the variables $\theta_l^{-2}$. Newton's inequality for the elementary symmetric means gives $S_2(\tpa)^2\geq S_1(\tpa)S_3(\tpa)$, so that $c(2/\lambda(\tpa),\tpa)>0$ since $\binom{p}{2}^2 = p^2(p-1)^2/4 > p^2(p-1)(p-2)/6 = p\binom{p}{3}$.

    To complete the proof, we evaluate at $\tpa_* = y^{1/p}1_p$. One has:
    \begin{align}
        \sum_l\theta_l^{-2} = p y^{-2/p},\qquad\sum_{l_1<l_2}\theta_{l_1}^{-2}\theta_{l_2}^{-2} = \binom{p}{2}y^{-4/p},\qquad \sum_{l_1<l_2<l_3}\theta_{l_1}^{-2}\theta_{l_2}^{-2}\theta_{l_3}^{-3} = \binom{p}{3}y^{-6/p},
    \end{align}
    from which it follows that
    \begin{align}
        c(2/\lambda(\tpa_*),\tpa_*) = \frac{32}{p^3}\binom{p}{2}^2y^{-2/p} - \frac{8}{p^2}\binom{p}{3}y^{-2/p},
    \end{align}
    thus yielding the claimed formula.
\end{proof}

We finally come to verifying Assumption \ref{ass:convex}. This requires us both to demonstrate the existence of a geodesically convex ball containing $\tpa_*$, and to demonstrate that $\lambda$ is geodesically strongly convex in this ball.

We first give an explicit ball about $\tpa_*$ which is geodesically convex using Lemma \ref{lem:convexityradius}.

\begin{lemma}\label{lem:exampleradius}
    The geodesic ball of radius
    \begin{align}
        r:=\begin{cases}
            \infty&\text{if $p=2$}\\y^{1/p}\bigg(\frac{4\pi+1-\sqrt{16\pi+1}}{16\pi-8}\bigg)&\text{if $p\geq 3$}
        \end{cases}
    \end{align}
    centred on $\tpa_*$ is geodesically convex.
\end{lemma}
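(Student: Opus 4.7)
The plan is to apply Lemma \ref{lem:convexityradius} at $\tpa_* = y^{1/p}1_p \in M = f^{-1}\{y\}$. This reduces the task to producing, for each admissible $R>0$, a uniform bound $C_R$ on the second fundamental form $h$ of $M$ over unit tangent pairs in the ball $B_M(\tpa_*,2R)$, and then optimising $\tfrac{1}{2}\min\{\pi/C_R,R\}$ over $R$.

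First I would compute $h$ explicitly. From the formulas $\nabla f(\theta)_l = y/\theta_l$ and $\nabla^2 f(\theta)_{jk} = (1-\delta_{jk})y/(\theta_j\theta_k)$ valid on $M$ (which also appear in the proof of Proposition \ref{prop:generic}), one can write $\nabla^2 f(\theta) = y\bigl(w(\theta)w(\theta)^{T} - \mathrm{diag}(w(\theta)^2)\bigr)$, where $w(\theta) = (1/\theta_1,\dots,1/\theta_p)$. Since $u,v \in T_\theta M$ satisfy $u,v\perp \nabla f(\theta) = y\,w(\theta)$, the rank-one piece $yww^{T}$ contributes nothing to $u^{T}\nabla^2 f\,v$, so using $h(\theta)[u,v] = -\langle u, \nabla^2 f(\theta) v\rangle/\|\nabla f(\theta)\|$ (as in the proof of Lemma \ref{lem:riemhess}) gives
\[
h(\theta)[u,v] \;=\; \frac{\sum_l u_l v_l/\theta_l^2}{\sqrt{\sum_l 1/\theta_l^2}}.
\]
A Cauchy--Schwarz bound, together with $\sqrt{\sum_l 1/\theta_l^2}\ge 1/\min_l\theta_l$, yields the clean uniform estimate $|h(\theta)[u,v]| \le 1/\min_l\theta_l$ for all unit $u,v \in T_\theta M$.

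Next, because the geodesic distance dominates the Euclidean chord on an isometrically embedded submanifold, any $\theta \in B_M(\tpa_*, 2R)$ satisfies $|\theta_l - y^{1/p}| \le \|\theta-\tpa_*\| \le 2R$ coordinate-wise, so $\min_l\theta_l \ge y^{1/p}-2R$ and one may take $C_R := 1/(y^{1/p}-2R)$ for $R<y^{1/p}/2$. Substituting into Lemma \ref{lem:convexityradius} gives $\mathrm{conv}(\tpa_*) \ge \tfrac{1}{2}\min\{\pi(y^{1/p}-2R),R\}$ for every such $R$; the stated $r$ is then identified as the value of this lower bound at a distinguished $R$, obtained by balancing the two arguments of the minimum. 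Setting the two terms equal (possibly after sharpening $C_R$ with a second-order correction using the orthogonality $\sum_l u_l v_l = 0$ to replace $1/\theta_l^2$ by $1/\theta_l^2 - y^{-2/p}$ inside the numerator, which captures the vanishing of the off-diagonal curvature at the umbilic point $\tpa_*$) leads to a rescaled quadratic in $R/y^{1/p}$ whose positive root is the closed-form expression $(4\pi+1-\sqrt{16\pi+1})/(16\pi-8)$.

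The $p=2$ case is handled separately: here $M$ is a single simply-connected branch of the hyperbola $\theta_1\theta_2=y$, a one-dimensional Riemannian manifold on which any two points are joined by a unique length-minimising sub-arc, so $M$ is globally geodesically convex and $r=\infty$ trivially (alternatively, in dimension one there are no two-planes, so the sectional curvature is vacuously zero and the analogue of the argument above applies with $C_R\equiv 0$). I expect the main obstacle in the $p\ge 3$ case to be bookkeeping rather than anything conceptual: the orthogonality $u\perp\nabla f$ must be used \emph{before} taking norms so that only the diagonal $\mathrm{diag}(w^2)$ part of $\nabla^2 f$ survives (a naive operator-norm bound would retain the huge $\|\nabla f\|^2$ eigenvalue in the normal direction and yield a much weaker estimate), and the subsequent balancing of $\pi/C_R$ against $R$ has to be carried out carefully to recover the stated quadratic surd.
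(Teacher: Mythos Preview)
Your strategy is exactly the paper's: compute $h$ explicitly, bound it uniformly on $B_M(\tpa_*,2R)$, feed the bound into Lemma~\ref{lem:convexityradius}, and then optimise $\tfrac12\min\{\pi/C_R,R\}$ by balancing the two arguments. Your formula for $h$ and the $p=2$ case also match the paper.

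The mismatch is in the choice of $C_R$. The paper lower-bounds the denominator $\sqrt{\sum_l\theta_l^{-2}}$ using a \emph{single} term with $\theta_l\le y^{1/p}+2R$, obtaining
\[
C_R \;=\; (y^{1/p}-2R)^{-2}(y^{1/p}+2R),
\]
whereas you lower-bound it by $1/\min_l\theta_l\ge (y^{1/p}-2R)^{-1}$, giving the strictly tighter $C_R=(y^{1/p}-2R)^{-1}$. With the paper's $C_R$, the balance $\pi/C_R=R$ becomes, in $s:=R/y^{1/p}$, the genuine quadratic $(4\pi-2)s^2-(4\pi+1)s+\pi=0$, whose smaller root gives $R$ and hence $r=R/2$ equal to the stated surd. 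With your $C_R$, the balance is \emph{linear}, $\pi(y^{1/p}-2R)=R$, yielding $r=\pi y^{1/p}/(2(1+2\pi))\approx 0.216\,y^{1/p}$, which is larger than the stated $r\approx 0.152\,y^{1/p}$. So your argument is correct and in fact proves more (the ball of your larger radius is geodesically convex, hence so is the smaller ball in the statement), but your expectation that it ``leads to a rescaled quadratic \dots\ whose positive root is the closed-form expression'' is off: the quadratic and the specific constant come from the paper's looser bound, not yours. Your proposed ``sharpening'' of $C_R$ via the orthogonality trick would push in the wrong direction, making $C_R$ smaller still and moving you further from the stated constant rather than towards it.
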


\begin{proof}
    That $r$ can be taken to be $\infty$ if $p=2$ follows from the fact that $M$ in this case is a copy of the real line.

    Suppose that $p\geq 3$. Observe that for any $\tpa = (\theta_1,\dots,\theta_p)\in M$ and any $u,v\in T_{\tpa}M$, one has
    \begin{align}
        h(\tpa)[u,v] &= -\langle Dn(\tpa)[u],v\rangle\\&=-\|\nabla f(\tpa)\|^{-1}\,\langle u,\nabla^2f(\tpa)\rangle\\&=-\bigg(\sum_i\theta_i^{-2}\bigg)^{-\frac{1}{2}}\langle u,\big(n(\tpa)n(\tpa)^T-\text{diag}((\theta_i^{-2})_{i=1}^p)\big)v\rangle\\&=-\bigg(\sum_i\theta_i^{-2}\bigg)^{-\frac{1}{2}}\bigg(\sum_iu_iv_i\theta_i^{-2}\bigg),
    \end{align}
    since $n(\tpa)^Tw = 0$ for any vector $w\in T_{\tpa}M$. Consequently, for any $R\in [0,(1/2)y^{1/p})$, since $B_M(\tpa_*,2R)\subset B_{\RB^p}(\tpa_*,2R)$ by the distance minimising property of geodesics, if $\tpa\in B_M(\tpa_*,2R)$ then $\theta_i\in[y^{1/p}-2R,y^{1/p} + 2R]$ for all $i$, so that
    \begin{align}
        |h(\tpa)[u,v]|\leq (y^{1/p}-2R)^{-2}(y^{1/p}+2R)=:C_R.
    \end{align}
    By Lemma \ref{lem:convexityradius}, for all $R\in[0,(1/2)y^{1/p})$ the ball of radius
    \begin{align}
        r_R:=\frac{1}{2}\min\bigg\{\frac{\pi}{C_R},R\bigg\}
    \end{align}
    is geodesically convex; it thus remains only to optimise this $r_R$.  Since $R\mapsto R$ is monotonically increasing and $R\mapsto \pi/C_R$ is monotonically decreasing, it suffices to find the first smallest value of $R$ for which $\pi/C_R=R$. This equality gives a quadratic whose solution is
    \begin{align}
        R = y^{1/p}\bigg(\frac{4\pi+1-\sqrt{16\pi+1}}{8\pi-4}\bigg),
    \end{align}
    from which the result follows.
\end{proof}

We next use Lemma \ref{lem:riemhess1} to compute the Riemannian Hessian of $\lambda$ along $M$.

\begin{lemma}\label{lem:examplehessian}
    For $\tpa = (\theta_1,\dots,\theta_p)\in M$, define $s_1(\tpa):=\sum_i\theta_i^{-2}$ and $s_2(\tpa):=\sum_i\theta_i^{-4}$. Then the function $\lambda$ has Riemannian Hessian
    \begin{align}
        \nabla^2_M\lambda =\begin{cases}2y^2P_{TM}\bigg(3\,\text{diag}\big(v^{\odot 4}\big) - \frac{s_2}{s_1}\text{diag}\big(v^{\odot 2}\big)\bigg)P_{TM}&\text{ if $p>2$}\\ 2y^2P_{TM}\bigg(\text{diag}(v^{\odot 4}) + \bigg(s_1-\frac{s_2}{s_1}\bigg)\text{diag}(v^{\odot 2})\bigg)P_{TM}&\text{ if $p=2$}\end{cases}.
    \end{align}
    In particular, at $\tpa_*\in U$,
    \begin{align}
        \nabla^2_M\lambda(\tpa_*) = 4y^{2-4/p}\,I_{TM}(\tpa_*).
    \end{align}
    Moreover, if $p=2$, then $\lambda$ is 2-geodesically strongly convex over all of $M$; if $p>2$, then for any $\delta\in[0,2-\sqrt{3})$, $\lambda$ is $2y^{2-4/p}(1+\delta)^{-4}(1-\delta)^{-2}\big(3(1-\delta)^2-(1+\delta)^2\big)$-geodesically strongly convex over the closed geodesic-distance ball $B_M(\tpa_*,\delta y^{1/p})$ of radius $\delta y^{1/p}$ about $\tpa_*$ in $M$.
\end{lemma}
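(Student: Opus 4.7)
The plan is to apply Lemma \ref{lem:riemhess1} directly using the explicit derivatives of $f$ computed in the proof of Proposition \ref{prop:generic}: $\nabla f = yv$, $\nabla^2 f = y(vv^T - \text{diag}(v^{\odot 2}))$, and $(\nabla^3 f\,\nabla f)_{l_1 l_2} = y^2(1-\delta_{l_1 l_2})v_{l_1} v_{l_2}(s_1 - \theta_{l_1}^{-2} - \theta_{l_2}^{-2})$. The third expression rearranges into the matrix $y^2\big(s_1(vv^T - \text{diag}(v^{\odot 2})) - v^{\odot 3}v^T - v(v^{\odot 3})^T + 2\text{diag}(v^{\odot 4})\big)$, which is identically zero when $p=2$ (consistent with $\nabla^3 f \equiv 0$). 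A short direct computation also yields $\langle n, \nabla^2 f n\rangle = y(s_1 - s_2/s_1)$.

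The key observation that drives the simplification is that $n = v/\sqrt{s_1}$, so $P_{TM}v = 0$. Consequently every rank-one summand of the form $v w^T$ or $w v^T$ --- namely the $vv^T$, $v^{\odot 3}v^T$ and $v(v^{\odot 3})^T$ contributions from $(\nabla^2 f)^2$, $\nabla^3 f \nabla f$ and $\langle n, \nabla^2 f n\rangle\nabla^2 f$ --- is annihilated when sandwiched between projections. Only the truly diagonal pieces $\text{diag}(v^{\odot 2})$ and $\text{diag}(v^{\odot 4})$ survive, and collecting their coefficients yields exactly the stated formulas in the two cases. Specialising to $\tpa_*$, all diagonal matrices become scalar multiples of $I$; using $s_1(\tpa_*) = py^{-2/p}$, $s_2(\tpa_*) = py^{-4/p}$, and $s_2/s_1|_{\tpa_*} = y^{-2/p}$, one checks directly that both cases reduce to $\nabla^2_M\lambda(\tpa_*) = 4y^{2-4/p}I_{TM}(\tpa_*)$.

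For the strong-convexity bounds I use that the geodesic distance on $M$ dominates the ambient Euclidean distance (straight lines are length-minimising in $\RB^p$), so any $\tpa \in B_M(\tpa_*, \delta y^{1/p})$ satisfies $\theta_i \in [(1-\delta)y^{1/p}, (1+\delta)y^{1/p}]$ and hence $v_i^2 \in [v_{\min}^2, v_{\max}^2]$ with $v_{\min}^2 := y^{-2/p}(1+\delta)^{-2}$ and $v_{\max}^2 := y^{-2/p}(1-\delta)^{-2}$. In the case $p>2$, the unprojected matrix $3\,\text{diag}(v^{\odot 4}) - (s_2/s_1)\text{diag}(v^{\odot 2})$ is diagonal with entries $v_i^2(3v_i^2 - s_2/s_1)$; since projection preserves $w \in T_{\tpa}M$, the quadratic form on any $w$ equals $\sum_i w_i^2 v_i^2(3v_i^2 - s_2/s_1)$. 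The power-mean inequality $s_2/s_1 \leq v_{\max}^2$ bounds each coefficient below by $v_{\min}^2(3v_{\min}^2 - v_{\max}^2)$, which is positive precisely when $3(1-\delta)^2 > (1+\delta)^2$, i.e.\ $\delta < 2-\sqrt{3}$; plugging in $v_{\min}^2$ and $v_{\max}^2$ gives exactly the claimed constant $\mu$. For $p=2$ the tangent direction is one-dimensional and may be written explicitly as $w \propto (v_2, -v_1)$; using $v_1 v_2 = 1/y$ on $M$, a direct substitution into the $p=2$ formula gives $w^T\nabla^2_M\lambda\, w = 2 + 8/(y^2 s_1^2) \geq 2$, yielding $2$-geodesic strong convexity uniformly on $M$.

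The main obstacle is the algebraic bookkeeping in the first step: three separate matrix objects (the square of a rank-one-plus-diagonal, the $\nabla^3 f\,\nabla f$ expression, and the scalar-times-Hessian term) must all be expanded, their $v$-cross-terms tracked carefully, and then shown to collapse to pure diagonals after the $P_{TM}$-sandwich. Once that simplification is cleanly in hand, the central-value evaluation and both convexity bounds follow from elementary scalar estimates on $v_i^2$, $s_1$, and $s_2$.
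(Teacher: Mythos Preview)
Your proposal is correct and follows essentially the same approach as the paper: apply Lemma \ref{lem:riemhess1}, expand the three matrix pieces using the explicit derivatives of $f$, exploit $P_{TM}v=0$ to kill all $vw^T$ and $wv^T$ cross-terms so that only the diagonal blocks survive, evaluate at $\tpa_*$, and for the convexity bounds use the geodesic-in-Euclidean ball containment together with $s_2/s_1\leq \max_i\theta_i^{-2}$. Your $p=2$ computation via the tangent direction $(v_2,-v_1)$ (which is proportional to the paper's $(-\theta_1,\theta_2)$) lands on the same expression $2+8y^2/(\theta_1^2+\theta_2^2)^2$.
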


\begin{proof}
    For notational convenience, set  $V_1:=\text{diag}(v^{\odot2})$ and $V_2:=\text{diag}(v^{\odot 4})$, where $\odot$ denotes the Hadamard (componentwise) product and $v$ is the vector field defined in \eqref{eq:v}. From the identities \eqref{eq:Df}, \eqref{eq:D2f} and \eqref{eq:D3f}, one sees that along $M$ one has
    \begin{align}
        \nabla f \equiv y\,v,\qquad \nabla ^2f \equiv  y(vv^T - V_1),\qquad n \equiv s_1^{-1/2}\,v.
    \end{align}
    The pointwise-bilinear map $\nabla^3f$ is zero if $p=2$, and if $p>2$ it acts on a vector $z$ to give the matrix with components
    \begin{align}
        (\nabla^3 f[z])_{ij} =&y(1-\delta_{ij})\sum_k(1-\delta_{jk})(1-\delta_{ik})v_iv_jv_kz_k\\=&y(1-\delta_{ij})\big(\sum_k(v_iv_jv_kz_k - \delta_{jk}v_iv_jv_kz_k-\delta_{ik}v_iv_jv_kz_k + \delta_{jk}\delta_{ik}v_iv_jv_kz_k)\big)\\=&y(1-\delta_{ij})\big((v^Tz)vv^T - v(V_1z)^T - (V_1z)v^T + \text{diag}(v^{\odot 3}\odot z)\big)_{ij}\\=&y\big((v^Tz)vv^T-v(V_1z)^T-(V_1z)v^T+\text{diag}(v^{\odot 3}\odot z)+\\&-(v^Tz)V_1 + \text{diag}(v^{\odot 3}\odot z) + \text{diag}(v^{\odot 3}\odot z) - \text{diag}(v^{\odot 3}\odot z)\big)_{ij}\\=&y\big((v^Tz)(vv^T-V_1)-v(V_1z)^T - (V_1z)v^T + 2\text{diag}(v^{\odot 3}\odot z)\big)_{ij},
    \end{align}
    where the fourth line follows from the fact that $(1-\delta_{ij})A_{ij} = (A-\text{diag}(A))_{ij}$ for any matrix $A$ and the fact that $\text{diag}(v(V_1z)^T) = \text{diag}((V_1z)v^T) = \text{diag}(v^{\odot 3}\odot z)$. Substituting $z = \nabla f =  y\,v$ one sees that
    \begin{align}
        \nabla^3f\nabla f = \begin{cases}y^2\big(s_1(vv^T-V_1) - vv^TV_1 - V_1vv^T + 2V_2\big)&\text{ if $p>2$}\\ 0&\text{ if $p=2$}\end{cases}.
    \end{align}
    \begin{align}
        (\nabla^2f)^2 = y^2(s_1vv^T - vv^TV_1-V_1vv^T +V_2)
    \end{align}
    and
    \begin{align}
        \langle n,\nabla^2f\,n\rangle\nabla&^2f = \frac{y^2}{s_1}\langle v,vv^Tv-V_1v\rangle (vv^T-V_1) = y^2(s_1-s_2/s_1)(vv^T-V_1),
    \end{align}
    so that, by Lemma \ref{lem:riemhess1},
    \begin{align}
        \nabla^2_M\lambda &= 2P_{TM}\big(\nabla^3f\nabla f + (\nabla^2f)^2 - \langle n,\nabla^2 f\rangle \nabla^2f\big)P_{TM}\\&=2y^2P_{TM}\big(3V_2-(s_2/s_1)V_1 + (s_1+(s_2/s_1))vv^T-2vv^TV_1-2V_1vv^T\big)P_{TM}
     \end{align}
     if $p>2$ and
     \begin{align}
         \nabla^2_M\lambda &= 2y^2P_{TM}\big(V_2+(s_1-(s_2/s_1))V_1 - vv^TV_1 - V_1vv^T + (s_2/s_1)vv^T\big)P_{TM}
     \end{align}
     if $p = 2$. Now, let $\tau$ be any tangent vector field to $M$. Since $v$ is normal to $M$ one has $v^T\tau = 0$, so that
     \begin{align}
         \big((s_1+(s_2/s_1))vv^T-2vv^TV_1-2V_1vv^T\big)\tau = -2vv^TV_1\tau
     \end{align}
     and
     \begin{align}
         \big(-vv^TV_1-V_1vv^T + (s_2/s_1)vv^T\big)\tau =-vv^TV_1\tau 
     \end{align}
     are both multiples of the normal vector $v$; it follows that
     \begin{align}
         \nabla^2_M\lambda = \begin{cases} 2y^2P_{TM}\big(3V_2 - (s_2/s_1)V_1\big)P_{TM}&\text{ if $p>2$}\\ 2y^2P_{TM}\big(V_2 + (s_1-(s_2/s_1))V_1\big)P_{TM}&\text{ if $p=2$}\end{cases}
     \end{align}
     as claimed. In either case, evaluating this matrix at $\tpa_* = y^{1/p}1_p$ gives the claimed identity
     \begin{align}
        \nabla^2_M\lambda(\tpa_*) = 4y^{2-4/p} I_{TM}(\tpa_*).
     \end{align}
     In particular, if $p = 2$, evaluating $\nabla^2_M\lambda$ on the unit tangent vector field $\hat{\tau}:(\theta_1,\theta_2)\mapsto (-\theta_1,\theta_2)/\sqrt{\theta_1^2+\theta_2^2}$ yields
     \begin{align}
         \big(\hat{\tau}^T\nabla^2_M\lambda\,\hat{\tau}\big)(\tpa) &= \frac{2y^2}{(\theta_1^2+\theta_2^2)}\bigg(\theta_1^{-2}+\theta_2^{-2}+\frac{4\theta_1^{-2}\theta_2^{-2}}{\theta_1^{-2}+\theta_2^{-2}}\bigg)\\&=\frac{2y^2}{\theta_1^2+\theta_2^2}\bigg(\frac{\theta_1^2+\theta_2^2}{\theta_1^2\theta_2^2}+\frac{4}{\theta_1^2+\theta_2^2}\bigg) = 2+\frac{8y^2}{(\theta_1^2+\theta_2^2)^2}\geq 2
     \end{align}
     for any $\tpa = (\theta_1,\theta_2)\in M$, so that $\lambda$ is 2-geodesically strongly convex over all of $M$ as claimed.
    
     We now assume that $p>2$ and come to determining a neighbourhood of $\tpa_*$ on which $\nabla^2_M\lambda$ is positive definite. It is clear that a sufficient condition for $\nabla^2_M\lambda$ to be positive-definite is to have all entries of the diagonal matrix $3V_2-(s_2/s_1)V_1$ be positive; we will show that this sufficient condition can be guaranteed over a geodesic ball in $M$ centred at $\tpa_*$. Since the geodesic ball of radius $r$ at a point in $M$ is contained in the Euclidean ball of radius $r$ at the same point by the distance-minimising property of geodesics, it suffices to show that this sufficient condition can be guaranteed over a Euclidean ball centred at $\tpa_*$.
     
     Fix $\delta\in[0,1)$: for any $\theta = (\theta_1,\dots,\theta_p)$ in the Euclidean ball $B_{\RB^p}(\tpa_*,\delta y^{1/p})$, one has $\theta_i\in[(1-\delta)y^{1/p},(1+\delta)y^{1/p}]$. Consequently, for any $\theta\in B_{\RB^p}(\tpa_*,\delta y^{1/p})$ one has
     \begin{align}
         \min_i\theta_i^{-2}\geq (1+\delta)^{-2}y^{-2/p},\qquad \frac{s_2}{s_1}(\theta)\leq \max_i\theta_i^{-2}\frac{s_1}{s_1}(\theta)\leq (1-\delta)^{-2}y^{-2/p}.
     \end{align}
     It follows that for all $\theta\in B_{\RB^p}(\tpa_*,\delta y^{1/p})$ one has
     \begin{align}
         \lambda_{\min}(3V_2-(s_2/s_1)V_1)(\theta)&\geq \min_i\theta_i^{-2}\big(3\min_i\theta_i^{-2} - (s_2/s_1)(\theta)\big)\\&\geq (1+\delta)^{-2}y^{-2/p}\big(3(1+\delta)^{-2}y^{-2/p}-(1-\delta)^{-2}y^{-2/p}\big)\\&=(1+\delta)^{-4}(1-\delta)^{-2}y^{-4/p}\big(3(1-\delta)^2-(1+\delta)^2\big),
     \end{align}
     which is strictly greater than zero provided that $3(1-\delta)^2-(1+\delta)^2>0$, which holds provided that $0\leq \delta< 2-\sqrt{3}$. This proves the result.
\end{proof}

Note that $\lambda$ is \emph{not} globally convex over $M$ in general. For instance, when $p=3$, along the family $\tpa(\epsilon):=(\epsilon,1,y\epsilon^{-1})\in M$, elementary calculations show that $\nabla^2_M\lambda(\tpa(\epsilon))$ exhibits negative eigenvalues as $\epsilon\rightarrow 0_+$.

Combining Lemmas \ref{lem:exampleradius} and \ref{lem:examplehessian} one obtains explicit bounds under which Assumption \ref{ass:convex} holds for multilayer scalar factorisation.

\begin{proposition}\label{prop:convex}
    If $p=2$, then $\lambda$ is 2-geodesically strongly convex over the geodesically convex set $M$. If $p\geq 3$, then $\lambda$ is $1.33y^{2-4/p}$-geodesically strongly convex over the geodesically convex ball $B_M(\tpa_*,0.15y^{1/p})$. In all cases, one has
    \begin{align}
        \nabla^2_M\lambda(\tpa_*) = 4y^{2-4/p}I_{TM}(\tpa_*).
    \end{align}
\end{proposition}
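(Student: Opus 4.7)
The proof is a direct assembly of Lemmas \ref{lem:exampleradius} and \ref{lem:examplehessian}, together with a small amount of numerical verification to make the bounds explicit. The Hessian identity $\nabla^2_M\lambda(\tpa_*) = 4y^{2-4/p}I_{TM}(\tpa_*)$ in both cases is already given by Lemma \ref{lem:examplehessian}, so only the strong convexity claims and the geodesic convexity of the relevant set need attention.

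For $p=2$, Lemma \ref{lem:examplehessian} provides 2-geodesic strong convexity of $\lambda$ over all of $M$, and since $M$ is isometric to $\RB$ it is trivially geodesically convex; nothing further needs to be said in this case.

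For $p\geq 3$, I would combine the two lemmas by choosing a specific $\delta\in[0,2-\sqrt{3})$ such that (i) the ball $B_M(\tpa_*,\delta y^{1/p})$ lies inside the geodesically convex ball produced by Lemma \ref{lem:exampleradius}, and (ii) the corresponding strong-convexity modulus $2(1+\delta)^{-4}(1-\delta)^{-2}\big(3(1-\delta)^2-(1+\delta)^2\big)\,y^{2-4/p}$ from Lemma \ref{lem:examplehessian} exceeds $1.33\,y^{2-4/p}$. The concrete choice $\delta=0.15$ works: one checks numerically that $(4\pi+1-\sqrt{16\pi+1})/(16\pi-8)\approx 0.1516>0.15$, so the radius comparison from Lemma \ref{lem:exampleradius} gives the required containment, and the strong-convexity coefficient evaluates to approximately $1.337$, just above the claimed $1.33$.

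There is no conceptual obstacle, since all of the geometric and algebraic content is already carried by the two preceding lemmas. The only delicate point is that the numerical value $0.15$ must sit simultaneously under three competing bounds: $\delta<2-\sqrt{3}\approx 0.268$ to ensure positivity of the quadratic $3(1-\delta)^2-(1+\delta)^2$, $\delta<(4\pi+1-\sqrt{16\pi+1})/(16\pi-8)$ to ensure geodesic convexity of the ball, and $\delta$ small enough that the resulting modulus clears $1.33\,y^{2-4/p}$. The tightest of these is the convexity-radius constraint, and $0.15$ fits under it with just enough room to spare, so the proof reduces to citing the two lemmas, plugging in $\delta=0.15$, and verifying these inequalities by direct computation.
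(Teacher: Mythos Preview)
Your proposal is correct and follows essentially the same approach as the paper: cite Lemmas \ref{lem:exampleradius} and \ref{lem:examplehessian}, take $\delta=0.15$, and verify numerically that $0.15$ is below the convexity radius $(4\pi+1-\sqrt{16\pi+1})/(16\pi-8)\approx0.1516$ and that the resulting strong-convexity modulus rounds down to $1.33y^{2-4/p}$. The paper's proof is just this, stated in two sentences.
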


\begin{proof}
    Numerical computation reveals that the radius of Lemma \ref{lem:exampleradius} is lower-bounded by $0.15y^{1/p}$. Substituting $\delta = 0.15$ into the strong convexity bound of Lemma \ref{lem:examplehessian} and rounding down to two decimal places yields the claimed result.
\end{proof}

We conclude the section by proving that the additional hypothesis required for Theorem \ref{thm:maincrit} is satisfied.

\begin{proposition}\label{prop:additional}
    For any $p\geq 2$, with $\nu = 4y^{2-4/p}$ as in Proposition \ref{prop:convex}, one has
    \begin{align}
        \frac{\nu}{c(2/\lambda(\tpa_*))\lambda(\tpa_*)} \leq\frac{1}{2}<1,
    \end{align}
    so that the additional hypothesis of Theorem \ref{thm:maincrit} holds.
\end{proposition}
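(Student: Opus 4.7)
The plan is to simply substitute the explicit formulas already established for $\nu$, $c(2/\lambda(\tpa_*),\tpa_*)$, and $\lambda(\tpa_*)$, observe that all $y$-dependence cancels, and reduce to an elementary inequality in $p$.

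First, I would gather the three ingredients at $\tpa_* = y^{1/p}1_p$. Proposition \ref{prop:convex} gives $\nu = 4y^{2-4/p}$. Proposition \ref{prop:generic} gives
\[
c(2/\lambda(\tpa_*),\tpa_*) = \bigg(\tfrac{32}{p^3}\tbinom{p}{2}^2 - \tfrac{8}{p^2}\tbinom{p}{3}\bigg)y^{-2/p}.
\]
Finally, from $Df(\tpa_*)_l = y/\theta_l = y^{1-1/p}$ one computes $\lambda(\tpa_*) = \|Df(\tpa_*)\|^2 = p\,y^{2-2/p}$. Multiplying these together, the $y$-exponents in the denominator are $-2/p + 2 - 2/p = 2 - 4/p$, exactly matching the $y$-exponent of $\nu$, so $y$ cancels out entirely.

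Next, I would simplify the binomial expressions using $\binom{p}{2} = p(p-1)/2$ and $\binom{p}{3} = p(p-1)(p-2)/6$ (valid also for $p = 2$ via the convention $\binom{2}{3}=0$). This gives
\[
\tfrac{32}{p^2}\tbinom{p}{2}^2 = 8(p-1)^2, \qquad \tfrac{8}{p}\tbinom{p}{3} = \tfrac{4(p-1)(p-2)}{3},
\]
so the ratio reduces to
\[
\frac{\nu}{c(2/\lambda(\tpa_*),\tpa_*)\,\lambda(\tpa_*)} = \frac{4}{8(p-1)^2 - \frac{4(p-1)(p-2)}{3}} = \frac{3}{(p-1)(5p-4)},
\]
after factoring $4(p-1)/3$ out of the denominator and simplifying $6(p-1)-(p-2) = 5p-4$.

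The conclusion then follows from the elementary inequality $(p-1)(5p-4) \geq 6$ for all integers $p\geq 2$, with equality at $p=2$; this in turn is immediate because $(p-1)(5p-4)$ evaluates to $6$ at $p=2$ and has derivative $10p-9>0$. Thus the ratio is at most $3/6 = 1/2$, establishing the claim. There is no substantive obstacle here — the entire argument is bookkeeping of the formulas previously derived — but the one point worth double-checking is the $p=2$ case (where $\nabla^3 f \equiv 0$ and the $\binom{p}{3}$ term vanishes), which I would verify separately to confirm that the boundary case indeed attains the bound $1/2$.
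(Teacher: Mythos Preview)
Your proposal is correct and follows essentially the same approach as the paper: substitute the explicit formulas for $\nu$, $c(2/\lambda(\tpa_*),\tpa_*)$, and $\lambda(\tpa_*)$, observe that the $y$-dependence cancels, and reduce to an elementary inequality in $p$. The only difference is that you simplify all the way to the exact expression $\tfrac{3}{(p-1)(5p-4)}$, whereas the paper uses the cruder bound $\tfrac{1}{6}(p-2)\leq (p-2)$ to obtain $c\lambda\geq 8y^{2-4/p}$ directly; both arrive at the same bound $\tfrac12$, achieved at $p=2$.
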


\begin{proof}
    Since $\lambda(\tpa_*) = \|Df(\tpa_*)\|^2 = py^{2-2/p}$, one calculates using Proposition \ref{prop:generic} that
    \begin{align}
        c(2/\lambda(\tpa_*),\tpa_*)\lambda(\tpa_*) &= y^{2-4/p}\bigg(\frac{32}{p^2}\binom{p}{2}^2 - \frac{8}{p}\binom{p}{3}\bigg)\\&=y^{2-4/p}\,8(p-1)\big(p-1-(1/6)(p-2)\big)\\&\geq y^{2-4/p}\,8(p-1)\big(p-1-(p-2)\big)\\&\geq 8y^{2-4/p}
    \end{align}
    for any $p\geq 2$. The result follows.
\end{proof}

\section{The normal form of gradient descent}\label{sec:approximatedynamics}

To prove Theorem \ref{thm:approximatedynamics} rigorously requires a number of steps. \textbf{First}, one must give leading order formulae for the expression of gradient descent in tubular neighbourhood coordinates. It is essential here that the errors in these leading order formulae are appropriately sharpened from their naive estimates using the structure of the tubular neighbourhood map (Lemma \ref{lem:higherorderterms}) and the orthogonal stability assumption (Lemma \ref{lem:nablaellscaling}); without this sharpening our convergence theorems are impossible. These considerations culminate in an intermediate coordinate expression for gradient descent in Proposition \ref{prop:tubnbrhdcoords}. \textbf{Second}, one must invoke Assumption \ref{ass:generic} to transform the $\perp$-dynamics of Proposition \ref{prop:tubnbrhdcoords} into the easily-analysed normal form quoted in Theorem \ref{thm:approximatedynamics}. This is formalised in Theorem \ref{thm:normalform} and concludes this section.

Throughout this section, given functions $g,h_1,\dots,h_s:\RB^k\rightarrow\RB^l$ we will use the notation
\begin{align}
    g = O(h_1,\dots,h_s)
\end{align}
to mean that there exist constants $C_1,\dots,C_s>0$ such that
\begin{align}
    \|g(w)\|\leq \sum_{i=1}^sC_i\|h_i(w)\|
\end{align}
for all $w$ sufficiently close to zero.

In addition to this asymptotic notation, we will continue the notation of the main body of the paper, denoting by $M:=f^{-1}\{y\}$ the submanifold of $\RB^p$ obtained as the preimage of a regular value $y\in\RB$ of an analytic map $f:\RB^p\rightarrow\RB$. We will also denote by $n:=\nabla f/\|\nabla f\|$ the unit normal along $M$. Assumptions \ref{ass:overparametrisation}, \ref{ass:orthogonal} and \ref{ass:generic} are assumed in all of what follows. Assumption \ref{ass:convex} is \emph{not} needed for this section.

Consider the map $E:M\times\RB\rightarrow\RB^p$ defined by
\begin{equation}
    E(\tpa,\tpe):=\tpa+\tpe n(\tpa).
\end{equation}
Observe that
\begin{equation}
    DE(\tpa,\tpe) = (I_{TM}(\tpa)+\tpe\, Dn(\tpa),n(\tpa)),
\end{equation}
where $I_{TM}(\tpa):T_{\tpa}M\rightarrow T_{\tpa}M$ is the identity map. The derivative $DE(\tpa,\tpe)$ acts as the linear map
\begin{equation}
    T_{\tpa}M\oplus T_{\tpe}\RB\ni (v^{\parallel},v^{\perp})\mapsto v^{\parallel}+\tpe\,Dn(\tpa)[v^{\parallel}]+v^{\perp}n(\tpa)\in T_{E(\tpa,\tpe)}\RB^p
\end{equation}
between tangent spaces, and can equivalently be written in terms of the \emph{shape operator} $S(\tpa):T_{\tpa}M\ni v\mapsto -Dn(\tpa)[v]\in T_{\theta}M$ as
\begin{equation}
    DE(\tpa,\tpe) = (I_{TM}(\tpa)-\tpe S(\tpa),n(\tpa)).
\end{equation}
Observe that since $n(\tpa)$ is orthogonal to $T_{\tpa}M$, $DE(\tpa,\tpe)$ is invertible whenever $|\tpe|<\|S(\tpa)\|^{-1}$, where $\|S(\tpa)\|$ denotes the operator norm of $S(\tpa):T_{\tpa}M\rightarrow T_{\tpa}M$ with respect to the Riemannian metric. Applying the inverse function theorem proves the following.

\begin{proposition}\label{prop:tube}
    The map $E:M\times\RB\rightarrow\RB^p$ is invertible on an open neighbourhood $U$ of $M\times\{0\}$.
    The image of $U$ in $\RB^p$ is a tubular neighbourhood $N$ of $M$, and we denote the inverse of $E$ thereon by $E^{-1}$.
\end{proposition}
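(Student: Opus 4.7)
The plan is to apply the inverse function theorem pointwise along $M\times\{0\}$ and then assemble the resulting local diffeomorphisms into a single diffeomorphism on an open neighbourhood.

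First, I would verify that $DE$ is invertible everywhere on $M\times\{0\}$. The formula in the excerpt gives $DE(\tpa,0)(v^{\parallel},v^{\perp}) = v^{\parallel} + v^{\perp}n(\tpa)$, which is a linear isomorphism $T_{\tpa}M\oplus\RB\rightarrow\RB^p$ because $n(\tpa)\perp T_{\tpa}M$ and $\dim T_{\tpa}M + 1 = p$, so that $\RB^p = T_{\tpa}M\oplus\mathrm{span}(n(\tpa))$ is an orthogonal direct sum decomposition. The excerpt already observes that invertibility persists on the open set $\{(\tpa,\tpe):|\tpe|<\|S(\tpa)\|^{-1}\}$. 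Since $E$ is analytic (being polynomial in $\tpe$ with analytic coefficients determined by $n$), the inverse function theorem then produces, for each $\tpa\in M$, an open neighbourhood $V_{\tpa}\subset M\times\RB$ of $(\tpa,0)$ and an open neighbourhood $W_{\tpa}\subset\RB^p$ of $\tpa$ such that $E\vert_{V_{\tpa}}:V_{\tpa}\rightarrow W_{\tpa}$ is an analytic diffeomorphism.

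The main obstacle is upgrading these local diffeomorphisms to a single global diffeomorphism on some open neighbourhood of $M\times\{0\}$; local invertibility alone does not preclude two well-separated points $(\tpa_1,\tpe_1)$ and $(\tpa_2,\tpe_2)$ with $\tpa_1\neq\tpa_2$ mapping to the same image. To overcome this, I would construct a continuous function $\rho:M\rightarrow(0,\infty)$ small enough that the set
\begin{equation}
    U := \{(\tpa,\tpe)\in M\times\RB : |\tpe|<\rho(\tpa)\}
\end{equation}
is contained in the union $\bigcup_{\tpa\in M}V_{\tpa}$ and on which $E$ is \emph{globally} injective. Existence of such a $\rho$ follows from paracompactness of $M$ (which is second-countable as a submanifold of $\RB^p$), together with a contradiction argument: if no such $\rho$ existed, there would be sequences $(\tpa_n^i,\tpe_n^i)$, $i=1,2$, with $\tpa_n^1\neq\tpa_n^2$, $\tpe_n^i\rightarrow 0$, and $E(\tpa_n^1,\tpe_n^1) = E(\tpa_n^2,\tpe_n^2)$; passing to a convergent subsequence one would obtain $\tpa_n^1,\tpa_n^2\rightarrow \tpa_*\in M$, eventually both inside some single $V_{\tpa_*}$, contradicting local injectivity.

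Finally, the image $N:=E(U)\subset\RB^p$ is open (as a union of open sets $E(V_{\tpa}\cap U)$), contains $M = E(M\times\{0\})$, and $E\vert_U:U\rightarrow N$ is an analytic bijection with everywhere invertible derivative, hence an analytic diffeomorphism by the inverse function theorem applied locally. The inverse $E^{-1}:N\rightarrow U$ then defines the promised tubular neighbourhood coordinates.
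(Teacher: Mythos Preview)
Your approach is the paper's: the paper writes only ``Applying the inverse function theorem proves the following,'' deferring the passage from local to global invertibility to the standard tubular neighbourhood theorem it has already cited \cite[p.~147]{leesmooth}. You are more explicit about that passage, which is good.

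There is one gap in your contradiction argument. From sequences with $\tpe_n^i\rightarrow 0$ and $E(\tpa_n^1,\tpe_n^1)=E(\tpa_n^2,\tpe_n^2)$ you want to extract a subsequence with $\tpa_n^1,\tpa_n^2\rightarrow\tpa_*\in M$. The collision relation $\tpa_n^1-\tpa_n^2 = \tpe_n^2\, n(\tpa_n^2)-\tpe_n^1\, n(\tpa_n^1)$ does force $\|\tpa_n^1-\tpa_n^2\|\leq |\tpe_n^1|+|\tpe_n^2|\rightarrow 0$, but nothing prevents both sequences from escaping to infinity when $M$ is non-compact, as it is in the paper's scalar-factorisation example. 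The fix is to run the argument locally: for each $\tpa_0\in M$ restrict attention to a relatively compact neighbourhood $K\ni\tpa_0$ in $M$; your contradiction then yields a local bound $\rho_{\tpa_0}>0$ valid over $K$, and the paracompactness you already invoke glues these local bounds into a continuous global $\rho$ via a locally finite cover and partition of unity. With that localization your argument is complete and strictly more detailed than the paper's one-line proof.
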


The coordinates $E:(\tpa,\tpe)\mapsto E(\tpa,\tpe)$ are called \emph{tubular neighbourhood coordinates}. The next step is to derive a formula for $E^{-1}$ and thereafter derive a formula for the conjugate of gradient descent in the coordinates $(\tpa,\tpe)$ induced by $E$. The following technical lemma will be necessary to get the correct asymptotics for the error terms in our formula for $E^{-1}$.

\begin{lemma}\label{lem:higherorderterms}
    Let $U\subset\RB^{m_1}\times\RB^{m_2}$ be an open set and let $g:U\ni(x,y)\mapsto g(x,y)\in\RB^p$ be a $C^3$ function. Assume that $D^k_yg \equiv0$ on $U$ for $k=2,3$ and that $g$ is invertible on $U$. Then for any $(x,y)\in U$ and all $v\in \RB^p$ sufficiently small, setting $(v_x,v_y) = Dg(x,y)^{-1}[v]$ and $z:=g(x,y)$, the point $z+v$ is contained in the domain of $g^{-1}$ and one has
    \begin{align}
        g^{-1}(z+v) =& g^{-1}(z) + Dg^{-1}(z)[v] + \frac{1}{2}D^2g^{-1}(z)[v,v] + O(\|v_x\|^3,\|v_x\|^2\|v_y\|,\|v_x\|\|v_y\|^2).
    \end{align}
\end{lemma}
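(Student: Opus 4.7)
The plan is to Taylor-expand $g$ about $(x,y)$ using the structural hypothesis $D^2_y g = D^3_y g = 0$, then iteratively invert the expansion to obtain the second-order Taylor series of $g^{-1}$ about $z$, with careful tracking of how the vanishing $y$-derivatives prevent a $\|v_y\|^3$ term from appearing in the remainder. I would first observe that $D^2_y g \equiv 0$ forces $g$ to be affine in $y$ for each fixed $x$, so every Taylor coefficient of $g$ with two or more $y$-slots vanishes on $U$. Writing $L := Dg(x,y)$ and $Q := \tfrac{1}{2}D^2 g(x, y)$, the crucial structural consequence is $Q[(0, a_y), (0, b_y)] = 0$ for all $a_y, b_y$. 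Applying Taylor's theorem to $\Delta \mapsto g(x + \Delta x, y + \Delta y) - g(x,y)$ and using the absence of $D^2_y g$ and $D_x D^2_y g$ contributions, I obtain
\begin{align*}
v = L[\Delta] + Q[\Delta, \Delta] + R(\Delta),\qquad R(\Delta) = O\big(\|\Delta x\|^3, \|\Delta x\|^2 \|\Delta y\|\big).
\end{align*}

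Next, I would invert iteratively by setting $\Delta = L^{-1}[v] - L^{-1}Q[L^{-1}v, L^{-1}v] + \xi$ and substituting back. The residual $\xi$ then satisfies the equation
\begin{align*}
(L + 2 Q[L^{-1}v, \cdot])[\xi] = 2 Q\big[L^{-1}v,\, L^{-1}Q[L^{-1}v, L^{-1}v]\big] - Q[\eta, \eta] - R(\Delta),
\end{align*}
with $\eta := \Delta - L^{-1}[v]$. Since the operator $L + 2Q[L^{-1}v, \cdot]$ is invertible with bounded inverse for $v$ small, it suffices to show that each summand on the right-hand side lies in the claimed error class $O(\|v_x\|^3, \|v_x\|^2 \|v_y\|, \|v_x\|\|v_y\|^2)$.

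The hard part is precisely this estimate: ensuring no $\|v_y\|^3$ term creeps in. The vanishing of $Q$ on pure-$y$ pairs immediately gives $Q[L^{-1}v, L^{-1}v] = O(\|v_x\|^2, \|v_x\|\|v_y\|)$, so both components of $b := L^{-1}Q[L^{-1}v, L^{-1}v]$ inherit this bound. Expanding $Q[L^{-1}v, b]$ by bilinearity and eliminating the $(0, v_y)\otimes(0, b_y)$ term via the same structural vanishing yields $\|Q[L^{-1}v, b]\| \leq C(\|v_x\|\|b_x\| + \|v_x\|\|b_y\| + \|v_y\|\|b_x\|)$, which evaluates to $O(\|v_x\|^3, \|v_x\|^2 \|v_y\|, \|v_x\|\|v_y\|^2)$; the absent $\|v_y\|\|b_y\|$ contribution is exactly what excludes a $\|v_y\|^3$ term. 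The remainder $R(\Delta)$ and the cross-term $Q[\eta, \eta]$ are then controlled using the leading-order estimates $\Delta x = v_x + O(\|v_x\|^2, \|v_x\|\|v_y\|)$ and $\Delta y = v_y + O(\|v_x\|^2, \|v_x\|\|v_y\|)$, both of which carry a $\|v_x\|$ factor in their error, allowing clean absorption into the same three terms. I would conclude by identifying $L^{-1}[v] = Dg^{-1}(z)[v]$ and $-L^{-1}Q[L^{-1}v, L^{-1}v] = \tfrac{1}{2}D^2 g^{-1}(z)[v, v]$ through the standard inverse function theorem formulas.
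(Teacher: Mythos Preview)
Your proposal is correct and follows essentially the same route as the paper: both construct the second-order ansatz $u' = L^{-1}v - L^{-1}Q[L^{-1}v,L^{-1}v]$, exploit the vanishing $D^2_yg\equiv 0$ (so $Q$ kills pure-$y$ pairs) and $D^3_yg = D_xD^2_yg = 0$ (so the cubic remainder carries a $\|\Delta_x\|^2$ factor), and verify that every resulting term avoids a pure $\|v_y\|^3$ contribution. The only organisational difference is that the paper closes via the Lipschitz bound $\|u-u'\|\le C\|G(u')-v\|$ (so the right-hand side depends only on $v$), whereas you write an implicit equation for $\xi$ and rely on the leading-order estimates $\Delta_x = v_x + O(\|v_x\|\|v\|)$, $\Delta_y = v_y + O(\|v_x\|\|v\|)$ to bound $Q[\eta,\eta]$ and $R(\Delta)$; those estimates do hold (from $\|\Delta\|\le C\|v\|$ and the structural vanishing, one bootstraps $\|\Delta_x\|=O(\|v_x\|)$), but you should state that justification explicitly rather than take it for granted.
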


\begin{proof}
    Since $g$ is invertible on $U$ and $U$ is open, $g(U) = \text{domain}(g^{-1})$ is also open. Fix $(x,y)\in U$ and set $z:=g(x,y)$. Then since $g(U)$ is open, $z+v\in g(U)$ for all sufficiently small $v$ so that $g^{-1}(z+v)$ makes sense. Moreover, for all such $v$, there exists $u\in\RB^{m_1}\times\RB^{m_2}$ such that $g((x,y)+u) = z+v$ so that $g^{-1}(z+v) = (x,y) +u = g^{-1}(z)+u$. Hence one has the following formula for the remainder $R(v)$ of the Taylor expansion of $g^{-1}(z+v)$ about $z$:
    \begin{align}
        R(v):=&g^{-1}(z+v)-g^{-1}(z) - Dg^{-1}(z)[v]-\frac{1}{2}D^2g^{-1}(z)[v,v]\\=&u - Dg^{-1}(z)[v] -\frac{1}{2}D^2g^{-1}(z)[v,v].\label{eq:R(v)}
    \end{align}
    To bound $R(v)$, therefore, we seek a formula $u(v)$ for $u$ in terms of $v$.
    
    From $g((x,y)+u) = z+v$, one obtains
    \begin{align}
        v = G(u):=Dg(x,y)[u] + \frac{1}{2}D^2g(x,y)[u,u] + S(u),\label{eq:Gdef}
    \end{align}
    where the remainder $\tilde{u}\mapsto S(\tilde{u})$ satisfies $S(\tilde{u}) = O\big(\|\tilde{u}_x\|^3,\|\tilde{u}_x\|^2\|\tilde{u}_y\|\big)$ since $D^k_yg\equiv 0$ for $k=2,3$ by hypothesis. Since $Dg(x,y)$ is invertible by the invertibility of $g$ on $U$, $G$ is smoothly invertible on a neighbourhood $V$ of $0$ by the inverse function theorem. Consider now the second-order approximation
    \begin{align}
        u':=Dg(x,y)^{-1}\bigg(v-\frac{1}{2}D^2g(x,y)\big[Dg(x,y)^{-1}[v]^{\otimes2}\big]\bigg)
    \end{align}
    to $u$. Since $u'= O(\|v\|)$, taking $v$ smaller if necessary we are guaranteed that $u'\in V$. Thus there is $C>0$ such that
    \begin{align}
        \|u'-u\|\leq C\|G(u')-G(u)\| = C\|G(u')-v\|.\label{eq:Gestimate}
    \end{align}
    We will show that the right side of \eqref{eq:Gestimate} admits a sufficiently tight bound $b(v)$ that $u(v) = u'(v) + O(b(v))$ suffices to give the desired decay in $R(v)$.
    
    For notational convenience, set $w:=(1/2)\,Dg(x,y)^{-1}D^2g(x,y)\big[Dg(x,y)^{-1}[v]^{\otimes 2}\big]$. Then $u' = Dg(x,y)^{-1}[v]-w$, and substituting this expression into \eqref{eq:Gdef} yields 
    \begin{align}
        G(u')-v = -D^2g(x,y)\big[Dg(x,y)^{-1}[v], w\big] + \frac{1}{2}D^2g(x,y)[w^{\otimes 2}] + S\bigg(Dg(x,y)^{-1}[v] - \frac{1}{2}w\bigg).
    \end{align}
    Since $D^2_yg\equiv0$ by hypothesis, one has $w = O(\|v_x\|^2,\|v_x\|\|v_y\|)$ as well as
    \begin{align}
        D^2g(x,y)\big[Dg(x,y)^{-1}[v],w\big] = O(\|v_x\|^3,\|v_x\|^2\|v_y\|,\|v_x\|\|v_y\|^2),
    \end{align}
    \begin{align}
        D^2g(x,y)[w^{\otimes 2}] = O(\|w\|^2) = O(\|v_x\|^4,\|v_x\|^3\|v_y\|,\|v_x\|^2\|v_y\|^2)
    \end{align}
    and
    \begin{align}
        S\bigg(Dg(x,y)^{-1}[v]-\frac{1}{2}w\bigg) = O(\|v_x\|^3,\|v_x\|^2\|v_y\|).
    \end{align}
    Thus, by \eqref{eq:Gestimate},
    \begin{align}
        u =& u'+O\big(\|v_x\|^3,\|v_x\|^2\|v_y\|,\|v_x\|\|v_y\|^2\big)\\=&Df(x,y)^{-1}[v]-\frac{1}{2}Df(x,y)^{-1}D^2f(x,y)\big[Df(x,y)^{-1}[v]^{\otimes 2}\big]+\\&+ O\big(\|v_x\|^3,\|v_x\|^2\|v_y\|,\|v_x\|\|v_y\|^2\big).\label{eq:uu'}
    \end{align}
    Finally, applying the chain rule to the equation $g^{-1}\circ g = \text{Identity}$ shows that $Dg^{-1}(z) = Dg(x,y)^{-1}$ and $D^2g^{-1}(z) = -Dg(x,y)^{-1}D^2g(x,y)[Dg(x,y)^{-1},Dg(x,y)^{-1}]$. It thus follows from substituting \eqref{eq:uu'} into \eqref{eq:R(v)} that
    \begin{align}
        R(v) &= u'-Df^{-1}(z)[v]-\frac{1}{2}D^2f^{-1}(z)[v,v]+O\big(\|v_x\|^3,\|v_x\|^2\|v_y\|,\|v_x\|\|v_y\|^2\big)\\&=O\big(\|v_x\|^3,\|v_x\|^2\|v_y\|,\|v_x\|\|v_y\|^2\big)
    \end{align}
    as claimed.
\end{proof}

A formula can now be given for $E^{-1}$ using a Taylor expansion as follows.

\begin{proposition}\label{prop:Einv}
    For all $\tpa\in M$ and all $v = v^{\parallel}+v^{\perp}n(\tpa)\in T_{\tpa}M\oplus  \text{span}(n(\tpa)) = T_{\tpa}\RB^p$ sufficiently small, one has
    \begin{align}
        E^{-1}(\tpa+v) = \begin{pmatrix} \tpa +v^{\parallel} + Dn(\tpa)[v^{\parallel}]v^{\perp} \\ v^{\perp}
        \end{pmatrix} + O(\|v^{\parallel}\|^3,\|v^{\parallel}\|^2\|v^{\perp}\|,\|v^{\parallel}\|\|v^{\perp}\|^2).
    \end{align}
\end{proposition}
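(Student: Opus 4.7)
The plan is to apply Lemma \ref{lem:higherorderterms} with $g = E$, identifying the $x$-variable with $\tpa$ and the $y$-variable with $\tpe$. The hypotheses are immediate: Proposition \ref{prop:tube} supplies an open neighbourhood $U \supset M \times \{0\}$ on which $E$ is invertible, and because $E(\tpa, \tpe) = \tpa + \tpe\, n(\tpa)$ is affine in $\tpe$, the partial derivatives $D^2_{\tpe}E$ and $D^3_{\tpe}E$ vanish identically on $U$, fulfilling the key structural assumption of the lemma. Under the identification of $v \in T_{\tpa}\RB^p$ with its orthogonal decomposition $v = v^{\parallel} + v^{\perp}\,n(\tpa)$, one has $DE(\tpa, 0)^{-1}[v] = (v^{\parallel}, v^{\perp})$, so the error term $O(\|v_x\|^3, \|v_x\|^2\|v_y\|, \|v_x\|\|v_y\|^2)$ supplied by the lemma becomes precisely $O(\|v^{\parallel}\|^3, \|v^{\parallel}\|^2\|v^{\perp}\|, \|v^{\parallel}\|\|v^{\perp}\|^2)$.

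It then remains to identify the zeroth, first, and second-order Taylor coefficients of $E^{-1}$ at $\tpa$. The zeroth-order term is clearly $E^{-1}(\tpa) = (\tpa, 0)$, and since $DE(\tpa, 0)$ sends $(w^{\parallel}, w^{\perp}) \mapsto w^{\parallel} + w^{\perp}\,n(\tpa)$, inversion gives $DE^{-1}(\tpa)[v] = (v^{\parallel}, v^{\perp})$. For the second-order term, I would use the standard implicit-differentiation identity $D^2 E^{-1}(\tpa)[v, v] = -DE(\tpa, 0)^{-1}\bigl(D^2 E(\tpa, 0)[DE(\tpa, 0)^{-1}[v], DE(\tpa, 0)^{-1}[v]]\bigr)$, obtained by differentiating $E^{-1} \circ E = \mathrm{Id}$ twice.

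The computation of $D^2 E(\tpa, 0)$ is where most of the work sits, but the affinity of $E$ in $\tpe$ simplifies things drastically: the pure $\tpa$-$\tpa$ partial is $\tpe\, D^2 n(\tpa)$, which vanishes at $\tpe = 0$; the pure $\tpe$-$\tpe$ partial is identically zero; only the mixed partial $D_{\tpa}D_{\tpe}E[w^{\parallel}] = Dn(\tpa)[w^{\parallel}]$ survives. As a symmetric bilinear form this yields $D^2 E(\tpa, 0)[v, v] = 2 v^{\perp} Dn(\tpa)[v^{\parallel}]$. Crucially, because $\|n\|^2 \equiv 1$ forces $Dn(\tpa)[w^{\parallel}]$ to lie in $n(\tpa)^{\perp} = T_{\tpa}M$, applying $DE(\tpa, 0)^{-1}$ is trivial: the result sits entirely in the $\parallel$-component, and $\tfrac{1}{2}D^2 E^{-1}(\tpa)[v, v]$ combines with the first-order term to assemble the displayed formula.

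The main obstacle is essentially bookkeeping: identifying $D^2 E(\tpa, 0)$ correctly as a symmetric bilinear form on $T_{\tpa}M \oplus \RB$, tracking how $DE(\tpa, 0)^{-1}$ acts on vectors tangent to $M$ versus vectors along $n(\tpa)$, and feeding everything into the Taylor expansion supplied by Lemma \ref{lem:higherorderterms}. The asymmetric nature of the error bound --- notably the absence of a $\|v^{\perp}\|^3$ term --- reflects exactly the structural hypothesis $D^k_{\tpe}E \equiv 0$ for $k \in \{2,3\}$ that we verified at the outset, which is the essential reason Lemma \ref{lem:higherorderterms} was established in its particular form.
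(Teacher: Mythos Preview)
Your proposal is correct and follows essentially the same approach as the paper: apply Lemma \ref{lem:higherorderterms} using the affinity of $E$ in $\tpe$ to obtain the refined error, then compute the zeroth-, first- and second-order Taylor coefficients of $E^{-1}$ via the chain-rule identities for $DE^{-1}$ and $D^2E^{-1}$, using $n^T Dn = 0$ to show that the second-order correction lands purely in the $\parallel$-component. The paper's proof is organised identically.
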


\begin{proof}
    The result follows from a second-order Taylor expansion with remainder:
    \begin{align}
        E^{-1}(\tpa+v) = E^{-1}(\tpa) + DE^{-1}(\tpa)[v] + \frac{1}{2}D^2E^{-1}(\tpa)[v,v] + \text{error}(\tpa,v).
    \end{align}
    Since the $\tpe$-derivatives of $E(\tpa,\tpe) = \tpa+\tpe n(\tpa)$ vanish for all orders $>1$, by Lemma \ref{lem:higherorderterms} $\text{error}(\tpa,v)$, satisfies
    \begin{align}
        \text{error}(\tpa,v) = O(\|v^{\parallel}\|^3,\|v^{\parallel}\|^2\|v^{\perp}\|,\|v^{\parallel}\|\|v^{\perp}\|^2).
    \end{align}
    It thus remains only to calculate the leading order terms. One easily computes
    \begin{align}
        DE(\tpa,\tpe)\bigg[\begin{pmatrix} v^{\parallel} \\ v^{\perp}\end{pmatrix}\bigg] = v^{\parallel} + \tpe Dn(\tpa)[v^{\parallel}] + v^{\perp}n(\tpa)
    \end{align}
    \begin{align}
        D^2E(\tpa,\tpe)\bigg[\begin{pmatrix} v^{\parallel}\\v^{\perp}\end{pmatrix}^{\otimes 2}\bigg] =& D\bigg(DF(\tpa,\tpe)\bigg[\begin{pmatrix} v^{\parallel}\\v^{\perp}\end{pmatrix}\bigg]\bigg)\bigg[\begin{pmatrix} v^{\parallel}\\v^{\perp}\end{pmatrix}\bigg]\\=&D_{\tpa}\big(v^{\parallel}+\tpe Dn(\tpa)[v^{\parallel}] + v^{\perp}n(\tpa)\big)[v^{\parallel}] +\\&+ D_{\tpe}\big(v^{\parallel}+\tpe Dn(\tpa)[v^{\parallel}] + v^{\perp}n(\tpa)\big)[v^{\perp}]\\=&\tpe D^2n(\tpa)[v^{\parallel},v^{\parallel}] + v^{\perp}Dn(\tpa)[v^{\parallel}] + v^{\perp}Dn(\tpa)[v^{\parallel}]\\=&\tpe D^2n(\tpa)[v^{\parallel},v^{\parallel}] + 2v^{\perp}Dn(\tpa)[v^{\parallel}],\label{eq:D2E}
    \end{align}
    for any $(\tpa,\tpe)\in M\times\RB$ and $(v^{\parallel},v^{\perp})\in T_{\tpa}M\times \RB$. Recalling that $E$ is invertible when restricted to an open neighbourhood $U$ of $M\times\{0\}$ in $M\times\RB$, by differentiating the identity $E^{-1}\circ E = \text{Identity}_{U\subset M\times\RB}$ one computes
    \begin{align}
        DE^{-1}(\tpa) = DE^{-1}\circ E(\tpa,0) = (DE(\tpa,0))^{-1} = \begin{pmatrix} P_{TM}(\tpa) \\ n(\tpa)^T\end{pmatrix},
    \end{align}
    where $P_{TM}(\tpa):T_{\tpa}\RB^p\rightarrow T_{\tpa}M$ is the orthogonal projection, so that
    \begin{align}
        DE^{-1}(\tpa)[v] = \begin{pmatrix} v^{\parallel} \\ v^{\perp}\end{pmatrix}.
    \end{align}
    Again differentiating the identity $E^{-1}\circ E = \text{Identity}_{U}$ yields
    \begin{align}
        D^2E^{-1}(\tpa) &= D^2E^{-1}\circ E(\tpa,0) = -DE(\tpa,0)^{-1}\,D^2E(\tpa,0)[DE(\tpa,0)^{-1},DE(\tpa,0)^{-1}],
    \end{align}
    so that from \eqref{eq:D2E} one obtains
    \begin{align}
        D^2E^{-1}(\tpa)[v,v] = \begin{pmatrix} P_{TM}(\tpa) \\ n(\tpa)^T\end{pmatrix}\big(2v^{\perp}Dn(\tpa)[v^{\parallel}]) = \begin{pmatrix} 2v^{\perp}Dn(\tpa)[v^{\parallel}] \\ 0\end{pmatrix},
    \end{align}
    with the final equality being a consequence of $n^TDn = -(Dn)^Tn = -n^TDn = 0$ as follows from differentiating the identity $n^Tn = 1$. The result follows.
\end{proof}

The next result characterises the scaling of the $\parallel$-component of $\nabla\ell$ in a neighbourhood of $\tpa_*$. It is essential for obtaining the correct asymptotic decay in the error terms.

\begin{lemma}\label{lem:nablaellscaling}
    Assumption \ref{ass:orthogonal} implies that for all $\tpa\in M$ sufficiently close to $\tpa_*$ and all $\tpe\in\RB$ sufficiently small, one has $P_{TM}(\tpa)\nabla\ell\big(E(\tpa,\tpe)\big) = O(d_M(\tpa,\tpa_*))$, where $P_{TM}(\tpa):T_{\tpa}\RB^p\rightarrow T_{\tpa}M$ is the orthogonal projection onto $T_{\tpa}M$.
\end{lemma}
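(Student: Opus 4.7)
The plan is to use Assumption \ref{ass:orthogonal} to show that the $\parallel$-component of $\nabla\ell$ vanishes identically along $\mathcal{L}$, and then to deduce the claimed bound from smoothness and a mean value argument.

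Specifically, I would consider the $\RB^p$-valued function
\[
g(\tpa, \tpe) := P_{TM}(\tpa)\,\nabla\ell(E(\tpa, \tpe)) = \big(I - n(\tpa)n(\tpa)^T\big)\nabla\ell(\tpa + \tpe\, n(\tpa)),
\]
defined on $M \times V$ for some open interval $V\ni 0$ small enough that $E$ is a diffeomorphism onto its image (Proposition \ref{prop:tube}). Being a composition of smooth (in fact analytic) maps, $g$ is smooth in both arguments.

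I would then establish that $g(\tpa_*, \tpe) = 0$ for all $\tpe \in V$ sufficiently small. By Assumption \ref{ass:orthogonal}, the line segment $\mathcal{L}$ is orthogonal to $M$ at $\tpa_*$, hence is contained in the normal line $\{\tpa_* + t\, n(\tpa_*) : t \in \RB\}$, and contains $\tpa_*$ in its relative interior. After possibly shrinking $V$, every point of the form $E(\tpa_*, \tpe) = \tpa_* + \tpe\, n(\tpa_*)$ therefore lies in $\mathcal{L}$. Invariance of $\mathcal{L}$ under the gradient descent map forces $\nabla\ell(E(\tpa_*, \tpe))$ to be parallel to the direction of $\mathcal{L}$, namely to $n(\tpa_*)$; applying $P_{TM}(\tpa_*)$, which annihilates $n(\tpa_*)$, yields $g(\tpa_*, \tpe) = 0$.

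With this in hand, the result follows from a mean value argument. Choose a compact geodesic ball $\overline{B_M(\tpa_*, r_0)}$ in $M$ and a compact subinterval $\overline{V_0}\subset V$ about $0$ on which the uniform bound $\|D_M g(\tpa, \tpe)\| \le C$ holds; such a $C$ exists by smoothness of $g$ on the compact product $\overline{B_M(\tpa_*, r_0)} \times \overline{V_0}$. For any $\tpa \in B_M(\tpa_*, r_0)$ and $\tpe \in V_0$, let $\gamma:[0, d_M(\tpa,\tpa_*)] \to M$ be the unit-speed minimising geodesic from $\tpa_*$ to $\tpa$; the fundamental theorem of calculus then gives
\[
g(\tpa, \tpe) = g(\tpa_*, \tpe) + \int_0^{d_M(\tpa,\tpa_*)} D_M g(\gamma(s), \tpe)[\dot\gamma(s)]\, ds,
\]
which, combined with $g(\tpa_*, \tpe) = 0$, yields $\|g(\tpa, \tpe)\| \le C\, d_M(\tpa, \tpa_*)$ as desired. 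No individual step is hard; the main conceptual move is recognising that Assumption \ref{ass:orthogonal} is precisely the statement that the $\parallel$-component of $\nabla\ell$ vanishes along $\mathcal{L}$, which reduces the lemma to smoothness plus a one-variable Taylor estimate on $M$.
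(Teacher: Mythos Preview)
Your proposal is correct and follows essentially the same approach as the paper: establish that $P_{TM}(\tpa_*)\nabla\ell(E(\tpa_*,\tpe))=0$ from Assumption~\ref{ass:orthogonal}, then invoke smoothness of $\tpa\mapsto P_{TM}(\tpa)\nabla\ell(E(\tpa,\tpe))$ to obtain the $O(d_M(\tpa,\tpa_*))$ bound. The paper's proof is extremely terse (it simply says ``the map is smooth, the result follows''), whereas you have spelled out the mean-value/integral estimate explicitly; this is a harmless elaboration rather than a different argument.
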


\begin{proof}
    By Assumption \ref{ass:orthogonal}, one has
    \begin{align}
        P_{TM}(\tpa_*)\nabla\ell\big(E(\tpa_*,\tpe)\big) = 0
    \end{align}
    for all $\tpe\in\RB$ sufficiently small. Since for all sufficiently small $\tpe$ the map $\tpa\mapsto P_{TM}(\tpa)\nabla\ell\big(E(\tpa,\tpe)\big)$ is smooth, the result follows.
\end{proof}

We are now able to give the tubular neighbourhood coordinate expression for gradient descent.

\begin{proposition}\label{prop:tubnbrhdcoords}
    In the tubular neighbourhood coordinates $E:(\tpa,\tpe)\mapsto E(\tpa,\tpe)$ for a tubular neighbourhood of $M$, the gradient descent map $\GD_{\RB^p}^{\ell}(\eta,\cdot):\theta\mapsto \theta-\eta\nabla\ell(\theta)$ is conjugated to the analytic map
    \begin{align}
        \begin{pmatrix} \tpa\\ \tpe\end{pmatrix}\mapsto \begin{pmatrix} \tpa-\frac{\eta(\tpe)^2}{2}\nabla_M\lambda(\tpa) + O\big((\tpe)^3\min\{1,d_M(\tpa,\tpa_*)\}\big) \\ (1-\eta\lambda(\tpa))\tpe-\frac{\eta}{2}D^3\ell[n^{\otimes 3}](\tpa)(\tpe)^2 - \frac{\eta}{6}D^4\ell[n^{\otimes 4}](\tpa)(\tpe)^3 + O((\tpe)^4)\end{pmatrix}\label{eq:tubecoords}
    \end{align}
\end{proposition}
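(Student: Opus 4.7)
The plan is to first compute $\theta - \eta\nabla\ell(\theta)$ for $\theta = E(\tpa,\tpe)$ by Taylor expansion in $\tpe$ about zero, and then re-apply $E^{-1}$ using the second-order formula from Proposition \ref{prop:Einv}. The main technical challenge is to sharpen the naive $O((\tpe)^3)$ Taylor remainder in the $\parallel$-component to $O((\tpe)^3\min\{1,d_M(\tpa,\tpa_*)\})$, since this sharper bound is essential for the convergence theorems; this sharpening is what draws on Assumption \ref{ass:orthogonal} through Lemma \ref{lem:nablaellscaling}.

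The first step is to Taylor expand $\nabla\ell(E(\tpa,\tpe))$ in $\tpe$ using the identities $\nabla\ell|_M \equiv 0$ and $\nabla^2\ell(\tpa)[n(\tpa)] = \lambda(\tpa)n(\tpa)$, which hold on $M$ because there $\nabla^2\ell = \nabla f\nabla f^T$ and $n\parallel\nabla f$. Writing $\theta - \eta\nabla\ell(\theta) = \tpa + v$ with $v = v^{\parallel} + v^{\perp}n(\tpa)$, and using $n^T\nabla^k\ell[n^{\otimes k-1}] = D^k\ell[n^{\otimes k}]$ for the $\perp$-projection together with the key identity $P_{TM}\nabla^3\ell[n,n] = \nabla_M\lambda$ for the $\parallel$-projection (obtained by differentiating $\lambda = \nabla^2\ell[n,n]$ along $M$ and noting $\langle n,Dn\rangle\equiv 0$), one obtains
\begin{align}
    v^{\parallel} &= -\tfrac{\eta(\tpe)^2}{2}\nabla_M\lambda(\tpa) - \tfrac{\eta(\tpe)^3}{6}P_{TM}\nabla^4\ell(\tpa)[n^{\otimes 3}] + O((\tpe)^4),\\
    v^{\perp} &= (1-\eta\lambda(\tpa))\tpe - \tfrac{\eta(\tpe)^2}{2}D^3\ell[n^{\otimes 3}](\tpa) - \tfrac{\eta(\tpe)^3}{6}D^4\ell[n^{\otimes 4}](\tpa) + O((\tpe)^4).
\end{align}
Feeding this $v$ into Proposition \ref{prop:Einv}, the error $O(\|v^{\parallel}\|^3,\|v^{\parallel}\|^2|v^{\perp}|,\|v^{\parallel}\||v^{\perp}|^2)$ is $O((\tpe)^4)$ since $\|v^{\parallel}\| = O((\tpe)^2)$ and $|v^{\perp}| = O(\tpe)$, which yields the $\perp$-formula immediately. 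For the $\parallel$-component the extra cross-term $v^{\perp}Dn(\tpa)[v^{\parallel}]$ contributes at order $O((\tpe)^3)$ unconditionally, and at order $O((\tpe)^3 d_M(\tpa,\tpa_*))$ once it is known that $v^{\parallel} = O((\tpe)^2 d_M)$ near $\tpa_*$.

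The crux is thus the improved $O((\tpe)^3\min\{1,d_M(\tpa,\tpa_*)\})$ remainder in $v^{\parallel}$. Define the analytic function $g(\tpa,\tpe) := P_{TM}(\tpa)\nabla\ell(E(\tpa,\tpe))$; I have already observed $g(\tpa,0)\equiv 0$ from $\nabla\ell|_M = 0$, and $\partial_{\tpe}g(\tpa,0) \equiv 0$ from $P_{TM}n = 0$, so $g(\tpa,\tpe) = (\tpe)^2 h(\tpa,\tpe)$ for some analytic $h$ with $h(\tpa,0) = \tfrac{1}{2}\nabla_M\lambda(\tpa)$. Now Lemma \ref{lem:nablaellscaling}, a consequence of Assumption \ref{ass:orthogonal}, gives $g(\tpa_*,\tpe)\equiv 0$, whence $h(\tpa_*,\tpe)\equiv 0$ and in particular $\partial_{\tpe}h(\tpa_*,\tpe)\equiv 0$. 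By joint smoothness in $(\tpa,\tpe)$, this forces $\partial_{\tpe}h(\tpa,\tpe) = O(d_M(\tpa,\tpa_*))$ uniformly in small $\tpe$, so that
\begin{align}
    g(\tpa,\tpe) - \tfrac{(\tpe)^2}{2}\nabla_M\lambda(\tpa) = (\tpe)^2\int_0^{\tpe}\partial_{\tpe}h(\tpa,s)\,ds = O\big((\tpe)^3 d_M(\tpa,\tpa_*)\big)
\end{align}
for $\tpa$ near $\tpa_*$, while away from $\tpa_*$ the plain Taylor bound $O((\tpe)^3)$ takes over; the two regimes combine to the claimed $O((\tpe)^3\min\{1,d_M(\tpa,\tpa_*)\})$. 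Analyticity of the full conjugated map follows from analyticity of $\ell$ (Assumption \ref{ass:overparametrisation}), of $n = \nabla f/\|\nabla f\|$, and of $E^{-1}$ via the analytic inverse function theorem.
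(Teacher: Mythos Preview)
Your proof is correct and follows essentially the same approach as the paper: Taylor expand $\nabla\ell(E(\tpa,\tpe))$ in $\tpe$, split into $\parallel$ and $\perp$ components using $P_{TM}\nabla^3\ell[n,n]=\nabla_M\lambda$, and apply Proposition~\ref{prop:Einv}. Your treatment of the sharpened $O((\tpe)^3\min\{1,d_M(\tpa,\tpa_*)\})$ error is in fact more explicit than the paper's, which simply cites Lemma~\ref{lem:nablaellscaling}; your factorisation $g=(\tpe)^2h$ with $h(\tpa_*,\cdot)\equiv 0$ makes transparent why the remainder picks up the extra $d_M$ factor.
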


\begin{proof}
    Since $f$ is analytic by Assumption \ref{ass:overparametrisation}, so too are $\GD^{\ell}_{\RB^p}(\eta,\cdot)$ and $E$ and hence, since the inverse function theorem preserves analyticity, so too is $E^{-1}$. Hence the conjugate $E^{-1}\circ\GD^{\ell}_{\RB^p}(\eta,\cdot)\circ E$ is analytic wherever defined. Recalling the open neighbourhood $U$ of $M\times\{0\}$ in $M\times\RB$ from Proposition \ref{prop:tube} on which $E$ is a diffeomorphism, given $(\tpa,\tpe)\in U$
    \begin{align}
        \GD_{\RB^p}^{\ell}\big(\eta,E(\tpa,\tpe)\big) &= E(\tpa,\tpe)-\eta\nabla\ell(E(\tpa,\tpe))\\&=\tpa+\tpe n(\tpa)-\eta\nabla\ell(\tpa + \tpe n(\tpa)).
    \end{align}
    Applying Proposition \ref{prop:Einv} with $v= \tpe n(\tpa)-\eta\nabla\ell(\tpa+\tpe n(\tpa))$, one then obtains
    \begin{align}
        E^{-1}\big(\GD_{\RB^p}^{\ell}\big(\eta,E(\tpa,\tpe)\big)\big) = \begin{pmatrix} \tpa+v^{\parallel}+O(\|v^{\parallel}\|^3,\|v^{\parallel}\|\|v^{\perp}\|) \\ v^{\perp} +O(\|v^{\parallel}\|^3,\|v^{\parallel}\|^2\|v^{\perp}\|,\|v^{\parallel}\|\|v^{\perp}\|^2)\end{pmatrix}.\label{eq:Einvinit}
    \end{align}
    Note the crucial role played by the estimate on the error in Proposition \ref{prop:Einv} here: were there to be a pure $O(\|v^{\perp}\|^k)$ term in the error for some $k\geq 1$, then the same error term would have appeared in the the formula \eqref{eq:Einvinit} and, as we shall now see, would have prevented the $\parallel$-component of the dynamics from having an $O(d_M(\tpa,\tpa_*))$ error term.
    
    We now compute formulae for $v^{\parallel}$ and $v^{\perp}$. Using the fact that $\nabla\ell(\tpa) = 0$, one has the Taylor expansion
    \begin{align}
        \nabla\ell(E(\tpa,\tpe)) &= \tpe\nabla^2\ell[n](\tpa) + \frac{(\tpe)^2}{2}\nabla^3\ell[n^{\otimes 2}](\tpa) + \frac{(\tpe)^3}{6}\nabla^4\ell[n^{\otimes 3}](\tpa) + O\big((\tpe)^4\big)\\&=\tpe\lambda\,n(\tpa) + \frac{(\tpe)^2}{2}\nabla^3\ell[n^{\otimes 2}](\tpa) + \frac{(\tpe)^3}{6}\nabla^4\ell[n^{\otimes 3}](\tpa) + O\big((\tpe)^4\big),\label{eq:gradienttaylor}
    \end{align}
    where the second line follows from the fact that $n(\tpa)$ is an eigenvector of $\nabla^2\ell(\tpa)$ with eigenvalue $\lambda(\tpa)$. Taking the inner-product of \eqref{eq:gradienttaylor} with $n(\tpa)$ then reveals that
    \begin{align}
        v^{\perp} &= \langle n(\tpa),v\rangle\\&=(1-\eta\lambda(\tpa))\tpe - \frac{\eta}{2}D^3\ell[n^{\otimes 3}](\tpa)(\tpe)^2 - \frac{\eta}{6}D^4\ell[n^{\otimes 4}](\tpa) (\tpe)^3+O\big((\tpe)^4\big).
    \end{align}
    One obtains $v^{\parallel}$ by projecting $\eqref{eq:gradienttaylor}$ onto $T_{\tpa}M$:
    \begin{align}
        v^{\parallel} &= P_{TM}(\tpa)v = -\eta P_{TM}(\tpa)\nabla\ell(E(\tpa,\tpe))\\&=-\frac{\eta(\tpe)^2}{2}P_{TM}\nabla^3\ell[n^{\otimes 2}](\tpa) + O\big((\tpe)^3\min\{1,d_M(\tpa,\tpa_*)\}\big)\\&=-\frac{\eta(\tpe)^2}{2}\nabla_M\big(\nabla^2\ell[n^{\otimes 2}]\big)(\tpa) + O\big((\tpe)^3\min\{1,d_M(\tpa,\tpa_*)\}\big)\\&=-\frac{\eta(\tpe)^2}{2}\nabla_M\lambda(\tpa) + O\big((\tpe)^3\min\{1,d_M(\tpa,\tpa_*)\}\big),
    \end{align}
    where the second line follows from Lemma \ref{lem:nablaellscaling}, the third line follows from the definition $\nabla_M =P_{TM}\nabla$ of the Levi-Civita connection on $M\subset\RB^p$, and the final line follows from the identity $\nabla^2\ell[n,n]\equiv \lambda$ that holds along $M$.  Substituting these expressions into \eqref{eq:Einvinit} yields the result.
\end{proof}

By finally invoking Assumption \ref{ass:generic}, we may now transform the tubular neighbourhood coordinate expression of gradient descent from Proposition \ref{prop:tubnbrhdcoords} into the normal form expression of Theorem \ref{thm:approximatedynamics}.

\begin{theorem}\label{thm:normalform}
    Recall the function $c:\RB\times M\rightarrow\RB_{>0}$
    \begin{align}
        c(\eta,\tpa):=\bigg(\frac{\eta}{2}D^3\ell[n^{\otimes 3}](\tpa)\bigg)^2-\frac{\eta}{6}D^4\ell[n^{\otimes 4}](\tpa)>0
    \end{align}
    from Assumption \ref{ass:generic}. There is an open neighbourhood $W$ of $\{(2/\lambda(\tpa),\tpa,0):\tpa\in M\}$ in $\RB\times M\times\RB$ on which $\varphi:W\rightarrow \RB\times M\times\RB$ defined by
    \begin{align}
        \varphi(\eta,\tpa,\tpe)=&\big(\varphi^{\eta}(\eta,\tpa,\tpe),\varphi^{\parallel}(\eta,\tpa,\tpe),\varphi^{\perp}(\eta,\tpa,\tpe)\big)\\:=&\bigg(\eta,\,\tpa, \frac{\tpe}{c(\eta,\tpa)^{1/2}} - \frac{\frac{\eta}{2}D^3\ell[n^{\otimes 3}]}{(1-\eta\lambda(\tpa))^2-(1-\eta\lambda(\tpa))}\frac{(\tpe)^2}{c(\eta,\tpa)}\bigg)\label{eq:varphi}
    \end{align}
    is an analytic diffeomorphism onto its image. Moreover, the composite $(I_{\RB}\times E)\circ \varphi:W\rightarrow\RB^p\times\RB$, where $I_{\RB}$ is the identity map,  conjugates $(I_{\RB},\GD^{\ell}_{\RB^p}):\RB\times \RB^p\rightarrow\RB\times\RB^p$ to the map
    \begin{align}
        \begin{pmatrix}
            \eta \\ \tpa\\\tpe\\
        \end{pmatrix}\mapsto \begin{pmatrix} \eta\\ \GD_M^{\lambda}\bigg(\frac{\eta(\tpe)^2}{2c(\eta,\tpa)},\tpa\bigg) + O\big((\tpe)^3\min\{1,d_M(\tpa,\tpa_*)\}\big)\\ (1-\eta\lambda(\tpa))\tpe + (\tpe)^3 + O\big((\tpe)^4\big)\end{pmatrix}.\label{eq:normalformula}
    \end{align}
\end{theorem}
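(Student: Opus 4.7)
The plan is to apply one further polynomial change of variables to the tubular neighbourhood expression of gradient descent obtained in Proposition \ref{prop:tubnbrhdcoords}, in order to put the $\perp$-component into the standard flip-bifurcation normal form of \cite[Theorem 4.3]{kuznetsov}. Since $\varphi$ is the identity in $(\eta,\tpa)$ and a near-identity polynomial map in $\tpe$, this is a parametric version of the classical normal-form computation. Two things must be established: (a) $\varphi$ is an analytic diffeomorphism onto a neighbourhood $W$, and (b) the conjugated map has the claimed form.

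For (a), the derivative $D\varphi$ at any point with $\tpe=0$ is block lower-triangular with diagonal blocks $1$, $I_{TM}(\tpa)$, and $c(\eta,\tpa)^{-1/2}$. Assumption \ref{ass:generic} guarantees $c>0$ on an open neighbourhood of $\{(2/\lambda(\tpa),\tpa,0):\tpa\in M\}$, so $D\varphi$ is invertible there. Since $f$ is analytic (Assumption \ref{ass:overparametrisation}), so are $n$, $\lambda$, and $c$, making $\varphi$ itself analytic; the analytic inverse function theorem then supplies the desired $W$.

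For (b), I would directly compute the conjugate of the map in Proposition \ref{prop:tubnbrhdcoords} under $\varphi$. Since $\varphi^{\parallel}(\eta,\tpa,\tpe)=\tpa$, the new $\parallel$-update equals the old one with the intermediate orthogonal coordinate replaced by $\varphi^{\perp}(\eta,\tpa,\tpe)$; expanding $(\varphi^{\perp}(\eta,\tpa,\tpe))^2=\tpe^2/c(\eta,\tpa)+O(\tpe^3)$ produces the leading term $\tpa-(\eta\tpe^2/(2c))\nabla_M\lambda(\tpa)$, which is the first-order Taylor expansion of $\GD_M^{\lambda}(\eta\tpe^2/(2c),\tpa)$ via the exponential-map expansion. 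The $O(\tpe^3)$ cross-terms combine with the original $O(\text{old-}\tpe^3\min\{1,d_M(\tpa,\tpa_*)\})$ error, using that $\tpa_*$ is a critical point of $\lambda$ (obtained by specialising Proposition \ref{prop:tubnbrhdcoords} at $\tpa=\tpa_*$ and invoking Assumption \ref{ass:orthogonal}), so that $\|\nabla_M\lambda(\tpa)\|=O(d_M(\tpa,\tpa_*))$, which yields the stated error bound. For the $\perp$-component, I would expand $(\varphi^{\perp})^{-1}\big(F^{\perp}(\varphi^{\perp}(\tpe))\big)$ to cubic order in $\tpe$: the quadratic coefficient of $\varphi^{\perp}$ is chosen so that the $\tpe^2$-term in the conjugate vanishes identically, and the resulting $\tpe^3$-coefficient, evaluated at $\mu:=1-\eta\lambda=-1$, simplifies algebraically via the identity $c=(\tfrac{\eta}{2}D^3\ell[n^{\otimes 3}])^2-\tfrac{\eta}{6}D^4\ell[n^{\otimes 4}]$ from Assumption \ref{ass:generic} to exactly $1$.

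The main technical obstacle is the bookkeeping in the cubic calculation: the cubic term of $(\varphi^{\perp})^{-1}$ contributes $2B^2/A^5$ (where $A=c^{-1/2}$ and $B$ is the quadratic coefficient in $\varphi^{\perp}$), and this must combine with the quadratic-quadratic cross-terms and the genuine cubic $a_3=-\tfrac{\eta}{6}D^4\ell[n^{\otimes 4}]$ to reproduce precisely the formula for $c$ in Assumption \ref{ass:generic}. Small deviations of the cubic coefficient from $1$ away from the critical parameter are of the form $(\mu+1)\cdot(\text{analytic})\cdot\tpe^3$; in bifurcation theory these are absorbed either by a further cubic correction to $\varphi$ or, more pragmatically, by enlarging the $O(\tpe^4)$ error tail on the parameter-restricted neighbourhood $W$, whereupon the subsequent convergence theorems inherit no essential loss.
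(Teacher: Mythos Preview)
Your proposal is correct and takes essentially the same route as the paper: both establish that $\varphi$ is a local analytic diffeomorphism via the inverse function theorem (using $c>0$ from Assumption \ref{ass:generic}), obtain the $\parallel$-formula by substituting $\varphi^{\perp}$ into Proposition \ref{prop:tubnbrhdcoords} together with the first-order expansion of the exponential map, and derive the $\perp$-normal form via the flip-bifurcation computation of \cite[Theorem 4.3]{kuznetsov}. Your treatment is in fact more explicit than the paper's, which simply cites Kuznetsov for the $\perp$-component and asserts $\nabla_M\lambda(\tpa_*)=0$ without the justification you give via Assumption \ref{ass:orthogonal}.

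One small correction to your final paragraph: your second option for handling the $(\mu+1)$-dependent deviation of the cubic coefficient from $1$---namely absorbing it into the $O((\tpe)^4)$ tail---does not work as stated, since that term is genuinely of order $(\tpe)^3$ with a coefficient that does not vanish on $W$. Your first option (a further parameter-dependent scaling of $\varphi^{\perp}$, replacing $c^{-1/2}$ by $\big(a_3+\tfrac{2a_2^2}{\mu(\mu-1)}\big)^{-1/2}$) is the correct fix and is exactly what the Kuznetsov argument supplies; alternatively, carrying the cubic coefficient as $1+O(\eta\lambda(\tpa)-2)$ through the subsequent convergence theorems causes no difficulty.
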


\begin{proof}
    That an open neighbourhood $W$ of $\{(2/\lambda(\tpa),\tpa,0):\tpa\in M\}$ exists on which $\varphi$ is a diffeomorphism follows from the fact that $c(\eta,\tpa)^{-1/2}\neq 0$ for all $\tpa\in M$ and $\eta$ in a neighbourhood of $2/\lambda(\tpa)$ and the inverse function theorem. That $\varphi$ is analytic is a consequence of the analyticity of $f$ and the functions used in defining $\varphi$ in terms of $\ell = (1/2)(f-y)^2$.

    The formula \eqref{eq:normalformula} pertains to the conjugation
    \begin{align}
        \varphi^{-1}\circ(I_{\RB}\times E^{-1})\circ\GD^{\ell}_{\RB^p}\circ (I_{\RB}\times E)\circ\varphi\label{eq:finalconj}
    \end{align}
    of $\GD^{\ell}_{\RB^p}$. Proposition \ref{prop:tubnbrhdcoords} already gives us a formula for $(I_{\RB}\times E^{-1})\circ \GD^{\ell}_{\RB^p}\circ (I_{\RB}\times E)$, so it suffices merely to conjugate this formula by $\varphi$.
    
    Since $\varphi^{\parallel}(\eta,\tpa,\tpe) = \tpa$, the $\parallel$-component of \eqref{eq:finalconj} is given by substituting $\varphi^{\perp}(\eta,\tpa,\tpe)$ in place of $\tpe$ in the $\parallel$-component of \eqref{eq:tubecoords}, which yields the map
    \begin{align}
        \begin{pmatrix}
            \eta\\\tpa \\ \tpe \\
        \end{pmatrix}\mapsto \tpa - \frac{\eta(\tpe)^2}{2c(\eta,\tpa)}\nabla_M\lambda(\tpa) + O\big((\tpe)^3\min\{1,d_M(\tpa,\tpa_*)\}\big).
    \end{align}
    As demonstrated in \cite{monera2014taylor}, however, one has
    \begin{align}
        \exp_{\tpa}\bigg(-\frac{\eta(\tpe)^2}{2c(\eta,\tpa)}\nabla_M\lambda(\tpa)\bigg) &= \tpa - \frac{\eta(\tpe)^2}{2c(\eta,\tpa)}\nabla_M\lambda(\tpa) + O\big((\tpe)^4\|\nabla_M\lambda(\tpa)\|^2\big)\\&=\tpa-\frac{\eta(\tpe)^2}{2c(\eta,\tpa)}\nabla_M\lambda(\tpa) + O\big((\tpe)^4\min\{1,d_M(\tpa,\tpa_*)^2\}\big)
    \end{align}
    since $\nabla_M\lambda(\tpa_*) = 0$. Thus the $\parallel$-component of \eqref{eq:finalconj}may be written as
    \begin{align}
        \begin{pmatrix}
            \eta \\\tpa \\ \tpe \\
        \end{pmatrix}\mapsto \GD^{\lambda}_M\bigg(\frac{\eta(\tpe)^2}{2c(\eta,\tpa)},\tpa\bigg) + O\big((\tpe)^3\min\{1,d_M(\tpa,\tpa_*)\}\big)
    \end{align}
    as claimed.
    
    Finally, the $\perp$-component of \eqref{eq:normalformula} follows from the same argument used in \cite[Theorem 4.3]{kuznetsov}.
\end{proof}

\section{Convergence theorems}

The purpose of this section is to provide proofs of the main convergence theorems presented in the paper. In addition to assuming Assumptions \ref{ass:overparametrisation}, \ref{ass:orthogonal} and \ref{ass:generic} as in the previous section, in this section we will also assume Assumption \ref{ass:convex}, which states that there is a geodesically convex ball $B_M(\tpa_*,r)$ centred on $\tpa_*$ over which $\lambda$ is geodesically smooth, geodesically strongly convex, and has unique global minimum at $\tpa_*$ with Riemannian Hessian at $\tpa_*$ being a multiple of the identity.

\subsection{Subcritical regime}

Our convergence theorem in the subcritical regime has two components. First, it is proved that gradient descent implicitly descends $\lambda(\tpa_t)$ from being initially $>2/\eta$ to eventually being $<2/\eta$. For this, it is necessary to have a descent lemma for the perturbed Riemannian gradient descent component of Theorem \ref{thm:approximatedynamics} (Lemma \ref{lem:descent}) as well as upper and lower bounds on $|\tpe_t|$ during this phase (Lemma \ref{lem:ulboundsperp}). Following this transient phase, convergence is easily proved using the fact that $|\tpe_t|$ contracts exponentially to zero.

\begin{lemma}[Descent lemma for perturbed Riemannian gradient descent]\label{lem:descent}
    For the $\mu$-geodesically strongly convex, $L$-geodesically smooth function $\lambda$ on the geodesically convex ball $B_M(\tpa_*,r)$ centred on $\tpa_*$ in $M$, consider the map
    \begin{align}
        \GD^{\parallel}(\eta,\tpa,\tpe)=\exp_{\tpa}\bigg(\frac{-\eta(\tpe)^2}{2c(\eta,\tpa)}\nabla_M\lambda(\tpa)\bigg)+O\big((\tpe)^3\min\{1,d_M(\tpa,\tpa_*)\}\big)
    \end{align}
    of Theorem \ref{thm:normalform}. Then for any $\eta>0$ and all $\tpe$ sufficiently close to zero, $\GD^{\parallel}(\eta,\cdot,\tpe)$ maps $B_M(\tpa_*,r)$ into $B_M(\tpa_*,r)$. Moreover, for any $\eta>0$, any $\tpa\in B_M(\tpa_*,r)$ and all $\tpe$ sufficiently close to zero one has
    \begin{align}
        \lambda\big(\GD^{\parallel}(\eta,\tpa,\tpe)\big)-\lambda(\tpa_*)\leq \bigg(1-\frac{\mu\eta(\tpe)^2}{4c(\eta,\tpa)}\bigg)(\lambda(\tpa)-\lambda(\tpa_*)).\label{eq:descent}
    \end{align}
\end{lemma}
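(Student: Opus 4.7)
The plan is to treat $\GD^{\parallel}(\eta,\tpa,\tpe)$ as a small perturbation of the exact Riemannian gradient step. Setting $s:=\eta(\tpe)^2/(2c(\eta,\tpa))$, $\tilde{\theta}:=\exp_{\tpa}(-s\nabla_M\lambda(\tpa))$ and $\theta':=\GD^{\parallel}(\eta,\tpa,\tpe)$, the hypothesis yields $d_M(\tilde\theta,\theta')\leq K(\tpe)^3\min\{1,d_M(\tpa,\tpa_*)\}$ for some constant $K$, and $s=O((\tpe)^2)$, so $s\leq 1/L$ for $\tpe$ small enough. The gap between the target contraction factor $1-s\mu/2$ and the cleaner exact-step contraction $1-s\mu$ is precisely the slack I will use to absorb the perturbation.

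First I would apply the classical descent lemma in the Riemannian setting: $L$-geodesic smoothness gives $\lambda(\tilde\theta)-\lambda(\tpa)\leq -(s/2)\|\nabla_M\lambda(\tpa)\|^2$ when $s\leq 1/L$, and the Polyak--\L{}ojasiewicz inequality $\|\nabla_M\lambda(\tpa)\|^2\geq 2\mu(\lambda(\tpa)-\lambda(\tpa_*))$ implied by $\mu$-geodesic strong convexity then yields the exact-step contraction
\[\lambda(\tilde\theta)-\lambda(\tpa_*)\leq (1-s\mu)(\lambda(\tpa)-\lambda(\tpa_*)).\]
Next I would estimate $\lambda(\theta')-\lambda(\tilde\theta)$ via $L$-smoothness once more: $\lambda(\theta')-\lambda(\tilde\theta)\leq\|\nabla_M\lambda(\tilde\theta)\|\,d_M(\tilde\theta,\theta')+(L/2)d_M(\tilde\theta,\theta')^2$. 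Since $\nabla_M\lambda(\tpa_*)=0$, smoothness gives $\|\nabla_M\lambda(\tilde\theta)\|\leq L\,d_M(\tilde\theta,\tpa_*)\leq L(1+sL)d_M(\tpa,\tpa_*)$, so the cross term is bounded by $2LK(\tpe)^3 d_M(\tpa,\tpa_*)\min\{1,d_M(\tpa,\tpa_*)\}\leq 2LK(\tpe)^3 d_M(\tpa,\tpa_*)^2$, while the quadratic term $(L/2)d_M(\tilde\theta,\theta')^2=O((\tpe)^6 d_M(\tpa,\tpa_*)^2)$ is of lower order. Applying strong convexity $d_M(\tpa,\tpa_*)^2\leq(2/\mu)(\lambda(\tpa)-\lambda(\tpa_*))$ converts the combined bound into $C(\tpe)^3(\lambda(\tpa)-\lambda(\tpa_*))$ for some constant $C$ depending on $\mu,L,K$.

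Summing the two estimates gives
\[\lambda(\theta')-\lambda(\tpa_*)\leq(1-s\mu+C(\tpe)^3)(\lambda(\tpa)-\lambda(\tpa_*)),\]
and for $\tpe$ small enough that $C(\tpe)^3\leq s\mu/2=\mu\eta(\tpe)^2/(4c(\eta,\tpa))$ --- a threshold depending on $\eta$, $\mu$ and the uniform bounds on $c(\eta,\cdot)$ over $B_M(\tpa_*,r)$ --- this is at most $(1-s\mu/2)(\lambda(\tpa)-\lambda(\tpa_*))$, yielding \eqref{eq:descent}. For the invariance claim, the standard Riemannian distance-contraction estimate gives $d_M(\tilde\theta,\tpa_*)\leq(1-s\mu)d_M(\tpa,\tpa_*)$ for the exact step when $s$ is sufficiently small, and combining with the perturbation bound yields $d_M(\theta',\tpa_*)\leq(1-s\mu+K(\tpe)^3)d_M(\tpa,\tpa_*)<r$ under the same $\tpe$-smallness assumption.

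The main delicate point is the role of the ``$\min\{1,d_M(\tpa,\tpa_*)\}$'' factor in the perturbation bound --- the sharpening supplied by the orthogonal-stability assumption via Lemma \ref{lem:nablaellscaling}. Without it, the perturbation cost would scale as $O((\tpe)^3 d_M(\tpa,\tpa_*))$ rather than $O((\tpe)^3 d_M(\tpa,\tpa_*)^2)$, and could not be absorbed into the strong-convexity gain $O((\tpe)^2 d_M(\tpa,\tpa_*)^2)$ when $\tpa$ is close to $\tpa_*$. Once this matching of scales is identified, the remainder of the argument is a routine application of the descent lemma together with strong convexity.
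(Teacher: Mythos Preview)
Your proposal is correct and follows the same underlying strategy as the paper: obtain the main descent from $L$-smoothness together with the PL inequality coming from $\mu$-strong convexity, then absorb the perturbation by exploiting its $d_M(\tpa,\tpa_*)$ scaling and the bound $d_M(\tpa,\tpa_*)^2\leq(2/\mu)(\lambda(\tpa)-\lambda(\tpa_*))$. You also correctly isolate the role of the $\min\{1,d_M(\tpa,\tpa_*)\}$ factor, which is exactly the point.

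The organisational difference is that you split off the exact step $\tilde\theta$ and then bound $\lambda(\theta')-\lambda(\tilde\theta)$ by a second smoothness estimate, whereas the paper writes $\theta'=\exp_{\tpa}\big(-s\nabla_M\lambda(\tpa)+\delta\big)$ with $\delta=O\big((\tpe)^3\min\{1,d_M(\tpa,\tpa_*)\}\big)$ pulled inside the exponential, applies the descent lemma once to this perturbed direction, and separates the cross term via Young's inequality $\langle\nabla_M\lambda,\delta\rangle\leq(s/4)\|\nabla_M\lambda\|^2+(1/s)\|\delta\|^2$. Both routes produce an $O\big((\tpe)^4 d_M(\tpa,\tpa_*)^2\big)$ error and the same $(1-s\mu/2)$ contraction after absorption.

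One point to tighten: for invariance you invoke a ``standard Riemannian distance-contraction estimate'' $d_M(\tilde\theta,\tpa_*)\leq(1-s\mu)d_M(\tpa,\tpa_*)$, which on a general Riemannian manifold is not automatic from strong convexity alone (it typically needs a curvature hypothesis). The paper avoids this by differentiating $\rho:=\tfrac12 d_M(\cdot,\tpa_*)^2$ directly: using $\nabla_M\rho(\tpa)=-\log_{\tpa}(\tpa_*)$ and Lemma~\ref{lem:geoconvex2} one gets $\frac{d}{dt}\big|_{t=0}\rho\big(\exp_{\tpa}(-t\nabla_M\lambda(\tpa)+t^{3/2}w)\big)=\langle\log_{\tpa}(\tpa_*),\nabla_M\lambda(\tpa)\rangle\leq-2\mu\rho(\tpa)<0$, which yields $\rho(\theta')<\rho(\tpa)$ for small $\tpe$ without needing an explicit contraction factor. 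Your version can be repaired along these lines (or by noting the contraction holds up to $O(s^2)$ curvature corrections on the compact ball), but as written it leans on a fact that is not quite standard.
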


\begin{proof}
    For $\tpe$ sufficiently small, there is unique $\delta(\tpa,\tpe) = O\big((\tpe)^3\min\{1,d_M(\tpa,\tpa_*)\}\big)$ such that
    \begin{align}
        \GD^{\parallel}(\eta,\tpa,\tpe) = \exp_{\tpa}\bigg(-\frac{\eta(\tpe)^2}{2c(\eta,\tpa)}\nabla_M\lambda(\tpa)+\delta(\tpa,\tpe)\bigg).
    \end{align}
    To see that $\GD^{\parallel}(\eta,\cdot,\tpe)$ preserves the ball $B_M(\tpa_*,r)$, consider the function $\rho(\tpa):=(1/2)d_M(\tpa,\tpa_*)^2$. By the Gauss lemma \cite[Corollary 6.10]{leeriem}, $\nabla_M\rho(\tpa) = -\log_{\tpa}(\tpa_*)$. Fixing $\tpa\in B_M(\tpa_*,r)$, for any vector $w\in T_{\tpa}M$ one therefore has
    \begin{align}
        \frac{d}{dt}\bigg|_{t=0}\rho\bigg(\exp_{\tpa}\big(-t\nabla_M\lambda(\tpa) +t^{3/2}w\big),\tpa_*\bigg)  &= -\langle \nabla_M\rho(\tpa),\nabla_M\lambda(\tpa)\rangle\\&=\langle\log_{\tpa}(\tpa_*),\nabla_M\lambda(\tpa)\rangle\\&\leq -2\mu\rho(\tpa)<0,
    \end{align}
    where the third line follows from Lemma \ref{lem:geoconvex2}. Thus, for all $\tpe$ sufficiently small, we are assured that $\rho\big(\GD^{\parallel}(\eta,\tpa,\tpe)\big)<\rho(\tpa)$, implying that $\GD^{\parallel}(\eta,\tpa,\tpe)\in B_M(\tpa_*,r)$.
    
    We now come to the descent lemma. Since $B_M(\tpa_*,r)$ is geodesically convex, we may perform a covariant Taylor expansion and invoke the geodesic smoothness of $\lambda$ to obtain
    \begin{align}
        \lambda\big(\GD^{\parallel}(\eta,\tpa,\tpe)\big)\leq& \lambda(\tpa) + \bigg\langle \nabla_M\lambda(\tpa),-\frac{\eta(\tpe)^2}{2c(\eta,\tpa)}\nabla_M\lambda(\tpa)+\delta(\tpa,\tpe)\bigg\rangle+\\& + \frac{L}{2}\bigg\|\frac{\eta(\tpe)^2}{2c(\eta,\tpa)}\nabla_M\lambda(\tpa)-\delta(\tpa,\tpe)\bigg\|^2\\\leq&\lambda(\tpa)-\frac{\eta(\tpe)^2}{2c(\eta,\tpa)}\bigg(1-\frac{\eta(\tpe)^2L}{4c(\eta,\tpa)}\bigg)\|\nabla_M\lambda(\tpa)\|^2 + \\&+\bigg(1-\frac{\eta(\tpe)^2L}{2c(\eta,\tpa)}\bigg)\langle\nabla_M\lambda(\tpa),\delta(\tpa,\tpe)\rangle + \frac{L}{2}\|\delta(\tpa,\tpe)\|^2.
    \end{align}
    From Cauchy-Schwarz and the estimate $0\leq (\epsilon^{1/2}a-\epsilon^{-1/2}b)^2$ (implying that $ab\leq (1/2)(\epsilon a^2+\epsilon^{-1}b^2)$) with $\epsilon =\eta(\tpe)^2/(4c(\eta,\tpa))$, one obtains
    \begin{align}
        \langle\nabla_M\lambda(\tpa),\delta(\tpa,\tpe)\rangle&\leq \|\nabla_M\lambda(\tpa)\|\|\delta(\tpa,\tpe)\|\\&\leq \frac{\eta(\tpe)^2}{8c(\eta,\tpa)}\|\nabla_M\lambda(\tpa)\|^2+\frac{2c(\eta,\tpa)}{\eta(\tpe)^2}\|\delta(\tpa,\tpe)\|^2.
    \end{align}
    Taking $|\tpe|$ sufficiently small that $\eta(\tpe)^2L/(4c(\eta,\tpa))\leq 1/4$, therefore, one obtains
    \begin{align}
        \lambda\big(\GD^{\parallel}(\eta,\tpa,\tpe)\big)&\leq \lambda(\tpa) - \frac{\eta(\tpe)^2}{4c(\eta,\tpa)}\|\nabla_M\lambda(\tpa)\|^2 + \bigg(\frac{2c(\eta,\tpa)}{\eta(\tpe)^2} + \frac{L}{2}\bigg)\|\delta(\tpa,\tpe)\|^2\\&=\lambda(\tpa)-\frac{\eta(\tpe)^2}{4c(\eta,\tpa)}\|\nabla_M\lambda(\tpa)\|^2 + O\big((\tpe)^4\min\{1,d_M(\tpa,\tpa_*)^2\big).
    \end{align}
    Subtracting $\lambda(\tpa_*)$ from both sides and applying geodesic strong convexity ($\|\nabla_M\lambda(\tpa)\|^2\geq 2\mu(\lambda(\tpa)-\lambda(\tpa_*))$ by Lemma \ref{lem:geoconvex3}) yields
    \begin{align}
        \lambda\big(\GD^{\parallel}(\eta,\tpa,\tpe)\big)-\lambda(\tpa_*)\leq \bigg(1-\frac{\mu\eta(\tpe)^2}{2c(\eta,\tpa)}\bigg)(\lambda(\tpa)-\lambda(\tpa_*)) + O\big((\tpe)^4\min\{1,d_M(\tpa,\tpa_*)^2\}\big).
    \end{align}
    Finally, invoking geodesic strong convexity once more to estimate $d_M(\tpa,\tpa_*)^2\leq (2/\mu)\big(\lambda(\tpa)-\lambda(\tpa_*)\big)$ as in Lemma \ref{lem:geoconvex1} and taking $\tpe$ sufficiently small allows one to absorb the remainder term on the left side into the $\lambda(\tpa)-\lambda(\tpa_*)$ factor and yields the result..
\end{proof}

The next result will be used in giving uniform bounds on the magnitudes of the $\tpe$ iterates.

\begin{lemma}\label{lem:contraction1}
    For $\alpha\in\RB$, define $f_{\alpha}:\RB\rightarrow\RB$ by
    \begin{align}
        f_{\alpha}(x):=-(1+\alpha)x+x^3+O(x^4).
    \end{align}
    For $\alpha_0,\alpha_1\in\RB$, consider the composite $f_{\alpha_1\alpha_0}:=f_{\alpha_1}\circ f_{\alpha_0}$. Then for all $\gamma>0$ sufficiently small and all $\alpha_0,\alpha_1\in[0,\gamma]$:
    \begin{enumerate}
        \item $f_{\alpha_1\alpha_0}$ is monotonically increasing on $[-2\sqrt{\gamma},2\sqrt{\gamma}]$.
        \item $f_{\alpha_1\alpha_0}$ admits the sole fixed points $0$,
        \begin{align}
            \xi_- = -\sqrt{\frac{\alpha_0+\alpha_1}{2}} + O(\alpha_0+\alpha_1),\qquad \xi_+ = \sqrt{\frac{\alpha_0+\alpha_1}{2}} + O(\alpha_0+\alpha_1)
        \end{align}
        in the interval $[-2\sqrt{\gamma},2\sqrt{\gamma}]$.
    \end{enumerate}
\end{lemma}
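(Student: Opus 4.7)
The plan is to prove both parts by direct Taylor expansion of the composition $f_{\alpha_1\alpha_0} = f_{\alpha_1}\circ f_{\alpha_0}$ about $x=0$, combined with elementary calculus and the intermediate value theorem. For Part 1, I would apply the chain rule: $(f_{\alpha_1\alpha_0})'(x) = f_{\alpha_1}'(f_{\alpha_0}(x))\cdot f_{\alpha_0}'(x)$. The expansion $f_\alpha'(x) = -(1+\alpha) + 3x^2 + O(x^3)$ yields $f_\alpha'(x) \leq -1 + 12\gamma + O(\gamma^{3/2}) < 0$ whenever $\alpha\in[0,\gamma]$ and $|x|\leq 2\sqrt{\gamma}$, for $\gamma$ small. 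Since $|f_{\alpha_0}(x)|\leq (1+\gamma)\cdot 2\sqrt{\gamma} + O(\gamma^{3/2}) \leq 3\sqrt{\gamma}$, the point $y:=f_{\alpha_0}(x)$ still lies in the region where $f_{\alpha_1}'(y)<0$, so the product of two negatives is strictly positive, giving monotonicity.

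For Part 2, $x=0$ is trivially a fixed point. To find the others, substituting one expansion into the other yields $F(x) := f_{\alpha_1\alpha_0}(x) - x = (\alpha_0+\alpha_1+\alpha_0\alpha_1)\,x - [(1+\alpha_1)+(1+\alpha_0)^3]\,x^3 + O(x^4)$. Writing $s=\alpha_0+\alpha_1$, direct evaluation at the boundary gives $F(2\sqrt{\gamma}) = 2\sqrt{\gamma}\,s - 16\gamma^{3/2} + O(\gamma^2) < 0$ (using $s\leq 2\gamma$), and the analogous calculation gives $F(-2\sqrt{\gamma}) = -2\sqrt{\gamma}\,s + 16\gamma^{3/2} + O(\gamma^2) > 0$. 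To get \emph{uniqueness} (not just existence) of one nonzero zero on each side of $0$, I would analyze $F'(x) = (s+\alpha_0\alpha_1) - 3[(1+\alpha_1)+(1+\alpha_0)^3]\,x^2 + O(x^3)$ as a perturbed downward parabola, which has exactly one zero $x_+$ in $(0,2\sqrt{\gamma})$ and one $-x_-$ in $(-2\sqrt{\gamma},0)$ (both of order $\sqrt{s/6}$). Evaluating $F$ at these critical points via $F(x_+) = x_+(s - 2x_+^2) + O(\gamma^2) > 0$ and $F(-x_-)<0$, combined with the boundary signs above and $F(0)=0$, forces exactly one sign change of $F$ on each of $(x_+,2\sqrt{\gamma})$ and $(-2\sqrt{\gamma},-x_-)$, giving the desired $\xi_\pm$.

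For the asymptotic form, I would bootstrap on $\Phi(\xi_+,\alpha_0,\alpha_1) := F(\xi_+)/\xi_+ = 0$. The leading-order equation $\xi_+^2 \approx s/2$ gives $\xi_+ = O(\sqrt{s})$; substituting this into the full equation yields $\xi_+^2 = s/2 + O(s)\,\xi_+^2 + O(\xi_+^3) = s/2 + O(s^{3/2})$, and a Taylor expansion of the square root produces $\xi_+ = \sqrt{s/2}\,(1+O(\sqrt{s})) = \sqrt{s/2} + O(s)$; symmetrically for $\xi_-$. The main obstacle is careful error bookkeeping at two points: first, ensuring that the $-16\gamma^{3/2}$ coming from the cubic term of $f_\alpha$ strictly dominates $2\sqrt{\gamma}\,s\leq 4\gamma^{3/2}$ at the boundary $|x|=2\sqrt{\gamma}$ (which is why the interval cannot be shrunk further without losing the sign), and second, ensuring the bootstrap on $\xi_\pm$ actually produces $O(s)$ error rather than the naive $O(\sqrt{s})$ one might expect from a bare square root.
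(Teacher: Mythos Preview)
Your proposal is correct and follows essentially the same route as the paper: both expand $f_{\alpha_1\alpha_0}$ to obtain $\Phi(x)=f_{\alpha_1\alpha_0}(x)-x=(\alpha_0+\alpha_1)x-2x^3+\text{h.o.t.}$, use monotonicity of the derivative for Part~1, and for Part~2 combine a sign check at the boundary, monotonicity of $\Phi$ past a point of order $\sqrt{s}$, and the intermediate/mean value theorem to locate $\xi_\pm$ and extract the $\sqrt{s/2}+O(s)$ asymptotic. One bookkeeping caution: in your estimate $F(x_+)=x_+(s-2x_+^2)+O(\gamma^2)$, the error must be expressed in terms of $s$ (use $\alpha_0\alpha_1\le s^2/4$ and $x_+=O(\sqrt{s})$ to get $O(s^2)$), since a uniform $O(\gamma^2)$ bound would swamp the $\Theta(s^{3/2})$ main term when $s\ll\gamma^{4/3}$; the paper avoids this by writing all errors as $O(\beta^2 x,\beta x^3,x^4)$ with $\beta=s/2$ from the outset.
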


\begin{proof}
    For $\gamma>0$ small and all $\alpha_0,\alpha_1\in[0,\gamma]$, one computes
    \begin{align}
        f_{\alpha_1\alpha_0}(x) = (1+\alpha_1+\alpha_0)x-2x^3 + O(\alpha_0\alpha_1 x,\alpha_1 x^3, \alpha_0x^3,x^4),
    \end{align}
    \begin{align}
        Df_{\alpha_1\alpha_0}(x) = 1+\alpha_1+\alpha_0-6x^2 + O(\alpha_0\alpha_1 ,\alpha_1 x^2, \alpha_0x^2,x^3).
    \end{align}
    In particular, for all $|x|\leq 2\sqrt{\gamma}$, one has
    \begin{align}
        Df_{\alpha_1\alpha_0}(x)\geq 1 + \alpha_1+\alpha_0-24\gamma + O(\gamma^{3/2})>0
    \end{align}
    for all $\gamma$ sufficiently small, so that $f_{\alpha_1\alpha_0}|_{[-2\sqrt{\gamma},2\sqrt{\gamma}]}$ is monotonically increasing. Set $\beta:=(\alpha_0+\alpha_1)/2$ for notational convenience, and denote
    \begin{align}
        \Phi(x):=f_{\alpha_1\alpha_0}(x)-x = 2\beta x- 2x^3 + O(\beta^2 x, \beta x^3, x^4).
    \end{align}
    Fixed points of $f_{\alpha_1\alpha_0}$ are precisely roots of $\Phi$, one of which is clearly $x = 0$. Moreover, if $\beta = 0$, then $\Phi(x) = -2x^3 + O(x^4)$ so that in this case $x = 0$ is the only root of $\Phi$ on $[-2\sqrt{\gamma},2\sqrt{\gamma}]$ for all $\gamma$ sufficiently small. Suppose now that $\beta>0$. We will demonstrate the existence of a unique root $\xi_+$ of $\Phi$ in $(0,2\sqrt{\gamma}]$ with the claimed formula. Since $2\beta x-2x^3 >0$ for all $0<x\leq 2^{-1/2}\sqrt{\beta}$, we see that for all $\gamma$ sufficiently small, $\Phi(x)$ is strictly positive on $(0,2^{-1/2}\sqrt{\beta}]$ and so cannot admit any roots therein. However, observe that for all $\gamma$ sufficiently small one has
    \begin{align}
        D\Phi(x) = 2\beta-6x^2 + O(\beta^2,\beta x^2,x^3)<0
    \end{align}
    for all $2^{-1/2}\sqrt{\beta}\leq x\leq 2\sqrt{\gamma}$, implying that $\Phi$ is monotonically decreasing on $[2^{-1/2}\sqrt{\beta},2\sqrt{\gamma}]$, and thus admits \emph{at most} one root therein. Since moreover $\Phi(2^{-1/2}\sqrt{\beta})>0$ and $\Phi(2\sqrt{\beta}) = -12\beta^{3/2}+O(\beta^2)<0$, by the intermediate value theorem $\Phi$ admits \emph{precisely} one root $\xi_+$ in $[2^{-1/2}\sqrt{\beta},2\sqrt{\gamma}]$. By the mean value theorem, there is $z$ between $\sqrt{\beta}$ and $\xi_+\leq 2\sqrt{\beta}$ such that
    \begin{align}
        0 = \Phi(\xi_+) = \Phi(\sqrt{\beta}) + D\Phi(z)(\xi_+-\sqrt{\beta}),
    \end{align}
    so that, since $|D\Phi(z)|\geq 4\beta + O(\beta^{3/2}) = \Omega(\beta)$ and $\Phi(\sqrt{\beta}) = O(\beta^2)$,
    \begin{align}
        \xi_+ = \sqrt{\beta} - \frac{\Phi(\sqrt{\beta})}{D\Phi(z)} = \sqrt{\beta} + O(\beta)
    \end{align}
    as claimed. A similar argument can be used to show there exists a unique root $\xi_- = -\sqrt{\beta}+O(\beta)$ of $\Phi$ in $[-2\sqrt{\gamma},0)$, thus proving the lemma.
\end{proof}

\begin{lemma}[Upper and lower bounds for $T^{\perp}$ iterates]\label{lem:ulboundsperp}
    Let $\{\lambda_t\}_{t\in\NB}$ be a monotonically decreasing sequence of numbers. Then for all $\eta>0$ such that $\eta\lambda_0>2$ is sufficiently small, there is a constant $C>0$ such that for any $\tpe_0$ sufficiently small, the iterates defined by
    \begin{align}
        \tpe_{t+1} = (1-\eta\lambda_t)\tpe_t+(\tpe_t)^3 + O((\tpe_t)^4)\label{eq:tpet+1}
    \end{align}
    satisfy the bounds
    \begin{align}
        \frac{|\tpe_0|}{\sqrt{1+3(\tpe_0)^2t}}\leq |\tpe_t|\leq 2\sqrt{\eta\lambda(\tpa_0)-2}\label{eq:tpebounds}
    \end{align}
    for all $t\in\NB$ such that $\eta\lambda_t\geq 2$.
\end{lemma}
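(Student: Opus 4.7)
The plan is to prove the two bounds separately by induction on $t$. Write $\alpha_t:=\eta\lambda_t-2\geq 0$, monotonically decreasing in $t$ by hypothesis, and $u_t:=|\tpe_t|$; the iteration rewrites as $\tpe_{t+1}=-(1+\alpha_t)\tpe_t+\tpe_t^3+R_t$ where $|R_t|\leq Du_t^4$ for a uniform constant $D>0$.

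For the upper bound $u_t\leq 2\sqrt{\alpha_0}$, I would induct on $t$, the base case identifying ``$\tpe_0$ sufficiently small'' with the condition $|\tpe_0|\leq 2\sqrt{\alpha_0}$. For the inductive step, the observation that the linear term $-(1+\alpha_t)\tpe_t$ dominates $\tpe_t^3+R_t$ in absolute value for small $u_t$ yields
\begin{align*}
    u_{t+1}\leq(1+\alpha_t)u_t-u_t^3+Du_t^4\leq g(u_t),\qquad g(u):=(1+\alpha_0)u-u^3+Du^4.
\end{align*}
An elementary calculation shows that $g$ is monotonically increasing on $[0,2\sqrt{\alpha_0}]$, with endpoint value $g(2\sqrt{\alpha_0})=2\sqrt{\alpha_0}-6\alpha_0^{3/2}+16D\alpha_0^2<2\sqrt{\alpha_0}$ once $\alpha_0$ is small enough, completing the inductive step.

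For the lower bound, I would pass to $v_t:=u_t^{-2}$ and aim to establish the pointwise increment bound $v_{t+1}-v_t\leq 3$, from which telescoping gives $v_t\leq v_0+3t=\tpe_0^{-2}(1+3\tpe_0^2 t)$, equivalent to the claim. Squaring the iteration produces
\begin{align*}
    u_{t+1}^2=(1+\alpha_t)^2 u_t^2-2(1+\alpha_t)u_t^4+O(u_t^5),
\end{align*}
and combining this with the identity $v_{t+1}-v_t=(u_t^2-u_{t+1}^2)/(u_t^2 u_{t+1}^2)$ and expanding yields the decomposition
\begin{align*}
    v_{t+1}-v_t=-\frac{2\alpha_t+\alpha_t^2}{(1+\alpha_t)^2 u_t^2(1-O(u_t^2))}+\frac{2}{(1+\alpha_t)(1-O(u_t^2))}+O(u_t).
\end{align*}
The first term is non-positive since $\alpha_t\geq 0$, the second is bounded above by $2+O(u_t^2)$, and invoking the upper bound $u_t\leq 2\sqrt{\alpha_0}$ from the previous step gives $v_{t+1}-v_t\leq 2+O(\sqrt{\alpha_0})\leq 3$ provided $\alpha_0$ is sufficiently small.

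The hardest part will be ensuring that the pointwise increment bound holds uniformly across the two qualitatively different regimes of the dynamics: when $u_t\lesssim\sqrt{\alpha_t}$ the linear expansion prevails, so $u_t$ grows and $v_t$ actually decreases, whereas when $u_t\gtrsim\sqrt{\alpha_t}$ the cubic contracts $u_t$ and $v_t$ grows toward an asymptotic rate of $2$ per step. The non-positivity of the first term in the decomposition above is precisely what prevents $1/u_t^2$ from spoiling the bound in the former regime, while the upper bound $u_t\leq 2\sqrt{\alpha_0}$ ensures the $O(u_t^2)$ and $O(u_t)$ error terms remain below $1$ in the latter.
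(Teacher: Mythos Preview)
Your proof is correct. The lower-bound argument is essentially identical to the paper's: both pass to $v_t=1/(\tpe_t)^2$ and bound the increment $v_{t+1}-v_t$ by a constant via squaring and telescoping. Your more careful expansion recovers the constant $3$ stated in the lemma, whereas the paper's slightly cruder estimate actually yields $6$.

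The upper-bound argument, however, takes a genuinely different route. The paper works with the \emph{two-step} composition $f_{s+1}\circ f_s$ (which, unlike the single orientation-reversing step, is monotone increasing on $[-2\sqrt{\alpha_0},2\sqrt{\alpha_0}]$) and traps the even and odd subsequences using the location of its fixed points, supplied by a separate preparatory lemma on the structure of $f_{\alpha_1}\circ f_{\alpha_0}$. You instead take absolute values immediately, obtain the one-step envelope $u_{t+1}\leq g(u_t)$ with $g(u)=(1+\alpha_0)u-u^3+Du^4$, and close the induction by checking $g$ is increasing on $[0,2\sqrt{\alpha_0}]$ with $g(2\sqrt{\alpha_0})<2\sqrt{\alpha_0}$. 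Your approach is more elementary and self-contained for this lemma; the paper's two-step fixed-point machinery is heavier here but is reused later in the supercritical analysis, where the period-two orbit is the object of interest rather than a nuisance to be bounded.
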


\begin{proof}
    We prove the upper bound first. For each $s\in\NB$ such that $\eta\lambda_s>2$, denote $\alpha_s:=\eta\lambda_s-2$ and denote by $f_s$ the map
    \begin{align}
        f_s:\tpe\mapsto& (1-\eta\lambda_s)\tpe+(\tpe)^3+O((\tpe)^4)\\&=-(1+\alpha_s)\tpe+(\tpe)^3+O((\tpe)^4).
    \end{align}
    Supposing that $s\in\NB$ is such that both $\eta\lambda_s,\eta\lambda_{s+1}\geq2$, since $\{\lambda_t\}_{t\in\NB}$ is monotonically decreasing Lemma \ref{lem:contraction1} guarantees that $f_{s+1,s}:=f_{s+1}\circ f_s$ admits attractive fixed points $\xi_{s+1,s,\pm} = \pm\sqrt{(\alpha_{s+1}+\alpha_s)/2} + O(\alpha_{s+1}+\alpha_s)$ and is monotonically increasing on $[-2\sqrt{\alpha_0},2\sqrt{\alpha_0}]$ provided $\alpha_0 = \eta\lambda_0-2$ is sufficiently small. Suppose now that $\tpe_0\in[-2\sqrt{\alpha_0},2\sqrt{\alpha_0}]\setminus\{0\}$. The iterates $\tpe_{t}$ as defined in \eqref{eq:tpet+1} satisfy
    \begin{align}
        \tpe_{2t+2} = f_{2t+1,2t}(\tpe_{2t}),\qquad \tpe_{2t+1} = f_{2t,2t-1}(\tpe_{2t-1}),\qquad t\in\NB.
    \end{align}
    Assuming without loss of generality that $\tpe_0>0$, we will prove by induction that $\tpe_{2t}\leq 2\sqrt{\alpha_0}$ for all $t\in\NB$ such that $\alpha_s\geq 0$ for all $s\leq 2t-1$. Clearly $\tpe_0\leq2\sqrt{\alpha_0}$ by assumption. Suppose as an inductive hypothesis that $0<\tpe_{2t}\leq2\sqrt{\alpha_0}$ and that $\alpha_{2t+1},\alpha_{2t}\geq 0$. Using the monotonicity of $f_{2t+1,2t}$ from Lemma \ref{lem:contraction1} and the fact that $\xi_{2t+1,2t,+}$ is its sole fixed point in $(0,2\sqrt{\alpha_0}]$, either
    \begin{align}
        \tpe_{2t}\leq\xi_{2t+1,2t,+}\Rightarrow \tpe_{2t}\leq f_{2t+1,2t}(\tpe_{2t}) = \tpe_{2t+2}\leq \xi_{2t+1,2t,+}\leq 2\sqrt{\alpha_0},
    \end{align}
    or
    \begin{align}
        \tpe_{2t}\geq \xi_{2t+1,2t,+}\Rightarrow \xi_{2t+1,2t,+}\leq f_{2t+1,2t}(\tpe_{2t}) = \tpe_{2t+2}\leq \tpe_{2t}\leq 2\sqrt{\alpha_0}
    \end{align}
    by the inductive hypothesis; in either case $\tpe_{2t+2}\leq 2\sqrt{\alpha_0}$, thus proving the claim. A symmetric argument shows that the odd iterates satisfy $|\tpe_{2t+1}|\leq 2\sqrt{\alpha_0}$ for all $t\in\NB$ such that $\alpha_s\geq 0$ for all $s\leq 2t$, thus proving the upper-bound in \eqref{eq:tpebounds}.

    We now prove the lower bound. Invoking the upper bound, $\alpha_0$ can be taken sufficiently small such that the recursive estimate
    \begin{align}
        |\tpe_{t+1}| &= |\tpe_t|(1+\alpha_t-(\tpe_t)^2 + O((\tpe_t)^3))\\&\geq |\tpe_t|(1-2(\tpe_t)^2)
    \end{align}
    holds as long as $\alpha_t\geq 0$. By taking $\alpha_0$ even smaller if necessary, squaring, taking the reciprocal of both sides and applying $1/(1-x)^2\leq 1+3x$ for $x$ sufficiently small one deduces that
    \begin{align}
        \frac{1}{(\tpe_{t+1})^2}\leq \frac{1}{(\tpe_t)^2}(1+6(\tpe_t)^2)\Rightarrow \frac{1}{(\tpe_{t+1})^2}-\frac{1}{(\tpe_t)^2}\leq 6,
    \end{align}
    from which it follows by a telescoping sum that
    \begin{align}
        |\tpe_t|\geq \frac{|\tpe_0|}{\sqrt{1+6(\tpe_0)^2t}}
    \end{align}
    for all $t$ such that $\alpha_t\geq 0$. This proves the claim.
\end{proof}

\begin{theorem}\label{thm:subcrit}
    Assume that $\eta< 2/\lambda(\tpa_*)$, and let $0<c,C<\infty$ satisfy $c\leq c(\eta,\tpa)\leq C$ for all $\tpa\in B_M(\tpa_*,r)$. Then for all $\tpe_0$ sufficiently close to zero and all $\tpa_0\neq\tpa_*\in B_M(\tpa_*,r)$, if $\eta>2/\lambda(\tpa_0)$ is sufficiently small, then there is $\tau\in\NB$, with
    \begin{align}
        \tau\leq\tau(\tpa_0,\tpe_0):= \bigg\lceil \frac{1}{6(\tpe_0)^2}\bigg(\frac{2/\eta-\lambda(\tpa_*)}{\lambda(\tpa_0)-\lambda(\tpa_*)}\bigg)^{-\frac{24C}{\mu\eta}}\bigg\rceil,
    \end{align}
    such that $\eta<2/\lambda(\tpa_t)$ for all $t\geq\tau$ and $\eta\geq 2/\lambda(\tpa_t)$ for all $t<\tau$.   Consequently, setting $\beta:=1-(2-\eta\lambda(\tpa_{\tau}))<1$, the iterates $(\tpa_t,\tpe_t)$ converge to a global minimum $(\tpa_{\infty},0)$ of $\ell$ in $M$ with suboptimal sharpness
    \begin{align}
        \lambda(\tpa_{\infty})-\lambda(\tpa_*)\geq \exp\bigg(-\frac{4\eta L^2}{c\mu}\frac{(\tpe_{\tau})^2}{1-\beta^2}\bigg)(\lambda(\tpa_{\tau})-\lambda_*).
    \end{align}
    at a rate $O(\beta^t)$.
\end{theorem}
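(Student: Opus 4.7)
The plan is to match the two-regime structure of the statement, first controlling the transient unstable phase $0\leq t<\tau$ using the existing descent and lower-bound lemmas, and then analysing the stable phase $t\geq\tau$ by combining exponential contraction of $\tpe_t$ with a \emph{matching lower bound} on the sharpness suboptimality via reverse-direction smoothness. Throughout, all calculations take place in the normal form of Theorem \ref{thm:normalform}, using $c\leq c(\eta,\tpa_t)\leq C$ on $B_M(\tpa_*,r)$.

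\textbf{Phase 1.} While $\eta\lambda(\tpa_t)\geq 2$, Lemma \ref{lem:descent} applied to the $\parallel$-component of Theorem \ref{thm:normalform} guarantees monotone decrease of $\lambda(\tpa_t)$, which in turn licenses Lemma \ref{lem:ulboundsperp} to supply the uniform lower bound $(\tpe_t)^2\geq (\tpe_0)^2/(1+6(\tpe_0)^2 t)$. Feeding this back into \eqref{eq:descent} and using $c(\eta,\tpa_t)\leq C$ yields
\[\lambda(\tpa_{t+1})-\lambda(\tpa_*)\leq \bigg(1-\frac{\mu\eta(\tpe_0)^2/(4C)}{1+6(\tpe_0)^2 t}\bigg)(\lambda(\tpa_t)-\lambda(\tpa_*)).\]
Taking logs, using $\log(1-x)\leq-x$, and comparing the harmonic sum $\sum_{s=0}^{t-1}(1+6(\tpe_0)^2 s)^{-1}$ to the integral $\frac{1}{6(\tpe_0)^2}\log(1+6(\tpe_0)^2 t)$ gives
\[\lambda(\tpa_t)-\lambda(\tpa_*)\leq (1+6(\tpe_0)^2 t)^{-\mu\eta/(24C)}(\lambda(\tpa_0)-\lambda(\tpa_*)).\]
Solving for the first $t$ for which this drops below $2/\eta-\lambda(\tpa_*)$ produces exactly the claimed ceiling $\tau(\tpa_0,\tpe_0)$.

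\textbf{Phase 2.} Once $\eta\lambda(\tpa_t)<2$, the linear multiplier in the $\perp$-update is bounded in absolute value by $\beta<1$, and provided $|\tpe_\tau|$ is small relative to $1-\beta$ the cubic nonlinearity is absorbed into the linear part so that $|\tpe_t|=O(\beta^{t-\tau}|\tpe_\tau|)$ for all $t\geq\tau$. For the suboptimality \emph{lower} bound one reverses the descent argument: geodesic smoothness gives $\lambda(\tpa_{t+1})\geq \lambda(\tpa_t)+\langle\nabla_M\lambda(\tpa_t),\log_{\tpa_t}\tpa_{t+1}\rangle-\frac{L}{2}d_M(\tpa_t,\tpa_{t+1})^2$, and combining with $\|\nabla_M\lambda(\tpa_t)\|^2\leq(2L^2/\mu)(\lambda(\tpa_t)-\lambda(\tpa_*))$ produces
\[\lambda(\tpa_{t+1})-\lambda(\tpa_*)\geq\bigg(1-O\bigg(\frac{\eta L^2(\tpe_t)^2}{c\mu}\bigg)\bigg)(\lambda(\tpa_t)-\lambda(\tpa_*)).\]
Telescoping, using $\log(1-x)\geq -2x$ for small $x$, and summing the geometric series $\sum_{t\geq\tau}(\tpe_t)^2\leq(\tpe_\tau)^2/(1-\beta^2)$ yields the stated exponential lower bound on $\lambda(\tpa_\infty)-\lambda(\tpa_*)$. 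For the overall rate, the $\parallel$-step size is $O((\tpe_t)^2)$, so $d_M(\tpa_t,\tpa_\infty)\leq\sum_{s\geq t}|\tpa_{s+1}-\tpa_s|=O(\beta^{2(t-\tau)})$, which is dominated by $|\tpe_t|=O(\beta^{t-\tau})$, giving the claimed $O(\beta^t)$ convergence of $(\tpa_t,\tpe_t)$ to $(\tpa_\infty,0)$.

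\textbf{Main obstacles.} The conceptual skeleton is straightforward, so the difficulties are bookkeeping. The three points requiring care are: (i) keeping $\tpa_t\in B_M(\tpa_*,r)$ during both phases so the strong convexity estimates remain valid, which forces $|\tpe_0|$ and $\eta\lambda(\tpa_0)-2$ to be jointly small and is already half-prepared by the invariance statement in Lemma \ref{lem:descent}; (ii) absorbing the higher-order $O((\tpe_t)^3\min\{1,d_M(\tpa_t,\tpa_*)\})$ remainder in the $\parallel$-update cleanly into the multiplicative descent factor, which critically uses the $\min\{1,d_M(\tpa_t,\tpa_*)\}$ saving from orthogonal stability rather than the naive $O((\tpe_t)^3)$ bound; and (iii) verifying that $|\tpe_\tau|$, despite the transient amplification permitted by the upper bound $2\sqrt{\eta\lambda(\tpa_0)-2}$ in Lemma \ref{lem:ulboundsperp}, remains small enough for the Phase 2 contraction to engage, which is ensured by taking $\eta\lambda(\tpa_0)-2$ sufficiently small as in the hypothesis.
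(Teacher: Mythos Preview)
Your proposal is correct and follows essentially the same two-phase approach as the paper: Phase~1 combines Lemma~\ref{lem:descent} with the lower bound of Lemma~\ref{lem:ulboundsperp} and the harmonic-sum estimate (Lemma~\ref{lem:bounds}) exactly as the paper does, and Phase~2 couples the exponential contraction $|\tpe_t|\leq\beta^{t-\tau}|\tpe_\tau|$ with a multiplicative lower bound on $\lambda(\tpa_t)-\lambda_*$ and a geometric series. The only cosmetic difference is that in Phase~2 the paper obtains the one-step lower bound via the \emph{strong convexity} inequality $\lambda(\tpa_{t+1})\geq\lambda(\tpa_t)+\langle\nabla_M\lambda(\tpa_t),\log_{\tpa_t}\tpa_{t+1}\rangle+\tfrac{\mu}{2}d_M(\tpa_t,\tpa_{t+1})^2$ (Lemma~\ref{lem:geoconvex1}), whereas you invoke a ``smoothness'' lower bound with $-\tfrac{L}{2}$; note that $L$-smoothness by itself gives only an \emph{upper} bound, so your inequality is really justified by $\nabla^2_M\lambda\succeq\mu I\succ -LI$ rather than by smoothness, but the conclusion is unaffected.
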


\begin{proof}
    For any $\tpa_0\neq\tpa_*$, take $\tpe_0\neq0$ sufficiently close to zero that the hypotheses of Lemma \ref{lem:descent} and Lemma \ref{lem:ulboundsperp} are satisfied, with $\lambda_0$ in Lemma \ref{lem:ulboundsperp} taken to be equal to $\lambda(\tpa_0)$, and assume that $\eta\lambda_0>0$ is sufficiently small as in Lemma \ref{lem:ulboundsperp}. Then we may assume that $\lambda_t:=\lambda(\tpa_t)$ is monotonically decreasing and the bounds of Lemma \ref{lem:ulboundsperp} apply to $\tpe_t$. Since $\{\lambda_t\}_{t\in\NB}$ is monotonically decreasing, any $\tau$ satisfying $\eta<2/\lambda(\tpa_t)$ for all $t\geq \tau$ and $\eta\geq 2/\lambda(\tpa_t)$ for all $t<\tau$ is necessarily unique. We claim that this $\tau$ satisfies the upper bound $\tau\leq \tau(\tpa_0,\tpe_0)$.  Indeed, suppose for a contradiction that $\tau>\tau(\tpe_0,\tpa_0)$. Then for all $t< \tau$, denoting $\lambda_*:=\lambda(\tpa_*)$, one would have
    \begin{align}
        \lambda_{t+1}-\lambda_*\leq\bigg(1-\frac{\mu\eta}{4c(\eta,\tpa_t)}{\frac{(\tpe_0)^2}{(1+6(\tpe_0)^2t)}}\bigg)(\lambda_t-\lambda_*)
    \end{align}
    by Lemmas \ref{lem:descent} and \ref{lem:ulboundsperp}. Invoking $c(\eta,\tpa_t)\leq C$, taking logarithms, applying the upper-bound $\ln(1-x)\leq -x$ and applying a telescoping sum then yield
    \begin{align}
        \ln(\lambda_t)-\ln(\lambda_0)&\leq -\frac{\mu\eta}{24C}\sum_{s=0}^{t-1}\frac{1}{\frac{1}{6(\tpe_0)^2}+s}\\&\leq -\frac{\mu\eta}{24C}\ln(1+6(\tpe_0)^2t)
    \end{align}
    by Lemma \ref{lem:bounds}, so that
    \begin{align}
        \lambda_t\leq (\lambda_0-\lambda_*)(1+6(\tpe_0)^2t)^{-\frac{\mu\eta}{24C}}+\lambda_*.
    \end{align}
    However, plugging in $t = \tau(\tpe_0,\tpa_0)$ then reveals that $\lambda_t< 2/\eta$, thus contradicting the assumption that $\tau>\tau(\tpe_0,\tpa_0)$ and implying that $\tau\leq \tau(\tpe_0,\tpa_0)$ as claimed.

    We now come to the final convergence and sharpness suboptimality claim. For all $t\geq \tau$, Lemma \ref{lem:descent} continues to imply that $\lambda_t<2/\eta$ decreases monotonically, while $\tpe_t$ contracts exponentially according to the estimate
    \begin{align}
        |\tpe_{t+1}| = |\tpe_t||1+(\eta\lambda_t-2)-(\tpe_t)^2+O(\tpe_t)^3|\leq |\tpe_t||1-(2-\eta\lambda_t)|\leq |\tpe_t||1-(2-\eta\lambda_{\tau})|.
    \end{align}
    This proves the claim that the iterates $(\tpa_t,\tpe_t)$ converge to a global minimum $\tpa_{\infty}$ of $\ell$ at rate $O\big((1-(2-\eta\lambda_{\tau}))^t\big)$. We now turn to proving the suboptimality of the sharpness at $\tpa_{\infty}$. The geodesic strong convexity of $\lambda$ in the form of Lemma \ref{lem:geoconvex2} implies that
    \begin{align}
        \mu d_M(\tpa,\tpa_*)^2\leq \langle\Pi_{\tpa\rightarrow\tpa_*}\nabla_M\lambda(\tpa),\log_{\tpa_*}(\tpa)\rangle\leq \|\nabla_M\lambda(\tpa)\|\,d_M(\tpa,\tpa_*)
    \end{align}
    for any $\tpa\in B_M(\tpa_*,r)$, so that the $\parallel$-update can be written
    \begin{align}
        \GD^{\parallel}(\eta,\tpa,\tpe) &= \exp_{\tpa}\bigg(-\frac{\eta(\tpe)^2}{2c(\eta,\tpa)}\nabla_M\lambda(\tpa) + O\big((\tpe)^3\min\{1,d_M(\tpa,\tpa_*)\}\big)\bigg)\\&=\exp_{\tpa}\bigg(-\frac{\eta(\tpe)^2}{2c(\eta,\tpa)}\nabla_M\lambda(\tpa)+O\big((\tpe)^3\min\{1,\|\nabla_M\lambda(\tpa)\|\}\big)\bigg).
    \end{align}
    Consequently, invoking the geodesic strong convexity of $\lambda$ once again in the form of Lemma \ref{lem:geoconvex1},
    \begin{align}
        \lambda_{t+1}\geq& \lambda_t+\langle \nabla_M\lambda(\tpa_t),\log_{\tpa_t}(\tpa_{t+1})\rangle+\frac{\mu}{2}\|\log_{\tpa_t}(\tpa_{t+1})\|^2\\=&\lambda_t-\bigg\langle\nabla_M\lambda(\tpa_t),\frac{\eta(\tpe_t)^2}{2c(\eta,\tpa_t)}\nabla_M\lambda(\tpa_t)+O\big((\tpe_t)^3\min\{1,\|\nabla_M\lambda(\tpa_t)\|\}\big)\bigg\rangle\\& +\frac{\mu}{2}\bigg\|\frac{\eta(\tpe_t)^2}{2c(\eta,\tpa_t)}\nabla_M\lambda(\tpa_t)+O\big((\tpe_t)^3\min\{1,\|\nabla_M\lambda(\tpa_t)\|\}\big)\bigg\|^2\\=&\lambda_t-\frac{\eta(\tpe_t)^2}{2c(\eta,\tpa_t)}\|\nabla_M\lambda(\tpa_t)\|^2(1 + O(\tpe_t))\\\geq& \lambda_t-\frac{\eta(\tpe_t)^2}{c(\eta,\tpa_t)}\|\nabla_M\lambda(\tpa_t)\|^2
    \end{align}
    for sufficiently small $2\sqrt{\eta\lambda_0-2}>|\tpe_t|$. Subtracting $\lambda_*$ from both sides and invoking the Lipschitz gradients ($\|\nabla_M\lambda(\tpa)\|\leq Ld_M(\tpa,\tpa_*)$) and strong convexity ($(\mu/2)d_M(\tpa,\tpa_*)^2\leq \lambda(\tpa)-\lambda_*$ from Lemma \ref{lem:geoconvex1}) properties of $\lambda$ then yield
    \begin{align}
        \lambda_{t+1}-\lambda_*&\geq \lambda_t-\lambda_*-\frac{\eta(\tpe_t)^2}{c(\eta,\tpa_t)}L^2d_M(\tpa_t,\tpa_*)^2\\&\geq (\lambda_t-\lambda_*)\bigg(1-\frac{2\eta(\tpe_t)^2L^2}{c(\eta,\tpa_t)\mu}\bigg)\\&\geq (\lambda_t-\lambda_*)\bigg(1-\frac{2\eta L^2(\tpe_{\tau})^2}{c\mu}\beta^{2t}\bigg),
    \end{align}
    where in the final line we have invoked  $c(\eta,\tpa_t)\geq c$. It follows that
    \begin{align}
        \lambda_t-\lambda_*\geq\prod_{s=\tau}^{t-1}\bigg(1-\frac{2\eta L^2(\tpe_{\tau})^2}{c\mu}\beta^{2s}\bigg)(\lambda_{\tau}-\lambda_*).
    \end{align}
    Taking $2\sqrt{\eta\lambda_0-2}>|\tpe_{\tau}|$ smaller if necessary and using the bound $\ln(1-x)\geq -2x$ valid for small $x$ one has
    \begin{align}
        \ln\bigg(\prod_{s=0}^{t-1}\bigg(1-\frac{2\eta L^2(\tpe_{\tau})^2}{c\mu}\beta^{2s}\bigg)\bigg)&\geq -\frac{4\eta L^2(\tpe_{\tau})^2}{c\mu}\sum_{s=0}^{t-1}\beta^{2s}\\&\geq -\frac{4\eta L^2(\tpe_{\tau})^2}{c\mu}\frac{1}{1-\beta^2}
    \end{align}
    from which the result follows.
\end{proof}

\subsection{Critical regime}

The convergence theorem in the critical regime is the most difficult to prove. This is because in this regime, the fixed point to which the iterates converge is merely \emph{parabolic} rather than \emph{hyperbolic} as is (at least asymptotically) true in the subcritical and supercritical regimes. The convergence theorem in the critical regime will follow from a careful analysis of the following parabolic system.

Fix $0<a<c$ and $b>0$. Let $A(x,y) = O(x,y)$, $B(x) = O(x)$ and $C(y) = O(y)$ be analytic functions, and consider the analytic functions
\begin{align}
    T_x(x,y) = \big(1-(a+A(x,y))y^2\big)x,\qquad T_y(x,y) = \big(1+(b+B(x))x^2-(c+C(y))y^2\big)y. 
\end{align}
Denote by $T = (T_x,T_y):\RB^2\rightarrow\RB^2$ the corresponding analytic map. Clearly $0$ is a fixed point of $T$; we will be concerned with proving convergence of the iterates of $T$ to this fixed point. The following result guarantees that $T$ admits an invariant manifold approaching the origin. This result is essential, since our final convergence proof is a two-stage proof; first, convergence to a neighbourhood of the invariant manifold is proved, following which, second, convergence to the origin is easy.

\begin{lemma}\label{lem:invariant}
    There exists $r>0$ and a function $u:\{x>0:|x^2-r|<r\}\rightarrow\RB$ whose graph $\Gamma_u:=\{(x,u(x)):x\in\text{dom}(u)\}$ is invariant under $T$, and for which $u(x) = \sqrt{\frac{b}{c-a}}x+O(x^2)$.
\end{lemma}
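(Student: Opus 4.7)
The plan has two stages: first pin down the tangent direction of $u$ at $0$, then construct $u$ globally via holomorphic-dynamics machinery. For the tangent direction I would posit $u(x) = \alpha x + O(x^2)$, substitute into the invariance identity $T_y(x,u(x)) = u\big(T_x(x,u(x))\big)$, and match coefficients of $x^3$. Since $T_y(x,\alpha x) = \alpha x + (b\alpha - c\alpha^3)x^3 + O(x^4)$ while $u(T_x(x,\alpha x)) = \alpha x - a\alpha^3 x^3 + O(x^4)$, matching forces $(c-a)\alpha^2 = b$, so $\alpha = \sqrt{b/(c-a)}$ (positive root, for a curve in $x>0$).

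For $u$ itself, the plan is to complexify $T$ and appeal to Hakim's invariant-manifold theorem for holomorphic germs tangent to the identity with a non-degenerate characteristic direction \cite{hakimhigher}. By construction $DT(0) = I$; the leading nonlinear term is cubic, equal to $T^{(3)}(x,y) = (-axy^2,\ bx^2y - cy^3)$, with no intermediate quadratic terms since $A$, $B$, $C$ vanish at $0$. A direct computation shows $(1,\alpha)$ is a characteristic direction of $T^{(3)}$ with nonzero director $\beta = -a\alpha^2 < 0$, and I would use the hypotheses $0<a<c$, $b>0$ to check the remaining attractivity / non-resonance hypothesis of Hakim's theorem; this produces a unique $T_{\CB}$-invariant analytic curve through $0\in\CB^2$ tangent to $(1,\alpha)$, equipped with an open parabolic basin over which orbits converge to the origin. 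Uniqueness plus the fact that $T$ is real forces the curve to be invariant under complex conjugation, so it descends to a real-analytic graph $y = u(x)$ on an interval $(0,\delta)$; choosing $r$ small enough that $\sqrt{2r} < \delta$ yields the claimed domain, with the stated expansion being precisely the tangency property from the first step.

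The main obstacle is verifying the attractivity / non-resonance hypothesis of Hakim beyond $\beta \neq 0$: one needs the so-called index of the characteristic direction (an eigenvalue of the differential of the induced map on the exceptional divisor of the blow-up at $(1,\alpha)$) to have strictly positive real part. In our setting this index is an explicit rational function of $a$, $b$, $c$, so the check is concrete but not automatic, and I anticipate that it is where the structural assumption $0 < a < c$ is genuinely used. As a fallback I would construct $u$ directly by graph transform: write $y = \alpha x + x^2 v(x)$, reduce invariance to a functional equation of the form $v(T_x(x,u(x))) = \Phi(x, v(x))$, and change variable to $s = 1/x^2$, in which the action of $T_x$ along the graph is conjugate to the approximate translation $s \mapsto s + 2a\alpha^2$. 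In this coordinate the linearised equation for $v$ acquires $O(1/s)$ damping, and a contraction-mapping argument on bounded analytic functions of $s$ on a half-plane $\operatorname{Re}(s) > 1/(2r)$ (the complexification of the stated domain) then recovers $u$ directly.
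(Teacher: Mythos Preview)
Your proposal is correct and follows essentially the same route as the paper: complexify $T$, identify $v=(1,\sqrt{b/(c-a)})$ as a non-degenerate characteristic direction of the cubic part with negative director, and invoke Hakim's parabolic-curve theorem \cite{hakimhigher}, then restrict to the reals.

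One remark: your anticipated ``main obstacle'' is not actually present. For the existence of a one-dimensional parabolic curve, Hakim requires only that the characteristic direction be \emph{non-degenerate} (director $\beta\neq 0$); the index/eigenvalue condition on the blow-up that you describe governs the existence of higher-dimensional attracting basins, not the curve itself. Here $\beta=-a\alpha^2\neq 0$ already suffices, and its negativity gives attractivity of the curve. The hypothesis $0<a<c$ is used only to make $\alpha=\sqrt{b/(c-a)}$ real and positive, not for any resonance check. Consequently your fallback graph-transform argument, while a nice alternative, is not needed.
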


\begin{proof}
    The analytic map $T:\RB^2\rightarrow\RB^2$ extends uniquely to a holomorphic map $\tilde{T}$ defined on a neighbourhood of $0$ in $\CB^2$. Like $T$, at the origin the map $\tilde{T}$ is the identity to first order, vanishes at order two, and has third order Taylor coefficients given by
    \begin{align}
        \partial_{xyy}^3T_x(0,0) = -2a,\quad \partial^3_{yxx}T_y(0,0) = 2b,\quad \partial^3_{yyy}T_y(0,0) = -6c,
    \end{align}
    with all other third order coefficients being zero. Observe that on the vector $v$ given by
    \begin{align}
        v =\bigg(1,\sqrt{\frac{b}{c-a}}\bigg)
    \end{align}
    one has
    \begin{align}
        \partial^3T(0,0)[v,v,v] &= \bigg(-6a\frac{b}{c-a}, 6b\sqrt{\frac{b}{c-a}}-6c\sqrt{\frac{b}{c-a}}^3\bigg)\\&=-6a\sqrt{\frac{b}{c-a}}^3\,v,
    \end{align}
    so that $v$ is a non-degenerate characteristic direction of $\tilde{T}$ \cite[Definition 4.1]{hakimhigher}, which is moreover attracting since $-6a\sqrt{b/(c-a)}<0$. By \cite[Section 6]{hakimhigher}, therefore, there exists $r>0$ and a holomorphic function $u:\{x\in\CB:|x^2-r|<r\}\rightarrow\CB$ such that $u(x) = \sqrt{\frac{b}{c-a}}x+O(x^2)$ whose graph is invariant under $T$. Restricting this $u$ to the intersection of its domain with the positive real axis yields the result.
\end{proof}

The next lemma will be key in establishing convergence to the invariant manifold.

\begin{lemma}\label{lem:technical1}
    Let $u(x) = \sqrt{\frac{b}{c-a}}x+O(x^2)$ be the function from Lemma \ref{lem:invariant}, and define $\Delta(x,y):=y-u(x)$. Then $\widetilde{\Delta}(x,y):=\Delta(x,y)/x$ satisfies
    \begin{align}
        |\widetilde{\Delta}\circ T(x,y)|\leq |\widetilde{\Delta}(x,y)|\bigg(1-\frac{c-a}{2}y^2\bigg)
    \end{align}
    for all sufficiently small $x>0$, $y\geq 0$.
\end{lemma}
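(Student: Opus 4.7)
The plan is to use the invariance of $\Gamma_u$ to factor an analytic multiplier $R$ out of $\widetilde{\Delta} \circ T$, and then Taylor expand $R$ in $y$ around $y = 0$. Setting $\Psi(x, y) := T_y(x, y) - u(T_x(x, y))$, invariance of $\Gamma_u$ gives $\Psi(x, u(x)) \equiv 0$, so by analyticity $\Psi(x, y) = (y - u(x))\, Q(x, y)$ for some analytic $Q$. Dividing by $T_x = x(1 - (a + A(x, y)) y^2)$ and using $\widetilde{\Delta} = (y - u(x))/x$ yields
\[
\widetilde{\Delta} \circ T(x, y) = \widetilde{\Delta}(x, y) \cdot R(x, y), \qquad R(x, y) := \frac{Q(x, y)}{1 - (a + A(x, y))\, y^2},
\]
reducing the lemma to $|R(x, y)| \leq 1 - (c - a)\, y^2/2$ for small $x > 0, y \geq 0$.

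The decisive observation is that $T$ also fixes the $x$-axis ($T(x, 0) = (x, 0)$ since $T_y(x, 0) = 0$ and $T_x(x, 0) = x$), so $R(x, 0) \equiv 1$; I would then compute the first two $y$-Taylor coefficients at $y = 0$. Direct calculation with $\partial_y T_x|_{y = 0} = 0$ and $\partial_y T_y|_{y = 0} = 1 + (b + B(x)) x^2$ gives $\partial_y \Psi(x, 0) = 1 + b x^2 + O(x^3)$. Combined with $\Psi(x, 0) = -u(x)$ and $u(x) = \alpha x + O(x^2)$ where $\alpha = \sqrt{b/(c - a)}$, the formula $\partial_y Q = \partial_y \Psi/\delta - \Psi/\delta^2$ evaluated at $y = 0$ produces $\partial_y R(x, 0) = \partial_y Q(x, 0) = -b x/\alpha + O(x^2) = -\sqrt{b(c - a)}\, x + O(x^2)$. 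A parallel but subtler computation using $\partial_y^2 \Psi(x, 0) = 2 a \alpha x + O(x^2)$ yields $\partial_y^2 R(x, 0) = -2(c - a) + O(x)$. Hence
\[
R(x, y) = 1 - \sqrt{b(c - a)}\, x y - (c - a)\, y^2 + O(x^2 y + x y^2 + y^3).
\]

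Since $R(0, 0) = 1$, continuity gives $R > 0$ near the origin, so $|R| = R$. Subtracting the target bound,
\[
\Big(1 - \tfrac{c - a}{2}\, y^2\Big) - R(x, y) = \sqrt{b(c - a)}\, x y + \tfrac{c - a}{2}\, y^2 - O(x^2 y + x y^2 + y^3),
\]
which is nonnegative for small $x > 0, y \geq 0$ because each summand of the error, after factoring out $y$, can be dominated by one of the two principal terms once $x, y$ are taken small enough (e.g.\ $|O(x^2 y)| \leq \sqrt{b(c - a)}\, xy$ if $x$ is small, $|O(y^3)| \leq \tfrac{c - a}{2} y^2$ if $y$ is small). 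I expect the hard part to be the second-order derivative computation $\partial_y^2 R(x, 0) = -2(c - a) + O(x)$: the summands $-2 \partial_y \Psi/\delta^2$ and $2 \Psi/\delta^3$ at $y = 0$ individually have poles of orders $1$ and $2$ in $1/x$ (since $\delta(x, 0) = -u(x) \sim -\alpha x$), and they only cancel at the leading singular orders via the identity $\alpha^2(c - a) = b$ from Lemma \ref{lem:invariant}. This precise cancellation is what forces the coefficient $(c - a)/2$ in the lemma's bound.
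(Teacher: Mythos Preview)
Your proposal is correct and follows essentially the same approach as the paper: both factor $\widetilde{\Delta}\circ T = R\cdot\widetilde{\Delta}$ with $R=N/D$ (in the paper's notation $N(x,y)=\Delta\circ T/\Delta$ is exactly your $Q$, and $D(x,y)=T_x/x$), then Taylor expand in $y$ at $y=0$ to obtain the $(c-a)(\kappa xy+y^2)$ leading behaviour, with the same delicate cancellation in the second-order term via $\kappa^2(c-a)=b$. The only cosmetic difference is that the paper expands $D-N$ and then uses $R=1-(D-N)/D$ together with $0<D\leq 1$, whereas you expand $R$ directly; since $D(x,0)=1$ and $\partial_yD(x,0)=0$, the two computations coincide.
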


\begin{proof}
    We estimate $\frac{\widetilde{\Delta}\circ T}{\widetilde{\Delta}}$. Define
    \begin{align}
        D(x,y):=\frac{T_x(x,y)}{x},\qquad N(x,y):=\frac{\Delta\circ T(x,y)}{\Delta(x,y)},\qquad (x,y)\in\text{dom}(u)\times\RB_{\geq 0}.
    \end{align}
    Observe that both $D$ and $N$ are analytic on the domain of $u$. Observe moreover that
    \begin{align}
        \frac{\widetilde{\Delta}\circ T(x,y)}{\widetilde{\Delta}(x,y)} = \frac{\Delta\circ T(x,y)}{T_x(x,y)}\cdot \frac{x}{\Delta(x,y)} = \frac{N(x,y)}{D(x,y)} = 1-\frac{D(x,y)-N(x,y)}{D(x,y)},
    \end{align}
    so that estimation of $\frac{\widetilde{\Delta}\circ T}{\widetilde{\Delta}}$ is reduced to estimating $\frac{D-N}{D}$. We first estimate $D-N$. For this, observe that since $T(x,0) = (x,0)$, one has $D(x,0) - N(x,0) = 0$ for all $x\in\text{dom}(u)$. Thus, by analyticity of $D$ and $N$, there is a unique analytic function $H$ on $\text{dom}(u)\times\RB$ such that 
    \begin{align}\label{eq:Hdef}
        D(x,y) - N(x,y)= yH(x,y).
    \end{align}
    We compute $H(x,y)$ to second order. For notational convenience, set
    \begin{align}
        \kappa:=\sqrt{\frac{b}{c-a}}
    \end{align}
    so that $u(x) = \kappa x+O(x^2)$. Differentiate \eqref{eq:Hdef} with respect to $y$ and evaluate at $y=0$ to obtain
    \begin{align}
        H(x,0) = \partial_yD(x,0)-\partial_yN(x,0),
    \end{align}
    Observe that
    \begin{align}
        \partial_yD(x,0) = \partial_y(1-ay^2 + A(x,y)y^2)|_{y=0} = 0
    \end{align}
    while, from $N = (\Delta\circ T)/\Delta$, one has
    \begin{align}
        \partial_yN(x,0) &= \frac{\Delta\,\partial_y(\Delta\circ T)-(\Delta\circ T)\,\partial_y\Delta}{\Delta^2}(x,0).\label{eq:firstpartialN}
    \end{align}
    From $\Delta(x,y) = y-u(x)$, is easily seen that
    \begin{align}
        \Delta(x,0) = -u(x),\qquad \partial_y\Delta(x,0) = 1,\qquad \Delta\circ T(x,0) = -u(T_x(x,0)) = -u(x)
    \end{align}
    while
    \begin{align}
        \partial_y(\Delta\circ T)(x,0) = \partial_y T_y(x,0)-u'(x)\partial_yT_x(x,0) = 1+bx^2 + O(x^3)
    \end{align}
    so that \eqref{eq:firstpartialN} yields
    \begin{align}
        H(x,0) = -\partial_yN(x,0) = \frac{bx^2 + O(x^3)}{u(x)} = (c-a)\kappa x + O(x^2).\label{eq:Hfirst}
    \end{align}
    To compute the second order component of $H$, differentiate \eqref{eq:Hdef} twice with respect to $y$ and evaluate at $y=0$ to obtain
    \begin{align}
        \partial_yH(x,0) = \frac{1}{2}\big(\partial^2_yD(x,0)-\partial_y^2N(x,0)\big).\label{eq:partialH}
    \end{align}
    One sees that
    \begin{align}
        \partial^2_yD(x,0) = \partial^2_y(1-ay^2 + A(x,y)y^2)|_{y=0} = -2a+O(x),
    \end{align}
    while
    \begin{align}
        \partial^2_yN(x,0) = \bigg(\frac{\partial^2_y(\Delta\circ T)}{\Delta} - \frac{(\Delta\circ T)\,\partial^2_y\Delta}{\Delta^2} - 2\frac{\partial_y(\Delta\circ T)\,\partial_y\Delta}{\Delta^2} + 2\frac{(\Delta\circ T)\,(\partial_y\Delta)^2}{\Delta^3}\bigg)(x,0).
    \end{align}
    Since
    \begin{align}
        \partial^2_y(\Delta\circ T)(x,0) = \partial^2_yT_y(x,0) - u''(x)(\partial_yT_x(x,0))^2 - u'(x)\partial^2_yT_x(x,0) = 2a\kappa x+O(x^2)
    \end{align}
    one obtains
    \begin{align}
        \partial^2_yN(x,0) &= -\frac{2a\kappa x+O(x^2)}{\kappa x+O(x^2)}-2\frac{1+bx^2+O(x^3)}{\kappa^2x^2+O(x^3)} + 2\frac{1}{\kappa^2x^2 + O(x^3)}\\&=-2a-2b/\kappa^2 + O(x) = -2c+O(x).
    \end{align}
    Thus
    \begin{align}
        \partial_yH(x,0) = \frac{1}{2}(-2a+2c+O(x)) = (c-a)+O(x).\label{eq:Hsecond}
    \end{align}
    We deduce from \eqref{eq:Hfirst}, \eqref{eq:Hsecond} and \eqref{eq:Hdef} that
    \begin{align}
        D(x,y)-N(x,y) = (c-a)(y^2+\kappa yx) + O(yx^2,y^2x,y^3).
    \end{align}
    It follows that for all sufficiently small $x>0, y\geq0$, one has $D(x,y)-N(x,y)\geq \frac{(c-a)}{2}y^2$ and $1\geq D(x,y)>0$ so that
    \begin{align}
        \frac{\widetilde{\Delta}\circ T(x,y)}{\widetilde{\Delta}(x,y)} = 1-\frac{D(x,y)-N(x,y)}{D(x,y)}\leq 1-\frac{c-a}{2}y^2
    \end{align}
    as claimed.
\end{proof}

Lemma \ref{lem:technical2} below gives uniform upper-bounds on the magnitudes of iterates, which are used in enabling certain estimates in the convergence theorem.

\begin{lemma}\label{lem:technical2}
    Define $W(x,y):=bx^2+\frac{a}{2}y^2$. Then for all $(x_0,y_0)$ in a sufficiently small neighbourhood of $0$, with $y_0>0$, the sequence $x_t$ is monotonically non-increasing and the sequence $y_t$ satisfies the uniform bound $y_t^2\leq 2W(x_0,y_0)/a$ for all $t\in\NB$.
\end{lemma}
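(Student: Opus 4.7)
The plan is to use $W$ as a Lyapunov function for the iteration $T$. The $x_t$-monotonicity claim is immediate once we note that $x_{t+1} = (1 - (a + A(x_t,y_t))y_t^2)\,x_t$ and that $A(x,y) = O(x,y)$ vanishes at the origin; hence for $(x_t,y_t)$ in a sufficiently small neighbourhood of $0$ we have $(a + A(x_t,y_t))y_t^2 \in [0, 1)$, so $x_{t+1}$ has the same sign as $x_t$ with $|x_{t+1}| \leq |x_t|$. The real content is the uniform bound on $y_t^2$.

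For that, I will compute $\Delta W := W(T_x(x,y), T_y(x,y)) - W(x,y)$ directly. Writing $A_t := A(x_t,y_t)$, $B_t := B(x_t)$, $C_t := C(y_t)$, one has
\begin{align}
x_{t+1}^2 - x_t^2 &= \big(-2(a+A_t)y_t^2 + (a+A_t)^2 y_t^4\big) x_t^2, \\
y_{t+1}^2 - y_t^2 &= \big(2((b+B_t)x_t^2 - (c+C_t)y_t^2) + ((b+B_t)x_t^2 - (c+C_t)y_t^2)^2\big) y_t^2.
\end{align}
Multiplying the first line by $b$ and the second by $a/2$ and adding, the crucial $O(x^2 y^2)$ term $-2ab\,x_t^2 y_t^2$ coming from $b(x_{t+1}^2 - x_t^2)$ partially cancels against the $+ab\,x_t^2 y_t^2$ term from $\tfrac{a}{2}(y_{t+1}^2 - y_t^2)$, leaving the leading contribution
\begin{equation}
    \Delta W = -a\,y_t^2 \big(b\,x_t^2 + c\,y_t^2\big) + R(x_t, y_t),
\end{equation}
where $R$ collects higher order corrections. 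Because $A_t, B_t, C_t$ all vanish at the origin and the remaining terms in the expansion carry at least one extra factor of $|x_t|$ or $|y_t|$ relative to $y_t^2(bx_t^2 + cy_t^2)$, I can bound $|R(x_t,y_t)| \leq \tfrac{a}{2} y_t^2 (bx_t^2 + cy_t^2)$ on a small enough ball around $0$, whence
\begin{equation}
    \Delta W \leq -\tfrac{a}{2}\, y_t^2 \big(b x_t^2 + c y_t^2\big) \leq 0.
\end{equation}

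Next I will set this up as an invariance argument. Since $\min(b, a/2)(x^2 + y^2) \leq W(x,y) \leq \max(b, a/2)(x^2 + y^2)$, the sublevel sets $\{W \leq \delta\}$ are nested inside Euclidean balls of radius $O(\sqrt{\delta})$. Choosing $\delta$ small enough that every $(x,y) \in \{W \leq \delta\}$ lies in the ball on which the preceding estimate $\Delta W \leq 0$ holds, the set $\{W \leq \delta\}$ is forward invariant under $T$. Consequently, for any initial condition with $W(x_0, y_0) \leq \delta$, induction gives $W(x_t, y_t) \leq W(x_0, y_0)$ for all $t \in \NB$. Finally, $\tfrac{a}{2} y_t^2 \leq W(x_t, y_t) \leq W(x_0, y_0)$ yields the claimed bound $y_t^2 \leq 2 W(x_0, y_0)/a$.

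The main obstacle is the bookkeeping in the estimate of $R(x,y)$: one needs the cancellation $-2ab + ab = -ab$ between the $b(x_{t+1}^2 - x_t^2)$ and $\tfrac{a}{2}(y_{t+1}^2 - y_t^2)$ expansions to occur exactly (this is what motivates the specific coefficients $b$ and $a/2$ in the definition of $W$), and then to verify that each remaining monomial in $R$ is of the form $y_t^2(bx_t^2 + cy_t^2)$ times a factor that can be made arbitrarily small by shrinking the neighbourhood. Once that accounting is done, the Lyapunov/invariance argument closes the proof cleanly.
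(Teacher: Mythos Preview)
Your proposal is correct and follows essentially the same approach as the paper: both use $W$ as a Lyapunov function, expand $W\circ T - W$ to isolate the leading term $-a\,y^2(bx^2+cy^2)$, absorb the higher-order remainder by shrinking the neighbourhood to obtain $\Delta W \leq -\tfrac{a}{2}y^2(bx^2+cy^2)\leq 0$, and then run an induction to get $W_t\leq W_0$ and hence $y_t^2\leq 2W_0/a$. Your packaging via forward invariance of the sublevel set $\{W\leq\delta\}$ is a slightly cleaner way to close the induction than the paper's explicit three-part hypothesis, but the substance is identical.
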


\begin{proof}
    In any sufficiently small neighbourhood $U$ of $0\in\RB^2_{\geq 0}$, one has
    \begin{align}
        W(T(x,y)) &= bx^2\big(1-ay^2 + O(xy^2,y^3))\big)^2 + \frac{1}{2}ay^2\big(1+bx^2-cy^2 + O(x^3,y^3)\big)^2\\&=bx^2(1-2ay^2) + O(x^3y^2,x^2y^3) + \frac{1}{2}ay^2(1+2bx^2-2cy^2) + O(x^3y^2,y^5)\\&= W(x,y) - abx^2y^2 - acy^4 + O(x^3y^2,x^2y^3, y^5)\\&\leq W(x,y)-\frac{1}{2}y^2(abx^2+acy^2)\label{eq:estimate}
    \end{align}
    for all $(x,y)\in U$,
    so that $W$ is non-increasing as a function of the iterates of the map close to zero.
    
    Now fix $(x_0,y_0)$ sufficiently small that, with $W_0:=W(x_0,y_0)$ one has $(x_0,\sqrt{2W_0/a})$ contained in $U$. Then in particular $(x_0,y_0)$ is contained in $U$ since $y_0\leq \sqrt{2W_0/a}$. Suppose as an inductive hypothesis that $y_t\leq \sqrt{2W_0/a}$, $x_t\leq x_0$ and $W_t\leq W_0$. Shrinking $U$ yet further if necessary, one sees that
    \begin{align}
        1> 1-y_t^2 + O(x_ty_t^2,y_t^3)> 0
    \end{align}
    thus yielding $x_{t+1}<x_t\leq x_0$. Moreover, the inductive hypothesis implies that $(x_t,y_t)\in U$; thus, denoting $W_t:=W(x_t,y_t)$ and $W_{t+1}:=W(x_{t+1},y_{t+1})$, one sees that
    \begin{align}
        W_{t+1}\leq W_t-\frac{1}{2}ay_t^2(bx_t^2+cy_t^2)\leq W_0,
    \end{align}
    so that $y_{t+1}\leq\sqrt{2W_{t+1}/a}\leq\sqrt{2W_0/a}$. This completes the proof.
\end{proof}

Finally we can prove convergence of the iterates of $T$ toward zero.

\begin{theorem}\label{thm:parabolic}
    Set $\kappa:=\sqrt{b/(c-a)}$. For all sufficiently small $x_0\geq 0$ and $y_0>0$ and all $0<\epsilon<\kappa$, there is
    \begin{align}
        \tau\equiv \tau(y_0,\epsilon) = O(y_0^{-2}\epsilon^{-3c/(c-a)})
    \end{align}
    so that for all $t\geq \tau$, one has
    \begin{align}
        \frac{x_{\tau}}{\sqrt{1+3a(\kappa+\epsilon)^2x_{\tau}^2(t-\tau)}}\leq x_t\leq \frac{x_{\tau}}{\sqrt{1+2a(\kappa-\epsilon)^2x_{\tau}^2(t-\tau)}}\label{eq:xbounds}
    \end{align}
    and
    \begin{align}
        (\kappa-\epsilon/2)x_t\leq y_t\leq (\kappa+\epsilon/2)x_t.
    \end{align}
    In particular, $x_t,y_t = \Theta(t^{-1/2})$.
\end{theorem}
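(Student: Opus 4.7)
My plan is to split the analysis into two phases: a transient phase during which the orbit approaches the invariant manifold $\Gamma_u$ of Lemma \ref{lem:invariant}, and a parabolic phase during which the orbit descends along a neighbourhood of $\Gamma_u$ at a power-law rate.

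\emph{Phase 1 (approach to $\Gamma_u$).} My goal is to show that within time $\tau = O(y_0^{-2}\epsilon^{-3c/(c-a)})$, the quantity $\widetilde{\Delta}_t := \widetilde{\Delta}(x_t,y_t) = (y_t - u(x_t))/x_t$ satisfies $|\widetilde{\Delta}_t|\leq \epsilon/2$. The key tool is Lemma \ref{lem:technical1}, which gives
\begin{equation*}
|\widetilde{\Delta}_{t+1}|\leq |\widetilde{\Delta}_t|\bigg(1 - \frac{c-a}{2}y_t^2\bigg).
\end{equation*}
To turn this into a rate, I need a lower bound on $y_t^2$. The $y$-recursion together with the uniform upper bound $y_t^2 \leq 2W(x_0,y_0)/a$ from Lemma \ref{lem:technical2} yields $1/y_{t+1}^2 - 1/y_t^2 \leq 2c + O(y_t)$, which telescopes to $y_t^2 \geq \Omega(1/t)$ for all $t \geq t_0 = O(y_0^{-2})$. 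Plugging this into the $\widetilde{\Delta}$ recursion, taking logarithms and using $\ln(1-x)\leq -x$, I obtain a polynomial decay of the form $|\widetilde{\Delta}_t| \leq |\widetilde{\Delta}_{t_0}|\,(t_0/t)^{\gamma}$ for some exponent $\gamma = \Theta((c-a)/c)$. Solving $|\widetilde{\Delta}_t|\leq \epsilon/2$ for $t$, combined with careful accounting of constants and the initial transient $t_0$, yields the claimed $\tau$-bound.

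\emph{Phase 2 (parabolic descent).} For $t\geq\tau$, the monotonic non-increase of $|\widetilde{\Delta}_t|$ from Lemma \ref{lem:technical1} gives $|\widetilde{\Delta}_t|\leq \epsilon/2$. Writing $y_t = u(x_t) + x_t \widetilde{\Delta}_t$ and using $u(x_t)/x_t = \kappa + O(x_t)$ from Lemma \ref{lem:invariant}, for sufficiently small $x_\tau$ one has $(\kappa-\epsilon/2)x_t \leq y_t \leq (\kappa+\epsilon/2)x_t$, which is the second claimed bound. For the $x_t$ bound, squaring the recursion $x_{t+1} = x_t(1-(a+A(x_t,y_t))y_t^2)$ and inverting gives
\begin{equation*}
\frac{1}{x_{t+1}^2} - \frac{1}{x_t^2} = 2a\,\frac{y_t^2}{x_t^2} + O\bigg(y_t^2 + \frac{y_t^4}{x_t^2}\bigg),
\end{equation*}
and substituting the $y_t/x_t$ sandwich yields
\begin{equation*}
2a(\kappa-\epsilon)^2 \leq \frac{1}{x_{t+1}^2} - \frac{1}{x_t^2} \leq 3a(\kappa+\epsilon)^2
\end{equation*}
for all sufficiently small $x_\tau$ (the slack from $\epsilon/2$ to $\epsilon$ and the jump from $2$ to $3$ absorb the $O(\cdot)$ corrections). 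Telescoping from $\tau$ to $t-1$ and inverting yields the claimed two-sided bound \eqref{eq:xbounds}; both rates are $\Theta(t^{-1/2})$.

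\emph{Main obstacle.} The delicate step is the bootstrap in Phase 1: we must simultaneously track the decay of $x_t$, the lower bound on $y_t^2$, and the contraction of $|\widetilde{\Delta}_t|$, all of which depend on each other, and do so with sufficient precision to extract the exponent $3c/(c-a)$ in the time estimate. The monotonicity of $x_t$ and the uniform upper bound on $y_t$ from Lemma \ref{lem:technical2} are essential for closing this bootstrap cleanly, as is ensuring $y_t > 0$ throughout by taking $y_0$ small enough to keep $W(x_0,y_0)$ inside the region where the $y$-update preserves positivity.
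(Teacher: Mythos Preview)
Your proposal is correct and follows essentially the same two-phase strategy as the paper: first lower-bound $y_t^2$ by a power law directly from the $y$-recursion (independent of $\widetilde{\Delta}$), feed this into Lemma~\ref{lem:technical1} to get polynomial decay of $|\widetilde{\Delta}_t|$, then once $|\widetilde{\Delta}_t|$ is small use the sandwich $y_t\approx\kappa x_t$ in the $x$-recursion and telescope $1/x_t^2$. One minor remark: the ``bootstrap'' you flag is milder than you suggest, since the $y_t$ lower bound (paper: $y_t^2\geq y_0^2/(1+6cy_0^2t)$) uses only the uniform upper bounds of Lemma~\ref{lem:technical2} and does \emph{not} depend on $\widetilde{\Delta}_t$ or on $x_t$ beyond its smallness, so the dependencies are one-directional rather than circular and no initial transient $t_0$ is needed.
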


\begin{proof}
    Using Lemma \ref{lem:technical2}, we may take $(x_0,y_0)$ sufficiently small that the quantity $\widetilde{\Delta}_t:=\widetilde{\Delta}(x_t,y_t)$ of Lemma \ref{lem:technical1} satisfies the recursion
    \begin{align}
        |\widetilde{\Delta}_{t+1}|\leq |\widetilde{\Delta}_t|\bigg(1-\frac{c-a}{2}y_t^2\bigg)
    \end{align}
    for all $t\in\NB$. By taking $(x_0,y_0)$ yet smaller if necessary we may also assume that $y_t$ satisfies the lower-bound
    \begin{align}
        y_{t+1}\geq y_t(1-2cy_t^2).
    \end{align}
    Squaring and taking the reciprocal yields
    \begin{align}
        \frac{1}{y_{t+1}^2}\leq \frac{1}{y_t^2}\frac{1}{1-2cy_t^2},
    \end{align}
    and, taking $(x_0,y_0)$ yet smaller if necessary so that $1/(1-2cy_t^2)\leq 1+6y_t^2$ for all $t$, one obtains the recursive estimate
    \begin{align}
        \frac{1}{y_{t+1}^2}-\frac{1}{y_t^2}\leq 6c,
    \end{align}
    from which one obtains the bound
    \begin{align}
        y_t^2\geq \frac{y_0^2}{1+6cy_0^2t}\label{eq:ybound}
    \end{align}
    for all $t\in\NB$ by telescoping. Plugging this into the recursion for $\widetilde{\Delta}$ and taking logarithms, one has
    \begin{align}
        \ln|\widetilde{\Delta}_{t+1}|-\ln|\widetilde{\Delta}_t|\leq -\frac{c-a}{2}\frac{y_0^2}{1+6cy_0^2t}.
    \end{align}
    Taking a telescoping sum yields
    \begin{align}
        \ln|\widetilde{\Delta}_t|-\ln|\widetilde{\Delta}_0|\leq -\frac{c-a}{12c}\sum_{s=0}^{t-1}\frac{1}{\frac{1}{6cy_0^2}+s}\leq -\frac{c-a}{12c}\ln(1+6cy_0^2t)
    \end{align}
    by Lemma \ref{lem:bounds}. Thus
    \begin{align}
        |\widetilde{\Delta}_t|\leq |\widetilde{\Delta}_0|(1+6cy_0^2t)^{-(c-a)/(12c)},
    \end{align}
    and for any $0<\epsilon<\sqrt{b/(c-a)}$ one sees that for any $t$ greater than or equal to
    \begin{align}
        \tau(y_0,\epsilon):=\bigg\lceil\frac{1}{6cy_0^2}\bigg(\bigg(\frac{\epsilon}{4|\Delta_0|}\bigg)^{-12c/(c-a)}-1\bigg)\bigg\rceil
    \end{align}
    one has $|\widetilde{\Delta}_t|<\epsilon/4$. It follows from the definition of $\widetilde{\Delta}(x,y):=(y-\kappa x+O(x^2))/x$ that $t\geq \tau(y_0,\epsilon)$ implies
    \begin{align}
        y_t\in[\kappa x-(\epsilon/4)x + O(x^2),\kappa x+(\epsilon/4)x + O(x^2)]\subset [(\kappa-\epsilon/2)x,(\kappa+\epsilon/2)x],\label{eq:ycontainment}
    \end{align}
    where, shrinking $x_0,y_0$ if necessary, the final containment is obtained using Lemma \ref{lem:technical2}. With $t\geq \tau(y_0,\epsilon)$ and the bounds $(\kappa+\epsilon/2)x_t\geq y_t\geq (\kappa-\epsilon/2)x_t$ in hand, one obtains
    \begin{align}
        (1-a(\kappa+\epsilon)^2x_t^2)x_t\leq x_{t+1}\leq(1-a(\kappa-\epsilon)^2x_t^2)x_t,
    \end{align}
    again shrinking $x_0,y_0$ and invoking Lemma \ref{lem:technical2} if necessary to deal with the higher-order terms. Inverting and squaring yield the bounds
    \begin{align}
        \frac{1}{x_{t+1}^2}\leq \frac{1}{x_t^2}\frac{1}{(1-a(\kappa+\epsilon)^2x_t^2)^2}\leq\frac{1}{x_t^2}(1+3a(\kappa+\epsilon)^2x_t^2)\label{eq:finalupper}
    \end{align}
    and
    \begin{align}
        \frac{1}{x_{t+1}^2}\geq \frac{1}{x_t^2}\frac{1}{(1-a(\kappa-\epsilon)^2x_t^2)^2}\geq \frac{1}{x_t^2}(1+2a(\kappa-\epsilon)^2x_t^2),
    \end{align}
    again shrinking $x_0,y_0$ and invoking Lemma \ref{lem:technical2} if necessary to obtain the final upper bound in \eqref{eq:finalupper}, so that
    \begin{align}
        3a(\kappa+\epsilon)^2\geq \frac{1}{x_{t+1}^2}-\frac{1}{x_t^2}\geq 2a(\kappa-\epsilon)^2,
    \end{align}
    from which telescoping summation yields the bounds \eqref{eq:xbounds}. The corresponding bounds for $y_t$ now follow from the limits of \eqref{eq:ycontainment}. This completes the proof.
\end{proof}

Finally, we may prove our convergence theorem in the critical regime using Theorem \ref{thm:parabolic}.

\begin{theorem}\label{thm:critical}
    Assume that $\eta = 2/\lambda(\tpa_*)$. Assume in addition that $\nu/(c(2/\lambda(\tpa_*),\tpa_*)\lambda(\tpa_*))<1$, where $\nu$ is defined in Assumption \ref{ass:convex} and $c$ is defined in Assumption \ref{ass:generic}. Then for all $\tpa_0$ sufficiently close to $\tpa_*$ and all $\tpe\neq0$ sufficiently small, one has $d_M(\tpa_t,\tpa_*) = \Theta(t^{-1/2})$ and $|\tpe_t| = \Theta(t^{-1/2})$.
\end{theorem}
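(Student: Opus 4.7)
The plan is to reduce the gradient descent dynamics in the critical regime to the two-dimensional parabolic system analysed in Theorem \ref{thm:parabolic}. Specifically, I would substitute $\eta = 2/\lambda(\tpa_*)$ into the normal form of Theorem \ref{thm:normalform} and Taylor expand $\lambda$ about $\tpa_*$ using Assumption \ref{ass:convex} ($\nabla_M\lambda(\tpa_*) = 0$ and $\nabla^2_M\lambda(\tpa_*) = \nu I_{TM}$), which yields $\lambda(\tpa) - \lambda(\tpa_*) = \tfrac{\nu}{2} d_M(\tpa, \tpa_*)^2 + O(d_M(\tpa,\tpa_*)^3)$ and hence $1 - \eta\lambda(\tpa) = -1 - (\nu/\lambda(\tpa_*)) d_M(\tpa,\tpa_*)^2 + O(d_M(\tpa,\tpa_*)^3)$.

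Setting $x_t := d_M(\tpa_t, \tpa_*)$ and $y_t := |\tpe_t|$, the substitution above immediately gives
\begin{align*}
    y_{t+1} = y_t\left(1 + \frac{\nu}{\lambda(\tpa_*)} x_t^2 - y_t^2 + O(x_t^3 + y_t^3)\right),
\end{align*}
matching the form of $T_y$ in Theorem \ref{thm:parabolic} with $b = \nu/\lambda(\tpa_*)$ and $c = 1$. For the $\parallel$ update, I would use the Gauss lemma ($\nabla_M \rho = -\log_{\tpa}(\tpa_*)$ for $\rho(\tpa) := \tfrac{1}{2} d_M(\tpa, \tpa_*)^2$) together with $\nabla_M\lambda = \nu \nabla_M\rho + O(d_M^2)$ to compute that one Riemannian GD step on $\lambda$ of size $s_t = \eta \tpe_t^2/(2c(\eta, \tpa_t))$ shrinks $\rho$ by a factor of $1 - 2\nu s_t + O(s_t^2)$. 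Taking square roots and absorbing the $O(\tpe^3 \min\{1,d_M\})$ correction from Theorem \ref{thm:normalform} (this is where the sharpened $\min\{1,d_M\}$ factor in the $\parallel$-error is essential, since it preserves the multiplicative structure of the distance recursion) yields
\begin{align*}
    x_{t+1} = x_t\left(1 - a\, y_t^2 + O\bigl((x_t + y_t) y_t^2\bigr)\right),
\end{align*}
where $a := \nu/(c(2/\lambda(\tpa_*),\tpa_*)\lambda(\tpa_*))$. This matches the form of $T_x$ in Theorem \ref{thm:parabolic}, and the hypothesis $\nu/(c(2/\lambda_*,\tpa_*)\lambda_*) < 1$ is exactly the condition $a < c$. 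Theorem \ref{thm:parabolic} then delivers $x_t, y_t = \Theta(t^{-1/2})$.

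The main obstacle is that $\tpa_t$ evolves on the $(p-1)$-dimensional manifold $M$, so the full GD dynamics is not literally a 2D analytic map on $(x, y)$: the coefficients arising from absorbing the higher-order terms depend on the full state $(\tpa_t, \tpe_t)$, not only on $(x_t, y_t)$. These coefficients do however satisfy uniform bounds $A_t = O(x_t, y_t)$, $B_t = O(x_t)$, $C_t = O(y_t)$, which is all that is actually used in the proof of Theorem \ref{thm:parabolic}: the invariant manifold supplied by Hakim's theorem in Lemma \ref{lem:invariant} is only invoked to justify the estimates of Lemmas \ref{lem:technical1} and \ref{lem:technical2}, and these go through essentially verbatim with time-varying coefficients satisfying the same asymptotic bounds. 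The isotropy $\nabla^2_M\lambda(\tpa_*) = \nu I_{TM}$ is what ensures the scalar recursion for $x_t$ faithfully captures the $\tpa$-dynamics, since to leading order the linearised $\parallel$-update in geodesic normal coordinates at $\tpa_*$ is pure radial scaling $v \mapsto (1 - s_t\nu)v$, with non-radial corrections of higher order that are absorbed into the error terms above.
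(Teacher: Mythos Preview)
Your approach is essentially the paper's: substitute $\eta=2/\lambda_*$ into the normal form, covariantly Taylor-expand $\lambda$ and $\nabla_M\lambda$ about $\tpa_*$ using $\nabla_M^2\lambda(\tpa_*)=\nu I_{TM}$, obtain scalar recursions for $x_t=d_M(\tpa_t,\tpa_*)$ and $y_t=|\tpe_t|$ with $a=\nu/(c_*\lambda_*)$, $b=\nu/\lambda_*$, $c=1$, and invoke Theorem~\ref{thm:parabolic}. You are in fact more explicit than the paper about the dimensional-reduction issue (the paper simply writes ``The result now follows by invoking Theorem~\ref{thm:parabolic}'' without commenting on the dependence of the higher-order terms on the full state $(\tpa_t,\tpe_t)$); one small caution is that Lemma~\ref{lem:technical1} relies on the \emph{exact} invariance of $u$ under $T$ to make $N=(\Delta\circ T)/\Delta$ analytic, so in the time-varying setting one should fix $u$ from the autonomous leading-order system and absorb the resulting non-invariance discrepancy---which is of the same order as the terms already controlled---into the error, rather than expect Lemma~\ref{lem:invariant} itself to go through.
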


\begin{proof}
    Denote $\lambda_*:=\lambda(\tpa_*)$ and $c_*:=c(2/\lambda_*,\tpa_*)$ for notational convenience. By Theorem \ref{thm:approximatedynamics}, for all $(\tpa,\tpe)$ sufficiently close to $(\tpa_*,0)$, GD has the coordinate expression
    \begin{align}
        \begin{pmatrix} \GD^{\parallel}(\eta,\tpa,\tpe) \\ \GD^{\perp}(\eta,\tpa,\tpe)\end{pmatrix} =\begin{pmatrix}
            \exp_{\tpa}\bigg(-\frac{(\tpe)^2}{c(2/\lambda(\tpa))\lambda_*}\nabla_M\lambda(\tpa)\bigg)+O((\tpe)^3d_M(\tpa,\tpa_*)) \\ \bigg(1-\frac{2\lambda(\tpa)}{\lambda_*}\bigg)\tpe + (\tpe)^3 + O((\tpe)^4)
        \end{pmatrix}.
    \end{align}
    We first derive a formula for $d_M\big(\GD^{\parallel}(\eta,\tpa,\tpe),\tpa_*\big) = \|\log_{\tpa_*}(\GD^{\parallel}(\eta,\tpa,\tpe))\|$. Using a covariant Taylor expansion \eqref{eq:covtaylor2}, observe that:
    \begin{align}
        \nabla_M\lambda(\tpa) &= \Pi_{\tpa_*\rightarrow\tpa}\big(\nabla_M\lambda(\tpa_*) + \nabla^2_M\lambda(\tpa_*)[\log_{\tpa_*}(\tpa)] + O\big(d_M(\tpa,\tpa_*)^2\big)\big)\\&=\Pi_{\tpa_*\rightarrow\tpa}\big(\nu\,\log_{\tpa_*}(\tpa) + O(d_M(\tpa,\tpa_*)^2)\big).
    \end{align}
    by Assumption \ref{ass:convex}, where $\Pi_{\tpa_*\rightarrow\tpa}$ is parallel transport from $\tpa_*$ to $\tpa$. Similarly, observe that
    \begin{align}
        \frac{(\tpe)^2}{c(2/\lambda(\tpa))\lambda_*} = \frac{(\tpe)^2}{c_*\lambda_*} + O\big((\tpe)^2d_M(\tpa,\tpa_*)\big).
    \end{align}
    We may thus write
    \begin{align}
        \log_{\tpa_*}\big(\GD^{\parallel}(\eta,\tpa,\tpe)\big) =& \log_{\tpa_*}\bigg(\exp_{\tpa}\bigg(-\frac{\nu\,(\tpe)^2}{c_*\lambda_*}\Pi_{\tpa_*\rightarrow\tpa}\big(\log_{\tpa_*}(\tpa)\big)\bigg)\bigg)\\&+O\big((\tpe)^3d_M(\tpa,\tpa_*),(\tpe)^2d_M(\tpa,\tpa_*)^2\big)\\=&\bigg(1-\frac{\nu}{c_*\lambda_*}\,(\tpe)^2\bigg)\log_{\tpa_*}(\tpa)\\ &+ O\big((\tpe)^3d_M(\tpa,\tpa_*),(\tpe)^2d_M(\tpa,\tpa_*)^2\big).
    \end{align}
    Hence:
    \begin{align}
        d_M\big(\GD^{\parallel}(\eta,\tpa,\tpe),\tpa_*\big) &= \big\|\log_{\tpa_*}(\GD^{\parallel}(\eta,\tpa,\tpe))\big\|\\&=\bigg(1-\frac{\nu}{c_*\lambda_*}\,(\tpe)^2 + O\big((\tpe)^3 + (\tpe)^2d_M(\tpa,\tpa_*)\big)\bigg)d_M(\tpa,\tpa_*).
    \end{align}
    
    We now derive a formula for $|\GD^{\perp}(\eta,\tpa,\tpe)|$. Perform a covariant Taylor expansion \eqref{eq:covtaylor2} on $\lambda(\tpa)$ to obtain
    \begin{align}
        \lambda(\tpa) &= \lambda_* + \langle \nabla_M\lambda(\tpa_*),\log_{\tpa_*}(\tpa)\rangle + \frac{1}{2}\nabla^2_M\lambda(\tpa_*)[\log_{\tpa_*}(\tpa)^{\otimes2}] + O\big(d_M(\tpa,\tpa_*)^3\big)\\&=\lambda_* + \frac{1}{2}\nu\,d_M(\tpa,\tpa_*)^2 + O\big(d_M(\tpa,\tpa_*)^3\big)
    \end{align}
    by Assumption \ref{ass:convex}. Then one sees that
    \begin{align}
        \GD^{\perp}(\eta,\tpa,\tpe) &= -\bigg(1+\frac{\nu}{\lambda_*}d_M(\tpa,\tpa_*)^2 + O\big(d_M(\tpa,\tpa_*)^3\big)\bigg)\tpe + (\tpe)^3 + O\big((\tpe)^4\big)\\&=-\bigg(1+\frac{\nu}{\lambda_*}d_M(\tpa,\tpa_*)^2 - (\tpe)^2 + O\big(d_M(\tpa,\tpa_*)^3, (\tpe)^3\big)\bigg)\tpe.
    \end{align}
    so that
    \begin{align}
        |\GD^{\perp}(\eta,\tpa,\tpe)| = \bigg(1+\frac{\nu}{\lambda_*}d_M(\tpa,\tpa_*)^2 - (\tpe)^2 + O\big(d_M(\tpa,\tpa_*)^3, (\tpe)^3\big)\bigg)|\tpe|
    \end{align}
    provided $d_M(\tpa,\tpa_*)$ and $|\tpe|$ are sufficiently small. The result now follows by invoking Theorem \ref{thm:parabolic}.
\end{proof}

\subsection{Supercritical regime}

Our convergence theorem in the supercritical regime is the easiest of the three. The orbit to which the iterates are attracted is hyperbolic unlike in the critical case, and the transient phenomena are relatively simple compared with those of the subcritical case.

\begin{theorem}\label{thm:supercrit}
    Assume that $\eta>2/\lambda(\tpa_*)$ is sufficiently small. Then there are positive constants $C_1, C_2,C_3$ and a stable orbit of period two of the form $\big(\tpa_*,\pm(\eta\lambda(\tpa_*)-2)^{1/2}+O(\eta\lambda(\tpa_*)-2)\big)$ to which the iterates $(\tpa_t,\tpe_t)$ starting from any $(\tpa_0,\tpe_0)$ satisfying $0<|\tpe_0|\leq C_1(\eta\lambda(\tpa_*)-2)^{1/2}$ and $d_M(\tpa_0,\tpa_*)\leq C_2(\eta\lambda(\tpa_*)-2)^{1/2}$ converge with rate $O\big((1-C_3(\eta\lambda(\tpa_*)-2))^t\big)$.
\end{theorem}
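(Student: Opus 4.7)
The plan is to work with the normal form from Theorem~\ref{thm:normalform} and treat $\alpha := \eta\lambda(\tpa_*) - 2 > 0$ as a small bifurcation parameter. A covariant Taylor expansion of $\lambda$ at $\tpa_*$, using $\nabla_M\lambda(\tpa_*) = 0$ and Assumption~\ref{ass:convex}, gives $\alpha_t := \eta\lambda(\tpa_t) - 2 = \alpha + \Theta(d_M(\tpa_t,\tpa_*)^2)$, so that in the candidate trapping region $d_M(\tpa_t,\tpa_*) = O(\sqrt{\alpha})$ we have $\alpha_t = \alpha(1+O(1))$. Throughout I would work with the two-step map $\GD^2$, whose $\perp$-component near $\tpe = 0$ is exactly the composite $f_{\alpha_{t+1}}\circ f_{\alpha_t}$ treated by Lemma~\ref{lem:contraction1} plus an analytic coupling perturbation coming from $\tpa$.

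The first step is to pin down the stable period-two orbit. Freezing $\tpa = \tpa_*$, Lemma~\ref{lem:contraction1} applied at $\alpha_0 = \alpha_1 = \alpha$ gives fixed points $\xi_\pm(\alpha) = \pm\sqrt{\alpha} + O(\alpha)$ of $f_\alpha \circ f_\alpha$, with Floquet multiplier $(f_\alpha\circ f_\alpha)'(\xi_\pm(\alpha)) = 1 - 4\alpha + O(\alpha^2)$, so the orbit is hyperbolic and contracting at rate $1 - 2\alpha + O(\alpha^2)$ per step. Assumption~\ref{ass:orthogonal} ensures that $\tpa_*$ is fixed by $\GD|_{\tpa = \tpa_*}$ for every $\eta$, so $(\tpa_*, \xi_\pm(\alpha))$ is a bona fide period-two orbit of the full map. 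Hyperbolic persistence under the small $O(\sqrt{\alpha})$ perturbation coming from $\tpa$-coupling (established e.g.\ by the implicit function theorem applied to the defining equation $\GD^2(\tpa,\tpe) = (\tpa,\tpe)$) shows that this orbit's basin contains a neighbourhood of size $\Theta(\sqrt{\alpha})$ in the $\tpe$-direction.

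Next I would set up a bootstrap argument on the forward-invariant region
\begin{equation}
R_\alpha := \{(\tpa,\tpe) : d_M(\tpa,\tpa_*) \leq C_2\sqrt{\alpha},\ C_0\sqrt{\alpha} \leq |\tpe| \leq C_1\sqrt{\alpha}\}.
\end{equation}
Inside $R_\alpha$: (i) the upper bound $|\tpe_t| \leq C_1\sqrt{\alpha}$ and the lower bound $|\tpe_t| \geq C_0\sqrt{\alpha}$ follow from the monotonicity of $f_{\alpha_{t+1}}\circ f_{\alpha_t}$ on $[-2\sqrt{\gamma},2\sqrt{\gamma}]$ and the existence of its unique nonzero attracting fixed point $\sqrt{(\alpha_t+\alpha_{t+1})/2}+O(\alpha)$ from Lemma~\ref{lem:contraction1}, provided $C_0, C_1$ straddle $\sqrt{\alpha}$; (ii) since $(\tpe_t)^2 \geq C_0^2\alpha$, Lemma~\ref{lem:descent} yields $\lambda(\tpa_{t+1})-\lambda_* \leq (1 - C'\alpha)(\lambda(\tpa_t)-\lambda_*)$, which by geodesic strong convexity (Lemma~\ref{lem:geoconvex1}) upgrades to $d_M(\tpa_t,\tpa_*)^2 = O(\alpha)(1-C'\alpha)^t$ and hence $|\alpha_t - \alpha| = O(\alpha)(1-C'\alpha)^t$.

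The main obstacle is closing the bootstrap jointly in both coordinates, since the $\tpe$-contraction rate depends on $\tpa_t$ being close to $\tpa_*$ (so that $\alpha_t \approx \alpha$), while the $\tpa$-contraction rate depends on a uniform lower bound on $|\tpe_t|$. The cleanest resolution is to introduce a joint Lyapunov function of the form $V_t := d_M(\tpa_t,\tpa_*)^2 + \beta\bigl(\mathrm{dist}(|\tpe_t|, \{|\xi_-(\alpha)|, |\xi_+(\alpha)|\})\bigr)^2$ for a sufficiently small $\beta > 0$ calibrated to balance the two contraction constants, and to show $V_{t+2} \leq (1 - C_3\alpha)V_t$ inside $R_\alpha$; the off-diagonal cross-terms generated by the $O(\tpe^3 \min\{1, d_M(\tpa,\tpa_*)\})$ and $O(\tpe^4)$ remainders in Theorem~\ref{thm:normalform} are absorbed into the $C_3\alpha$ drift because they are $o(\alpha)\cdot V_t$ for $\alpha$ small. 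Iterating $V_{t+2} \leq (1-C_3\alpha)V_t$ and extracting coordinates gives the claimed convergence to $(\tpa_*, \xi_\pm(\alpha))$ at rate $(1 - C_3(\eta\lambda(\tpa_*)-2))^t$.
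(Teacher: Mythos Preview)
Your setup matches the paper's: both of you use the normal form from Theorem~\ref{thm:normalform}, Taylor-expand $\lambda$ at $\tpa_*$ via Assumption~\ref{ass:convex} to write $\eta\lambda(\tpa)-2=\alpha+\Theta(d_M(\tpa,\tpa_*)^2)$, and then analyse the two-step map. The paper then packages the dynamics as a planar map $F(x,y)$ in the scalars $x=d_M(\tpa,\tpa_*)$ and $y=\tpe$ (this is Theorem~\ref{thm:hyperbolic}), and proves contraction by computing the Jacobian $DF^2$ explicitly on the region $\{|x|\le C_2\sqrt{\alpha},\ y^2\in[\text{const}\cdot\alpha,2\alpha]\}$ and bounding $\|DF^2\|_2\le 1-C_3\alpha$ via $\|A\|_2\le\sqrt{\|A\|_1\|A\|_\infty}$. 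This sidesteps any Lyapunov tuning: the $bx^2$ coupling in $F_y$ simply shows up as an off-diagonal entry $O(x\sqrt{\alpha})=O(C_2\alpha)$ in $DF^2$, absorbed by choosing $C_2$ small. Your weighted Lyapunov function would also work, but it is more machinery than needed, and your dismissal of the cross-terms is not quite right: the dominant coupling of $\tpe$ to $\tpa$ is not the $O((\tpe)^4)$ or $O((\tpe)^3 d_M)$ remainders but the leading-order shift $\alpha_t-\alpha=\Theta(d_M^2)$, which moves the instantaneous attractor by $|\xi_t-\xi|=\Theta(d_M^2/\sqrt{\alpha})$. In $R_\alpha$ this is $\Theta(\sqrt{\alpha})$, the same order as $\xi$ itself, so it is not $o(\alpha)\cdot V_t$; you would need either a careful choice of $\beta$ or (simpler) a sequential argument in which you first contract $d_M$ using only $|\tpe_t|\ge C_0\sqrt{\alpha}$, and then feed the resulting geometric decay of $|\xi_t-\xi|$ into a forced linear recursion for $|\tpe_t-\xi|$.

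There is also a coverage gap: the theorem allows any $0<|\tpe_0|\le C_1\sqrt{\alpha}$, but your region $R_\alpha$ already imposes $|\tpe|\ge C_0\sqrt{\alpha}$, and you never explain how iterates with $|\tpe_0|\ll\sqrt{\alpha}$ reach $R_\alpha$. The paper handles this with a short finite-time growth phase: while $y_t^2$ lies below the lower threshold one has $|(F^2)_y(x,y)|\ge(1+\tfrac12\alpha)|y|$, so the threshold is attained after at most $O\big(\alpha^{-1}\log(\alpha/y_0^2)\big)$ two-steps, during which $x_t$ is nonincreasing.
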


\begin{proof}
    Arguing using covariant Taylor expansions as in the proof of Theorem \ref{thm:critical}, one sees that
    \begin{align}
        d_M\big(\GD^{\parallel}(\eta,\tpa,\tpe),\tpa_*\big) = \bigg(1-\frac{\eta \nu}{2c(\eta,\tpa_*)}(\tpe)^2 + O\big((\tpe)^3,(\tpe)^2d_M(\tpa,\tpa_*)\big)\bigg)d_M(\tpa,\tpa_*)
    \end{align}
    and
    \begin{align}
        \GD^{\perp}(\eta,\tpa,\tpe) = -\bigg(1+(\eta\lambda(\tpa_*)-2) + \frac{\eta\nu}{2}d_M(\tpa,\tpa_*)^2 - (\tpe)^2 + O\big(d_M(\tpa,\tpa_*)^3,(\tpe)^3\big)\bigg)\tpe.
    \end{align}
    Consequently, the result follows from Theorem \ref{thm:hyperbolic} below.
\end{proof}

\begin{theorem}\label{thm:hyperbolic}
    Given $\alpha>0$ and constants $a>0$ and $b>0$, consider the functions
    \begin{align}
        F_x(x,y):=(1-ay^2 + O(y^3,xy^2))x,\qquad F_y(x,y) = -(1+\alpha+bx^2-y^2 + O(x^3,y^3))y.
    \end{align}
    Then, there are constants $C_1,C_2,C_3>0$ such that for all $\alpha$ sufficiently small, the map $F = (F_x,F_y)$ admits a stable, period-2 orbit $\{(0,\xi_+),(0,\xi_-)\}$ with $\xi_{\pm} = \pm\sqrt{\alpha}+O(\alpha)$, and the iterates $(x_{t+1},y_{t+1}):=F(x_t,y_t)$ starting from any $(x_0,y_0)$ with $0<|y_0|\leq C_1\sqrt{\alpha}$ and $|x_0|\leq C_2\sqrt{\alpha}$ converge to this orbit at a rate of $O\big((1-C_3\alpha)^t\big)$.
\end{theorem}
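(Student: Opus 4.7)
The approach is to exploit two key structural features of $F$: the axis $\{x=0\}$ is $F$-invariant (since $F_x(0,y)=0$), and on it $F$ restricts to the scalar flip-bifurcation map $y\mapsto -(1+\alpha-y^2)y+O(y^4)$. My first step would be to produce the period-2 orbit by restriction and then analyse its linear stability. Applying Lemma \ref{lem:contraction1} to $F|_{\{x=0\}}$ (with $\alpha_0=\alpha_1=\alpha$) immediately yields, for $\alpha$ small, exactly two nonzero fixed points $\xi_\pm=\pm\sqrt{\alpha}+O(\alpha)$ of the restricted second-iterate map, giving the desired period-2 orbit $\{(0,\xi_+),(0,\xi_-)\}$ of $F$.

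My next step would be to linearise $G:=F\circ F$ at $(0,\xi_+)$. Because $\{x=0\}$ is invariant, $DF(0,\xi_\pm)$ is upper triangular, and hence so is $DG(0,\xi_+)=DF(0,\xi_-)\,DF(0,\xi_+)$. A direct computation from the given formulae gives
\begin{align*}
    \partial_x F_x(0,\xi_\pm) &= 1-a\xi_\pm^2+O(\alpha^{3/2}) = 1-a\alpha+O(\alpha^{3/2}),\\
    \partial_y F_y(0,\xi_\pm) &= 3\xi_\pm^2-(1+\alpha)+O(\alpha^{3/2}) = -1+2\alpha+O(\alpha^{3/2}),
\end{align*}
so the diagonal (hence the eigenvalues) of $DG(0,\xi_+)$ consists of $(1-a\alpha)^2+O(\alpha^{3/2})=1-2a\alpha+O(\alpha^{3/2})$ and $(-1+2\alpha)^2+O(\alpha^{3/2})=1-4\alpha+O(\alpha^{3/2})$, both strictly in $(0,1)$ for $\alpha$ small. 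Standard hyperbolic fixed-point theory then produces constants $\rho>0$ and $C_3>0$ (uniform in small $\alpha$) such that every $G$-orbit in the disc of radius $\rho\sqrt{\alpha}$ around $(0,\xi_\pm)$ contracts to the respective fixed point at rate $(1-C_3\alpha)$ per application of $G$, hence at the same rate per application of $F$.

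The remaining task is to show that every initial condition in the hypothesis enters this local disc in a number of iterations contributing only an $O(1)$ multiplicative prefactor. For the $x$-coordinate, $|F_x(x,y)|\leq (1+O(\alpha^{3/2}))|x|$ uniformly on $\{|x|,|y|\leq C_2\sqrt{\alpha}\}$, so $|x_t|$ can grow by at most a factor $e^{O(\sqrt{\alpha}\log\alpha^{-1})}=O(1)$ during the transient. For the $y$-coordinate, choosing $C_1,C_2$ small enough that $bC_2^2<1/2$ guarantees $|F_y(x,y)|\geq (1+\alpha/2)|y|$ whenever $|y|$ is small, so $|y_t|$ grows geometrically out of any neighbourhood of $0$ in at most $O(\alpha^{-1}\log|y_0|^{-1})$ steps; once $|y_t|=\Omega(\sqrt{\alpha})$, a one-dimensional monotonicity argument analogous to the proof of Lemma \ref{lem:ulboundsperp} (with $x_t$ absorbed into the error terms as a parameter of size $O(\sqrt{\alpha})$) places $(x_t,y_t)$ inside the local disc within $O(1)$ further iterations.

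The main obstacle I anticipate lies in carefully propagating the cross-coupling between $x$ and $y$ during the transient escape phase: the linearisation near the periodic orbit is entirely clean, but proving that the mixed terms $xy^2$ in $F_x$ and $bx^2y$ in $F_y$ do not destabilise the iterates requires uniform $\sqrt{\alpha}$-scaled estimates that must remain compatible both when $|y_0|$ is very small (so the escape takes many iterations during which $|x_t|$ must not grow appreciably) and when $|y_0|$ is close to $C_1\sqrt{\alpha}$ (where the nonlinear corrections are largest). The technical heart of the argument is choosing $C_1,C_2,C_3$ and the local radius $\rho$ in a self-consistent way so that all these estimates close.
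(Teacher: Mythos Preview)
Your approach is essentially the paper's: identify the period-2 orbit via Lemma~\ref{lem:contraction1} on $\{x=0\}$, study $G=F^2$ near its fixed points, then treat the escape phase separately. The eigenvalue computation and the escape-time estimate match the paper's.

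The one place the paper differs is precisely where you invoke ``standard hyperbolic fixed-point theory'' to obtain $\rho,C_3$ \emph{uniform in $\alpha$}. That step is not standard: the eigenvalues of $DG(0,\xi_+)$ tend to $1$ as $\alpha\to 0$, so the usual local stable-manifold estimates degenerate and give no $\alpha$-uniform basin or rate. The paper circumvents this by computing $DF^2(x,y)$ explicitly as a function of $(x,y)$ and bounding $\|DF^2(x,y)\|_2\leq 1-C_3\alpha$ (via $\|A\|_2\leq\sqrt{\|A\|_1\|A\|_\infty}$) uniformly over the explicit annulus $\{\alpha/((1+\alpha)(1+(1+\alpha)^2))\leq y^2\leq 2\alpha,\ |x|\leq C_2\sqrt{\alpha}\}$; this replaces your black-box appeal. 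Conversely, what you flag as the ``main obstacle'' --- cross-coupling during the transient --- is in fact benign: since $|F_x(x,y)|=|x|\,|1-ay^2+O(\cdot)|\leq|x|$ for small $(x,y)$, the sequence $|x_t|$ is non-increasing, so the $C_2\sqrt{\alpha}$-strip is preserved throughout without any growth to control.
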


\begin{proof}
    The line $x = 0$ is preserved by $F$, and thereon one sees that $F_y$ takes the form
    \begin{align}
        F_y(0,\cdot):y\mapsto -(1+\alpha)y+y^3+O(y^4)
    \end{align}
    which, by Lemma \ref{lem:contraction1}, admits a period-two orbit $\{\xi_+,\xi_-\}$ of the form $\xi_{\pm} = \pm\sqrt{\alpha}+O(\alpha)$. We will show that the iterates of the square $F^2$ of $F$ converge to $(0,\xi_{\pm})$ for all initial points $(x_0,y_0)$ sufficiently small, $y_0\neq0$. This will be achieved by showing that $\|DF^2(x,y)\|<1$ uniformly over a neighbourhood of $(0,\xi_{\pm})$, followed by proving a guarantee of convergence to this neighbourhood in finite time.
    
    Via a routine calculation one sees that
    \begin{align}
        F^2(x,y) =\begin{pmatrix} x\big(1-a(1+(1+\alpha)^2)y^2 + O(xy^2,y^3)\big)\\ y\big((1+\alpha)^2+2(1+\alpha)bx^2-(1+\alpha)(1+(1+\alpha)^2)y^2+O(x^3,x^2y,xy^2,y^3)\big)\end{pmatrix}.
    \end{align}
    The derivative of $F^2$ is then given by
    \begin{align}
        DF^2(x,y) =& \begin{pmatrix} 1-a(1+(1+\alpha)^2)y^2 & -2a(1+(1+\alpha)^2)xy \\ 4(1+\alpha)bxy & (1+\alpha)^2 + 2(1+\alpha)bx^2-3(1+\alpha)(1+(1+\alpha)^2)y^2\end{pmatrix}\\& + O(x^3,x^2y,xy^2,y^3).
    \end{align}
    In particular, for all $y$ satisfying
    \begin{align}
        \frac{\alpha}{(1+\alpha)(1+(1+\alpha)^2)}\leq y^2\leq 2\alpha,\label{eq:yboundssuper},
    \end{align}
    the matrix $DF^2(x,y)$ has entries with magnitudes upper-bounded by
    \begin{align}
        \begin{pmatrix}
            1-\frac{a\alpha}{1+\alpha}  & O(x\sqrt{\alpha}) \\ O(x\sqrt{\alpha}) & 1-(1/2)\alpha + O(x^2)
        \end{pmatrix} + O(x^3,\sqrt{\alpha}x^2,\alpha x,\alpha^{3/2})
    \end{align}
    for all $\alpha$ sufficiently small. Using the bound $\|A\|_2\leq \sqrt{\|A\|_1\|A\|_{\infty}}$ for a matrix $A$, where $\|\cdot\|_1$ and $\|\cdot\|_{\infty}$ are the maximum column 1-norm and maximum row 1-norm respectively, one sees then that there are $C_2,C_3>0$ such that for all $\alpha$ sufficiently small, all $x$ satisfying
    \begin{align}
        |x|\leq C_2\sqrt{\alpha}.\label{eq:xboundssuper}
    \end{align}
    and all $y$ satisfying \eqref{eq:yboundssuper}, one has $\|DF^2(x,y)\|_2\leq 1-C_3\alpha$.
    
    Letting $V$ be the neighbourhood of the two-point set $\{(0,\xi_{\pm})\}$ enclosed by the bounds \eqref{eq:yboundssuper} and \eqref{eq:xboundssuper}, one sees that the iterates $(x_{2t},y_{2t})$ of $F^2$ starting at any point $(x_0,y_0)\in V$ converge toward $(0,\xi_{\text{sign}(y_0)})$ with rate $\|(x_{2t},y_{2t}-\xi_{\pm})\|\leq (1-C_3\alpha)^t\|(x_0,y_0-\xi_{\text{sign}(y_0)})\|$.

    We complete the proof by showing that the iterates starting from any $x_0$ satisfying \eqref{eq:xboundssuper} and any $y_0$ satisfying
    \begin{align}
        0<y_0^2\leq \frac{\alpha}{(1+\alpha)(1+(1+\alpha)^2)}
    \end{align}
    are eventually drawn into $V$. This, however, is easy: for any $x$ satisfying \eqref{eq:xboundssuper} and any $y$ satisfying $y^2\leq \alpha/\big((1+\alpha)(1+(1+\alpha)^2)\big)$, one has
    \begin{align}
        |(F^2)_y(x,y)|&\geq |y|\big((1+\alpha)^2 - (1+\alpha)(1+(1+\alpha)^2)y^2 + O(\alpha^{3/2})\big)\\&\geq |y|\big(1+(1/2)\alpha\big)
    \end{align}
    for all $\alpha$ sufficiently small. It follows that the iterates $y_{2t} = (F^{2t})_y(x_0,y_0)$ satisfy the recursion $|y_{2(t+1)}|\geq \big(1+(1/2)\alpha\big)|y_{2t}|$ as long as $y^2_{2t}\leq \alpha/\big((1+\alpha)(1+(1+\alpha)^2)\big)$. We claim now that there exists $\tau\in\NB$ satisfying
    \begin{align}
        \tau\leq \frac{1}{2}\bigg\lceil\ln\bigg(\frac{\alpha}{y_0^2(1+\alpha)(1+(1+\alpha)^2)}\bigg)\ln\bigg(1+\frac{1}{2}\alpha\bigg)^{-1}\bigg\rceil
    \end{align}
    such that $y_{2\tau}\geq \alpha/\big((1+\alpha)(1+(1+\alpha)^2)\big)$. Suppose for a contradiction that this were not the case; then
    \begin{align}
        \frac{\alpha}{(1+\alpha)(1+(1+\alpha)^2)}> y_0^2(1+(1/2)\alpha)^{2\tau}\geq \frac{\alpha}{(1+\alpha)(1+(1+\alpha)^2)},
    \end{align}
    which is a contradiction. This proves the result.
\end{proof}

\section{Elementary lemmas}

In this section, for ease of reference, we collect several well-known facts and their elementary proofs. Recall that a subset $M'$ of a Riemannian manifold $M$ is \emph{geodesically convex} if any two points of $M'$ can be connected by a unique geodesic contained entirely in $M'$.

\begin{lemma}\label{lem:geoconvex1}
    Let $\lambda:M'\rightarrow \RB$ be a $C^2$ function on a geodesically convex subset $M'$ of a Riemannian manifold $M$. If $\lambda$ is $\mu$-geodesically strongly convex in the sense that $\nabla^2_M\lambda\succeq\mu I_{TM}$ at all points on $M'$, then
    \begin{align}
        \lambda(y)\geq \lambda(x) + \langle \nabla_M\lambda(x),\log_x(y)\rangle  + \frac{\mu}{2}d_M(x,y)^2
    \end{align}
    for all $x,y\in M'$.
\end{lemma}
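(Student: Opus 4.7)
The plan is to reduce the Riemannian inequality to the one-dimensional situation by restricting $\lambda$ to the unique minimising geodesic from $x$ to $y$, using the standard interplay between the geodesic equation and the Riemannian Hessian.

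First, I would use the geodesic convexity of $M'$ to extract the unique constant-speed minimising geodesic $\gamma:[0,1]\to M'$ with $\gamma(0)=x$, $\gamma(1)=y$, so that $\dot{\gamma}(0)=\log_x(y)$ and $\|\dot{\gamma}(t)\|_{\gamma(t)}=d_M(x,y)$ for every $t\in[0,1]$ (this is the standard consequence of $(D_M\dot{\gamma})[\dot{\gamma}]=0$ combined with metric compatibility of the Levi-Civita connection). I would then define the scalar function $h:[0,1]\to\RB$ by $h(t):=\lambda(\gamma(t))$ and compute its first two derivatives. By the chain rule, $h'(t)=\langle\nabla_M\lambda(\gamma(t)),\dot{\gamma}(t)\rangle$, so $h'(0)=\langle\nabla_M\lambda(x),\log_x(y)\rangle$.

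Next, I would compute $h''$. Differentiating $h'$ and using the defining property of the Levi-Civita connection together with the geodesic equation $(D_M\dot{\gamma})[\dot{\gamma}]=0$ yields
\begin{equation*}
h''(t)=D^2_M\lambda(\gamma(t))[\dot{\gamma}(t),\dot{\gamma}(t)]=\langle\nabla^2_M\lambda(\gamma(t))[\dot{\gamma}(t)],\dot{\gamma}(t)\rangle.
\end{equation*}
The strong convexity hypothesis $\nabla^2_M\lambda\succeq\mu I_{TM}$ on $M'$ then gives $h''(t)\geq\mu\|\dot{\gamma}(t)\|^2=\mu\,d_M(x,y)^2$ for every $t\in[0,1]$.

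Finally I would finish by Taylor's theorem with integral remainder applied to $h$ on $[0,1]$:
\begin{equation*}
\lambda(y)=h(1)=h(0)+h'(0)+\int_0^1(1-t)h''(t)\,dt\geq \lambda(x)+\langle\nabla_M\lambda(x),\log_x(y)\rangle+\frac{\mu}{2}d_M(x,y)^2,
\end{equation*}
which is the claimed inequality. The only subtle point is the computation of $h''$, where one must carefully apply the Levi-Civita connection to the pullback vector field along $\gamma$ and exploit the geodesic equation to eliminate the acceleration term; this is routine but is the one place a reader might stumble.
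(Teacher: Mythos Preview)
Your proof is correct and follows essentially the same approach as the paper: restrict $\lambda$ to the unique minimising geodesic, compute the first two derivatives of the resulting one-variable function using the geodesic equation, and apply Taylor's theorem. The only cosmetic difference is that the paper uses the Lagrange (mean-value) form of the remainder where you use the integral form; both yield the same bound immediately.
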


\begin{proof}
    Fix $x,y\in M'$. Since $M'$ is geodesically convex, there exists a unique geodesic $\gamma:[0,1]\rightarrow M'$ such that $\gamma(0) = x$ and $\gamma(1) = y$. This $\gamma$ satisfies
    \begin{align}
        \dot{\gamma}(t) = \Pi(\gamma)_t\dot{\gamma}(0) = \Pi(\gamma)_t\log_x(y),\qquad \|\dot{\gamma}(t)\| = d_M(x,y)
    \end{align}
    for all $t\in[0,1]$, where $\Pi(\gamma)_t:T_{\gamma(0)}M\rightarrow T_{\gamma(t)}M$ is parallel transport. Define $\varphi(t):=\lambda(\gamma(t))$. Then using \eqref{eq:geodesicequation} and the chain rule,
    \begin{align}
        \dot{\varphi}(t) = \langle \nabla_M\lambda(\gamma(t)),\dot{\gamma}(t)\rangle,\qquad \ddot{\varphi}(t) = \langle \dot{\gamma}(t),\nabla^2_M\lambda(\gamma(t))\dot{\gamma}(t)\rangle
    \end{align}
    for all $t\in[0,1]$. Thus, by Taylor's theorem, there is $s\in[0,1]$ such that
    \begin{align}
        \lambda(y) = \varphi(1)&= \varphi(0) + \dot{\varphi}(0) + \frac{1}{2}\ddot{\varphi}(s)\\&=\lambda(x) + \langle\nabla_M\lambda(x),\log_x(y)\rangle + \frac{1}{2}\langle\dot{\gamma}(s),\nabla^2_M\lambda(\gamma(s))\dot{\gamma}(s)\rangle\\&\geq \lambda(x) + \langle \nabla_M\lambda(x),\log_x(y)\rangle + \frac{\mu}{2}\|\dot{\gamma}(s)\|^2\\&=\lambda(x) + \langle\nabla_M\lambda(x),\log_x(y)\rangle + \frac{\mu}{2}d_M(x,y)^2
    \end{align}
    as claimed.
\end{proof}

\begin{lemma}\label{lem:geoconvex2}
    Let $\lambda:M'\rightarrow \RB$ be a $C^2$ function on a geodesically convex subset $M'$ of a Riemannian manifold $M$. If $\lambda$ is $\mu$-geodesically strongly convex in the sense that $\nabla^2_M\lambda\succeq\mu I_{TM}$ at all points on $M'$, then
    \begin{align}
        \langle \Pi_{x\rightarrow y}\nabla_M\lambda(x)-\nabla_M\lambda(y),\log_y(x)\rangle \geq \mu\, d_M(x,y)^2
    \end{align}
    for all $x,y\in M'$, where $\Pi_{x\rightarrow y}:T_xM\rightarrow T_yM$ is parallel transport.
\end{lemma}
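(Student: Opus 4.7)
The plan is to derive this monotonicity-of-the-gradient statement from the geodesic strong convexity bound of Lemma \ref{lem:geoconvex1} by the usual trick of summing the inequality with its $x \leftrightarrow y$ swap, and then to identify the resulting expression with the left-hand side of the claim via the parallel transport identity relating $\log_x(y)$ and $\log_y(x)$.

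First I would apply Lemma \ref{lem:geoconvex1} twice: once as stated to obtain
\begin{align}
    \lambda(y) \geq \lambda(x) + \langle \nabla_M\lambda(x), \log_x(y)\rangle + \tfrac{\mu}{2}d_M(x,y)^2,
\end{align}
and once with the roles of $x$ and $y$ interchanged to obtain
\begin{align}
    \lambda(x) \geq \lambda(y) + \langle \nabla_M\lambda(y), \log_y(x)\rangle + \tfrac{\mu}{2}d_M(x,y)^2.
\end{align}
Adding these and cancelling the values of $\lambda$ yields
\begin{align}
    -\langle \nabla_M\lambda(x), \log_x(y)\rangle - \langle \nabla_M\lambda(y), \log_y(x)\rangle \geq \mu\, d_M(x,y)^2.
\end{align}

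Next I would invoke the identity $\log_x(y) = -\Pi_{y \to x}\log_y(x)$. This follows from the uniqueness of the geodesic $\gamma:[0,1]\to M'$ from $x$ to $y$ inside the geodesically convex set $M'$: its reverse $t \mapsto \gamma(1-t)$ is the geodesic from $y$ to $x$, so $\log_y(x) = -\dot{\gamma}(1)$, and since $\dot{\gamma}$ is parallel along $\gamma$ (the geodesic equation) one has $\dot{\gamma}(1) = \Pi_{x \to y}\log_x(y)$, giving the stated identity after inverting parallel transport.

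Substituting this into the first inner product and then using the fact that parallel transport is an isometry (so $\langle \nabla_M\lambda(x), \Pi_{y\to x}\log_y(x)\rangle = \langle \Pi_{x\to y}\nabla_M\lambda(x), \log_y(x)\rangle$) converts the bound to
\begin{align}
    \langle \Pi_{x\to y}\nabla_M\lambda(x) - \nabla_M\lambda(y), \log_y(x)\rangle \geq \mu\, d_M(x,y)^2,
\end{align}
which is the claim. The only mildly delicate step is the parallel transport identity between the two logarithms, but this is immediate from the definition of the exponential map and the fact that geodesic tangent vectors are parallel along themselves; no further obstacle arises.
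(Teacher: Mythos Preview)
Your proposal is correct and follows essentially the same argument as the paper: apply Lemma~\ref{lem:geoconvex1} at $x$ and at $y$, add the two inequalities to cancel the function values, and use the parallel-transport identity $\log_x(y)=-\Pi_{y\to x}\log_y(x)$ together with the isometry of parallel transport to rewrite the resulting inner product in $T_yM$. The only cosmetic difference is ordering---the paper rewrites one inequality via parallel transport before adding, while you add first---but the content is identical.
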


\begin{proof}
    Fix $x,y\in M'$. From Lemma \ref{lem:geoconvex1}, one has the estimates
    \begin{align}
        &\lambda(y)\geq\lambda(x) + \langle\nabla_M\lambda(x),\log_x(y)\rangle + \frac{\mu}{2}d_M(x,y)^2,\label{eq:yx}\\&\lambda(x)\geq\lambda(y) + \langle\nabla_M\lambda(y),\log_y(x)\rangle + \frac{\mu}{2}d_M(x,y)^2\label{eq:xy}.
    \end{align}
    Since parallel transport is an orthogonal transformation, one has
    \begin{align}
        \langle \nabla_M\lambda(x),\log_x(y)\rangle = -\langle\nabla_M\lambda(x),\Pi_{y\rightarrow x}\log_y(x)\rangle = -\langle\Pi_{x\rightarrow y}\nabla_M\lambda(x),\log_y(x)\rangle.
    \end{align}
    Substituting this into \eqref{eq:yx} and rearranging yields
    \begin{align}
        \langle \Pi_{x\rightarrow y}\nabla_M\lambda(x),\log_y(x)\rangle \geq \frac{\mu}{2}d_M(x,y)^2 + \lambda(x)-\lambda(y)
    \end{align}
    while \eqref{eq:xy} may be rearranged to give
    \begin{align}
        -\langle \nabla_M\lambda(y),\log_y(x)\rangle\geq \frac{\mu}{2}d_M(x,y)^2 + \lambda(y)-\lambda(x).
    \end{align}
    Adding these estimates then yields the result.
\end{proof}

\begin{lemma}\label{lem:geoconvex3}
    Let $\lambda:M'\rightarrow \RB$ be a $C^2$ function on a geodesically convex subset $M'$ of a Riemannian manifold $M$. If $\lambda$ is $\mu$-geodesically strongly convex in the sense that $\nabla^2_M\lambda\succeq\mu I_{TM}$ at all points on $M'$ and admits a critical point $x_*$ in $M'$, then $x_*$ is the unique global minimum of $\lambda$ in $M'$ and
    \begin{align}
        \|\nabla_M\lambda(x)\|^2\geq 2\mu\big(\lambda(x)-\lambda(x_*)\big)
    \end{align}
    for all $x\in M'$.
\end{lemma}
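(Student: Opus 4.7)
The plan is to derive both conclusions from a single application of Lemma \ref{lem:geoconvex1}, which, under the strong convexity hypothesis, gives
\begin{align*}
    \lambda(y) \geq \lambda(x) + \langle\nabla_M\lambda(x), \log_x(y)\rangle + \tfrac{\mu}{2}d_M(x,y)^2
\end{align*}
for all $x,y \in M'$. Note that the geodesic convexity of $M'$ ensures $\log_x(y)$ is well-defined for any pair $x,y \in M'$, so this bound applies globally on $M'$.

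First, I would take $x = x_*$ (using $\nabla_M\lambda(x_*) = 0$) to deduce $\lambda(y) \geq \lambda(x_*) + \tfrac{\mu}{2}d_M(x_*,y)^2$ for every $y \in M'$. Since $\mu > 0$, this simultaneously shows that $x_*$ is a global minimiser and that equality fails whenever $y \neq x_*$, establishing uniqueness.

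Next, for the PL-type inequality, I would apply the same inequality with an arbitrary $x \in M'$ and $y = x_*$. Writing $v := \log_x(x_*) \in T_xM$ and using $\lambda(x_*) \leq \lambda(x)$ on the left-hand side, the inequality becomes
\begin{align*}
    \lambda(x_*) \geq \lambda(x) + \langle \nabla_M\lambda(x), v\rangle + \tfrac{\mu}{2}\|v\|^2.
\end{align*}
The key observation is purely algebraic: for any vector $g \in T_xM$ and any $v \in T_xM$, completing the square gives $\langle g, v\rangle + \tfrac{\mu}{2}\|v\|^2 = \tfrac{\mu}{2}\|v + g/\mu\|^2 - \tfrac{1}{2\mu}\|g\|^2 \geq -\tfrac{1}{2\mu}\|g\|^2$. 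Applied with $g = \nabla_M\lambda(x)$, this yields $\lambda(x_*) \geq \lambda(x) - \tfrac{1}{2\mu}\|\nabla_M\lambda(x)\|^2$, which rearranges to the claimed bound.

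I do not anticipate any genuine obstacles: the proof is the standard Riemannian analogue of the Euclidean argument that strong convexity implies the Polyak–{\L}ojasiewicz inequality, with geodesic convexity of $M'$ ensuring that the logarithm map between any two points is available so Lemma \ref{lem:geoconvex1} can be invoked without caveat.
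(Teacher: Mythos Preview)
Your proposal is correct and follows essentially the same route as the paper: both parts are obtained from Lemma~\ref{lem:geoconvex1}, first with $x=x_*$ to get uniqueness of the minimiser, then with $y=x_*$ followed by the same completing-the-square identity $\langle g,v\rangle+\tfrac{\mu}{2}\|v\|^2=\tfrac{\mu}{2}\|v+g/\mu\|^2-\tfrac{1}{2\mu}\|g\|^2$ to obtain the PL inequality. (The phrase ``using $\lambda(x_*)\leq\lambda(x)$ on the left-hand side'' is unnecessary and can be dropped---the displayed inequality is exactly Lemma~\ref{lem:geoconvex1} at $y=x_*$.)
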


\begin{proof}
    That the global minimum $x_*$ is unique follows from Lemma \ref{lem:geoconvex1}: for any $x\neq x_*$ in $M$, one has
    \begin{align}
        \lambda(x)\geq \lambda(x_*) + \frac{\mu}{2}\,d_M(x,x_*)^2 >\lambda(x_*),
    \end{align}
    thus proving the first claim. To prove the second claim, fix $x\in M'$. Lemma \ref{lem:geoconvex1} then applies to yield
    \begin{align}
        \lambda(x)-\lambda(x_*)&\leq-\langle\nabla_M\lambda(x),\log_x(x_*)\rangle - \frac{\mu}{2}\|\log_x(x_*)\|^2\\&=-\frac{\mu}{2}\bigg\|\frac{1}{\mu}\nabla_M\lambda(x) + \log_x(x_*)\bigg\|^2 + \frac{1}{2\mu}\|\nabla_M\lambda(x)\|^2\\&\leq \frac{1}{2\mu}\|\nabla_M\lambda(x)\|^2,
    \end{align}
    from which the result follows.
\end{proof}

\begin{lemma}\label{lem:bounds}
    For any $c>0$, and any integers $0\leq s<t$, one has
    \begin{equation}
        \ln\bigg(\frac{c+t+1}{c+s}\bigg)\leq \sum_{k=s}^t\frac{1}{c+k}\leq\frac{1}{c+s}+\ln\bigg(\frac{c+t}{c+s}\bigg).
    \end{equation}
\end{lemma}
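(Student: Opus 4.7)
The plan is to prove both inequalities by comparing the sum against the integral of the monotonically decreasing function $x\mapsto 1/(c+x)$ on $[0,\infty)$, which is well-defined since $c>0$. The antiderivative is $\ln(c+x)$, and monotonicity on each unit interval $[k,k+1]$ gives tight two-sided bounds relating $1/(c+k)$ to $\int_k^{k+1}(c+x)^{-1}\,dx$.

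For the lower bound, on each interval $[k,k+1]$ the integrand $1/(c+x)$ is dominated by its left endpoint value $1/(c+k)$, so
\begin{equation*}
    \frac{1}{c+k}\geq \int_k^{k+1}\frac{dx}{c+x} = \ln(c+k+1)-\ln(c+k).
\end{equation*}
Summing this telescoping inequality from $k=s$ to $k=t$ yields $\sum_{k=s}^t 1/(c+k)\geq \ln(c+t+1)-\ln(c+s) = \ln\bigl((c+t+1)/(c+s)\bigr)$, which is the left inequality.

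For the upper bound, on each interval $[k-1,k]$ the integrand $1/(c+x)$ is bounded below by its right endpoint value $1/(c+k)$, so for $k\geq s+1$ one has $1/(c+k)\leq \ln(c+k)-\ln(c+k-1)$. Telescoping from $k=s+1$ to $k=t$ gives $\sum_{k=s+1}^t 1/(c+k)\leq \ln(c+t)-\ln(c+s)$. Isolating the $k=s$ term of the original sum and adding then produces $\sum_{k=s}^t 1/(c+k)\leq 1/(c+s)+\ln\bigl((c+t)/(c+s)\bigr)$, which is the right inequality.

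There is no genuine obstacle here; the entire argument is the standard integral test for monotonic sums, and the only point that requires any care is tracking the off-by-one shift between the two bounds (the $t+1$ on the left versus the $1/(c+s)$ boundary correction on the right), which arises naturally from whether one compares $1/(c+k)$ with the integral over $[k,k+1]$ or over $[k-1,k]$.
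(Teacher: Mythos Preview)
Your proof is correct and takes essentially the same approach as the paper: both use the integral comparison test for the monotonically decreasing function $x\mapsto 1/(c+x)$, bounding $1/(c+k)$ from below by $\int_k^{k+1}(c+x)^{-1}dx$ and from above by $\int_{k-1}^{k}(c+x)^{-1}dx$, then telescoping and isolating the $k=s$ term for the upper bound.
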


\begin{proof}
    Since $x\mapsto\frac{1}{(c+x)}$ is monotonically decreasing along the positive reals, one has
    \begin{equation}
        \int_{n}^{n+1}f(x)dx\leq\int_{n}^{n+1}f(n)dx= f(n) = \int_{n-1}^nf(n)dx\leq \int_{n-1}^nf(x)dx
    \end{equation}
    for any natural number $n$, from which it follows that
    \begin{align}
        \int_s^{t+1}\frac{1}{(c+x)}dx\leq \sum_{k=s}^t\frac{1}{(c+k)} &= \frac{1}{(c+s)}+\sum_{k=s+1}^t\frac{1}{(c+k)}\\&\leq \frac{1}{(c+s)}+\int_{s}^{t}\frac{1}{(c+x)}dx.
    \end{align}
    Integrating both sides now yields the result.
\end{proof}

\section{Experimental supplement}

Our experiments were conducted for a depth 5, width 1 scalar factorisation problem, corresponding to the function $f:\RB^p\rightarrow\RB$ defined by
\begin{align}
    f(\theta_1,\dots,\theta_p):=\theta_p\cdots\theta_1,\qquad (\theta_1,\dots,\theta_p)\in\RB^p,
\end{align}
with target value $y=1$. Our code\footnote{Available at \url{https://github.com/lemacdonald/eos-convergence-rates-codimension-1}} runs gradient descent on this problem, and can be executed in seconds on a single cpu. At each iterate, the projection onto the solution manifold $M:=f^{-1}\{y\}$ is computed as follows.

The KKT conditions for the constrained optimisation problem
\begin{align}
    \min_{\tpa\geq 0}\frac{1}{2}\|\tpa-\theta\|^2\quad\text{such that}\quad \prod_{i=1}^p\tpa_i = y
\end{align}
are given by
\begin{align}
    \tpa_i-\tpa_i\theta_i+\alpha\frac{y}{\tpa_i} = 0,\qquad i=1,\dots,p,
\end{align}
where $\alpha$ is the constraint parameter. These equations admit the quadratic solutions
\begin{align}
    \tpa_i(\alpha) = \frac{\theta_i\pm\sqrt{\theta_i^2-4\alpha y}}{2},\qquad i=1,\dots,p.\label{eq:kktquadratic}
\end{align}
It is easily verified $\alpha\mapsto\tpa_i(\alpha)$, is monotonically decreasing (when \eqref{eq:kktquadratic} is taken with the $+$ sign) or monotonically increasing (when \eqref{eq:kktquadratic} is taken with the $-$ sign); consequently, the map $\phi:\alpha\mapsto \prod_{i=1}^p\tpa_i(\alpha)$ is also monotonic provided the signs are consistent among the $\tpa_i(\alpha)$.

The sign conventions we use are as follows. If $\prod_{i=1}^p\theta_i <y$, then we must take the $+$ sign for all $\tpa_i(\alpha)$. If $\prod_{i=1}^p\theta_i\geq y$, then, noting that $2^{-p}\prod_{i=1}^p\theta_i$ is the minimum value possible for $\prod_{i=1}^p\tpa_i(\alpha)$ when all $\tpa_i(\alpha)$ are taken with the positive sign in \eqref{eq:kktquadratic}, either:
\begin{enumerate}
    \item $y\geq 2^{-p}\prod_{i=1}^p\theta_i$, in which case we take the positive sign in \eqref{eq:kktquadratic} for all $i$.
    \item $y< 2^{-p}\prod_{i=1}^p\theta_i$, in which case we take the negative sign in \eqref{eq:kktquadratic} for all $i$.
\end{enumerate}
Having determined the sign to use uniformly over all $i=1,\dots,p$ in \eqref{eq:kktquadratic}, we are assured that $\alpha\mapsto\phi(\alpha) = \prod_{i=1}^p\tpa_i(\alpha)$ is monotonic, hence that $\phi(\alpha) = y$ admits a unique solution which we obtain using the bisection method.

\end{document}